\DeclareMathOperator*{\argmin}{argmin}
\newcommand{\ol}[1]{\overline{#1}}
\newcommand{\dualparvect}{\bm{a}}
\newcommand{\dualpar}{a}
\newcommand{\hyperplane}{\bm{w}}
\newcommand{\hypothesis}{h}
\newcommand{\funspace}{\mathcal{H}}
\newcommand{\kernelf}{K} 
\newcommand{\kernelm}{\bm{K}} 
\newcommand{\lossfunction}{\mathcal{L}}
\newcommand{\regparam}{\lambda}
\newcommand{\nodespace}{\mathcal{V}}
\newcommand{\nodeset}{V}
\newcommand{\edgeset}{E}
\newcommand{\geninspace}{\mathcal{X}}
\newcommand{\genex}{x}
\newcommand{\weightfunc}{Q}
\newcommand{\edgeweight}{y}
\newcommand{\node}{v}
\newcommand{\edge}{e}
\newcommand{\nodecount}{p}
\newcommand{\edgecount}{q}
\newcommand{\tset}{T}
\newcommand{\mbr}{\mathbb{R}}
\newcommand{\bm}[1]{\mathbf{#1}}
\newcommand{\idmatrix}{\bm{I}}
\newcommand{\evecmatrix}{\bm{V}}
\newcommand{\evalmatrix}{\bm{\Lambda}}
\newcommand{\transpose}{^\textnormal{T}}
\newcommand{\anymatrix}{\bm{M}}
\newcommand{\othermatrix}{\bm{N}}
\newcommand{\shufflem}{\bm{P}}
\newcommand{\ve}{\textnormal{vec}}
\newcommand{\symm}{\bm{S}}
\newcommand{\asymm}{\bm{A}}
\newcommand{\anyvector}{\bm{v}}
\newcommand{\rsize}{r}
\newcommand{\labelvector}{\bm{y}}
\newcommand{\thirdmatrix}{\bm{U}}
\newcommand{\natbase}{\bm{e}}
\newcommand{\laplacian}{\bm{L}}
\newcommand{\centerm}{\bm{C}}
\newcommand{\labelmatrix}{\bm{Y}}
\newcommand{\bookkeepmat}{\bm{B}}
\newcommand{\slrmatrix}{\bm{Q}}
\newtheorem{theorem}{Theorem}[section]
\newtheorem{lemma}[theorem]{Lemma}
\newtheorem{proposition}[theorem]{Proposition}
\newtheorem{corollary}[theorem]{Corollary}
\newtheorem{definition}[theorem]{Definition}
\newcommand{\nodedegree}{l}
\newcommand{\dimone}{s}
\newcommand{\dimtwo}{t}
\newcommand{\predfun}{f}
\begin{document}


\title{Efficient Regularized Least-Squares Algorithms for Conditional Ranking on Relational Data}


%

\author{Tapio Pahikkala and Antti Airola \\
              Department of Information Technology\\
              University of Turku, FI-20014, Turku, Finland
              \\ firstname.lastname@utu.fi   \\
           Michiel Stock, Bernard De Baets, and Willem Waegeman\\
              Department of Mathematical Modelling,\\
               Statistics and Bioinformatics,\\ Ghent University, Coupure links 653, 9000 Ghent, Belgium
              \\ firstname.lastname@ugent.be
}
\date{}

\maketitle

\begin{abstract}
In domains like bioinformatics, information retrieval and social network analysis, one can find learning tasks where the goal consists of inferring a ranking of objects, conditioned on a particular target object. We present a general kernel framework for learning conditional rankings from various types of relational data, where rankings can be conditioned on unseen data objects. We propose efficient algorithms for conditional ranking by optimizing squared regression and ranking loss functions. We show theoretically, that learning with the ranking loss is likely to generalize better than with the regression loss. Further, we prove that symmetry or reciprocity properties of relations can be efficiently enforced in the learned models. Experiments on synthetic and real-world data illustrate that the proposed methods deliver state-of-the-art performance in terms of predictive power and computational efficiency. Moreover, we also show empirically that incorporating symmetry or reciprocity properties can improve the generalization performance.
\end{abstract}

\section{Introduction}

We first motivate the study by presenting some examples relevant for the considered learning setting in Section~\ref{bgsection}. Next, we briefly review and compartmentalize related work in Section~\ref{rwsection}, and present the main contributions of the paper in Section~\ref{cosection}.

\subsection{Background}\label{bgsection}


Let us start with two introductory examples to explain the problem setting of conditional ranking. First, suppose that a number of persons are playing an online computer game. For many people it is always more fun to play against someone with similar skills, so players might be interested in receiving a ranking of other players, ranging from extremely difficult to beat to unexperienced players. Unfortunately, pairwise strategies of players in many games -- not only in computer games but also in board or sports games -- tend to exhibit a rock-paper-scissors type of relationship \citep{Fisher2008}, in the sense that player A beats player B (with probability greater than 0.5), who on his term beats C (with probability greater than 0.5), while player A loses from player C (as well with probability greater than 0.5). Mathematically speaking, the relation between players is not transitive, leading to a cyclic relationship and implying that no global (consistent) ranking of skills exists. Yet, a conditional ranking can always be obtained for a specific player \citep{Pahikkala2010}.

As a second introductory example, let us consider the supervised inference of biological networks, like protein-protein interaction networks, where the goal usually consists of predicting new interactions from a set of highly-confident interactions \citep{Yamanishi2004}. Similarly, one can also define a conditional ranking task in such a context, as predicting a ranking of all proteins in the network that are likely to interact with a given target protein \citep{Weston2004}. However, this conditional ranking task differs from the previous one because (a) rankings are computed from symmetric relations instead of reciprocal ones and (b) the values of the relations are here usually not continuous but discrete.

Applications for conditional ranking tasks arise in many domains where relational information between objects is observed, such as relations between persons in preference modelling, social network analysis and game theory, links between database objects, documents, websites, or images in information retrieval \citep{Geerts2004, grangier2008imagerank, Ng2011}, interactions between genes or proteins in bioinformatics, graph matching \citep{caetano2009matching}, et cetera. When approaching conditional ranking from a graph inference point of view, the goal consists of returning a ranking of all nodes given a particular target node, in which the nodes provide information in terms of features and edges in terms of labels or relations. At least two properties of graphs play a key role in such a setting. First, the type of information stored in the edges defines the learning task: binary-valued edge labels lead to bipartite ranking tasks \citep{Freund2003}, ordinal-valued edge labels to multipartite or layered ranking tasks \citep{Furnkranz2009} and continuous labels result in rankings that are nothing more than total orders (when no ties occur). Second, the relations that are represented by the edges might have interesting properties, namely symmetry or reciprocity, for which conditional ranking can be interpreted differently.

\begin{figure*}[t]
\begin{minipage}[c]{0.30\textwidth}
\begin{center}
\includegraphics[scale=0.25]{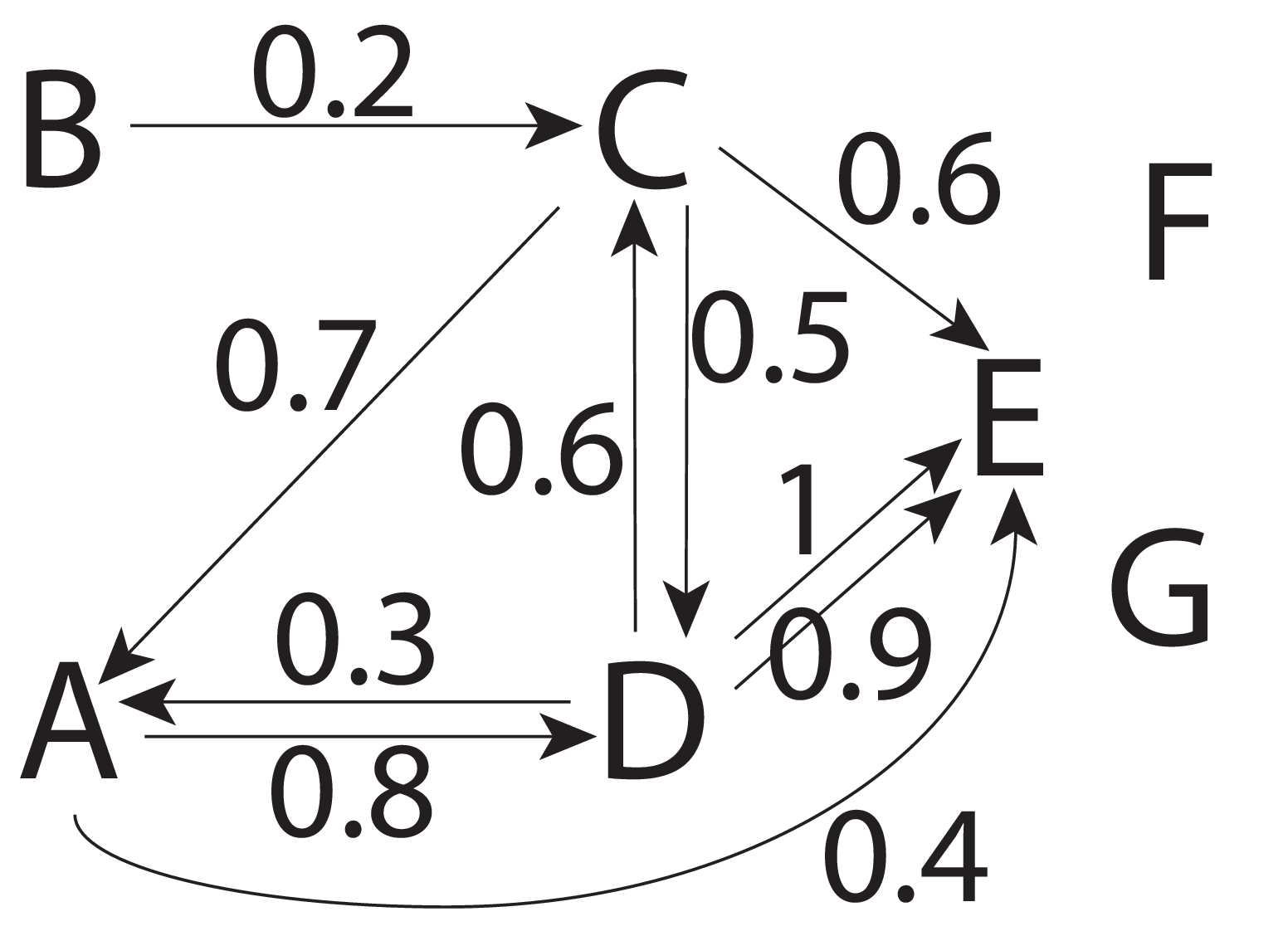}
\end{center}

\end{minipage}
\begin{minipage}[c]{0.68\textwidth}

\centering
\subfigure[C, R, T]{
\label{fig:gr:000} 
\includegraphics[scale=0.18]{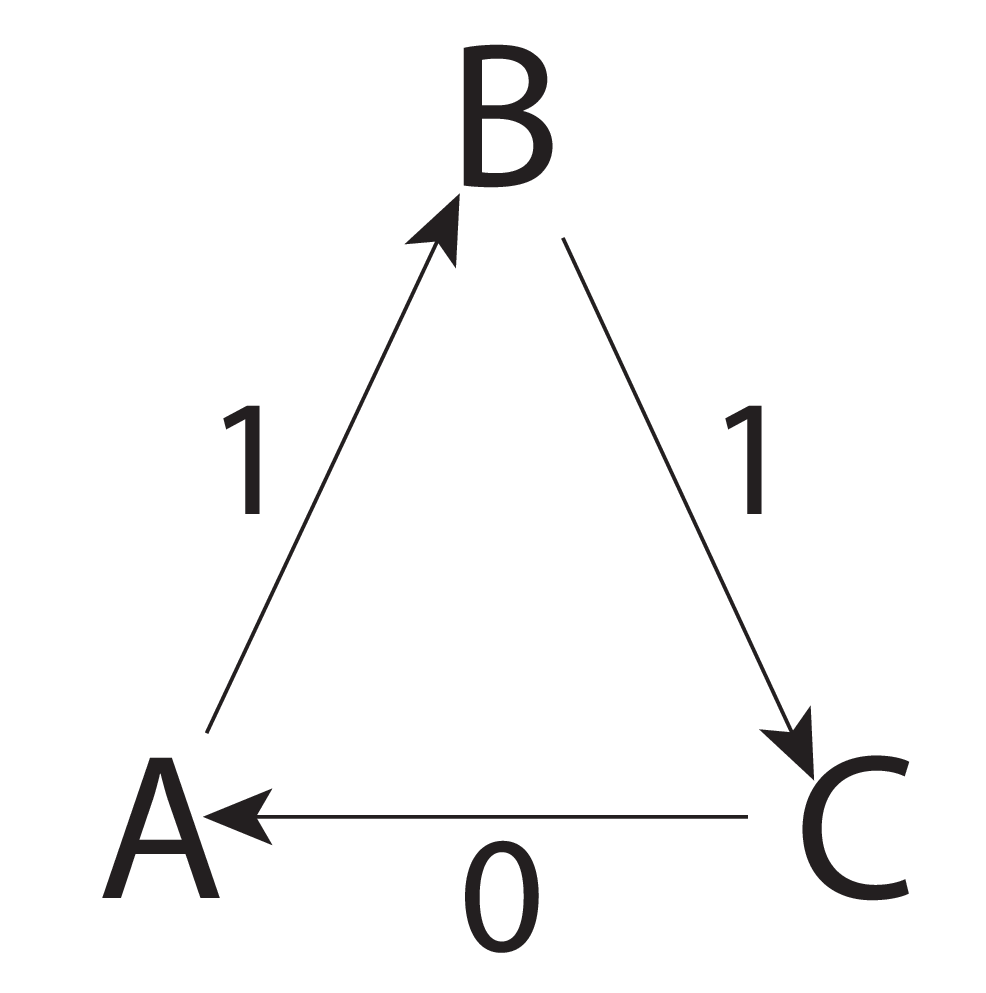}}
\subfigure[C, R, I]{
\label{fig:gr:001} 
\includegraphics[scale=0.18]{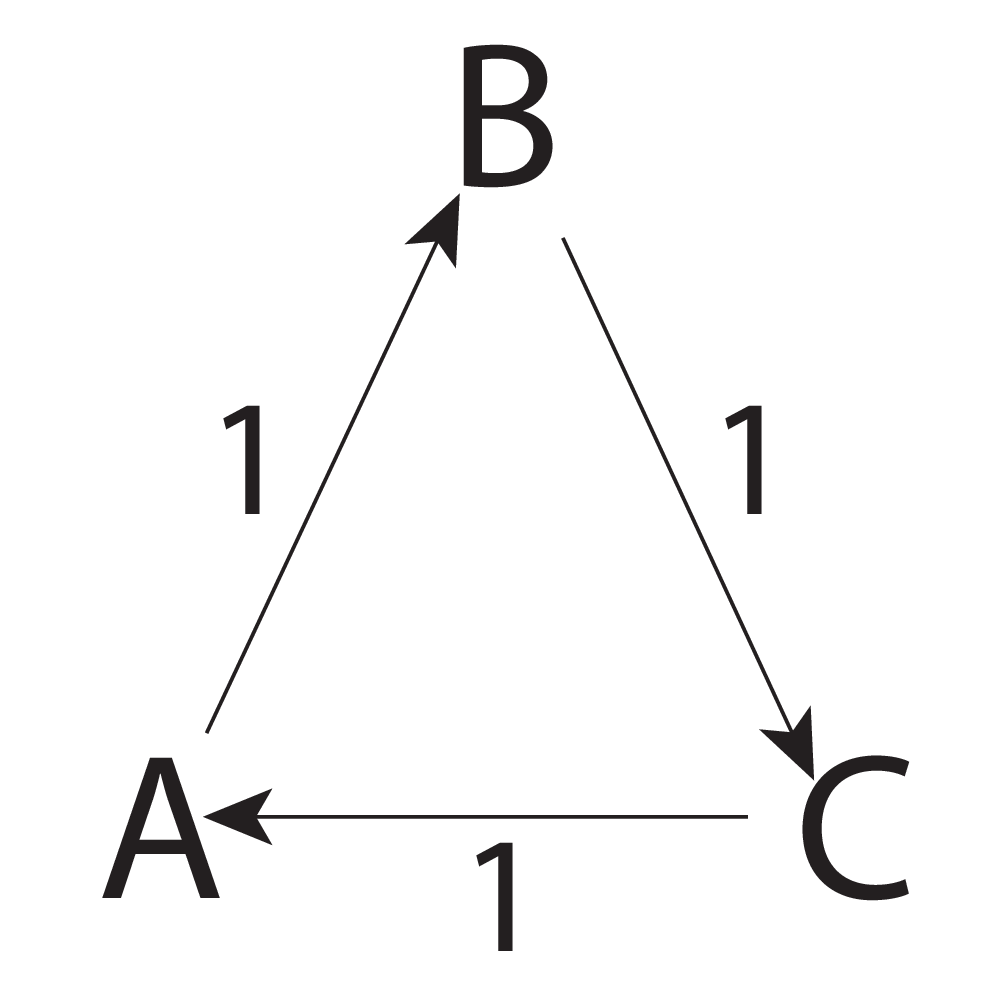}}
\subfigure[C, S, T]{
\label{fig:gr:010} 
\includegraphics[scale=0.18]{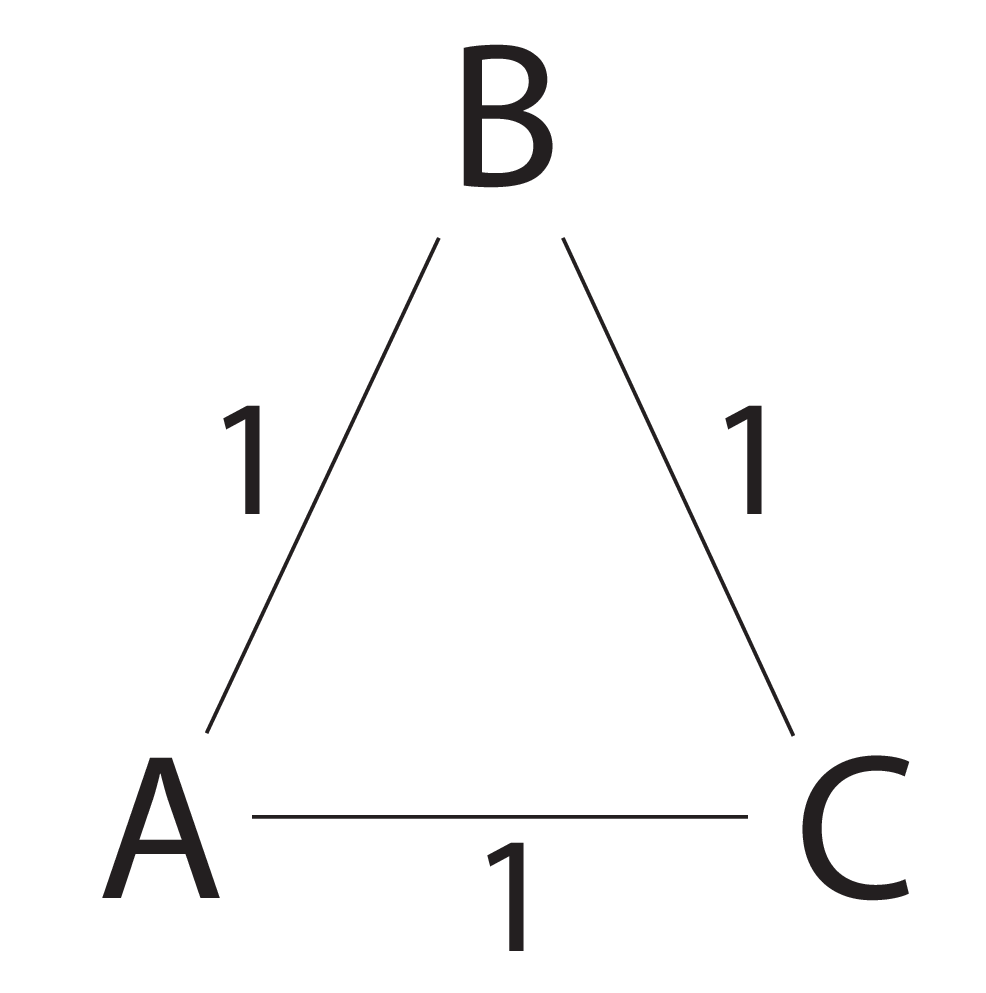}}
\subfigure[C, S, I]{
\label{fig:gr:011} 
\includegraphics[scale=0.18]{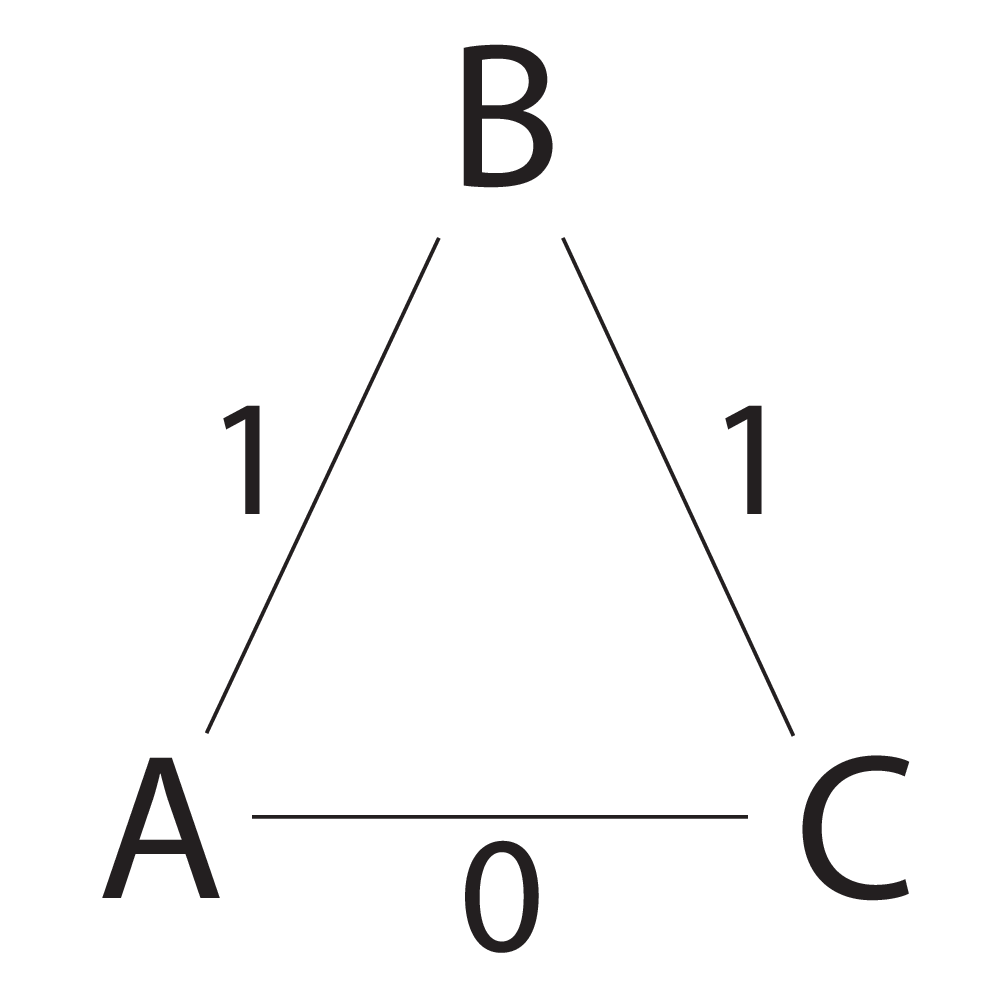}}\\
\subfigure[V, R, T]{
\label{fig:gr:100} 
\includegraphics[scale=0.18]{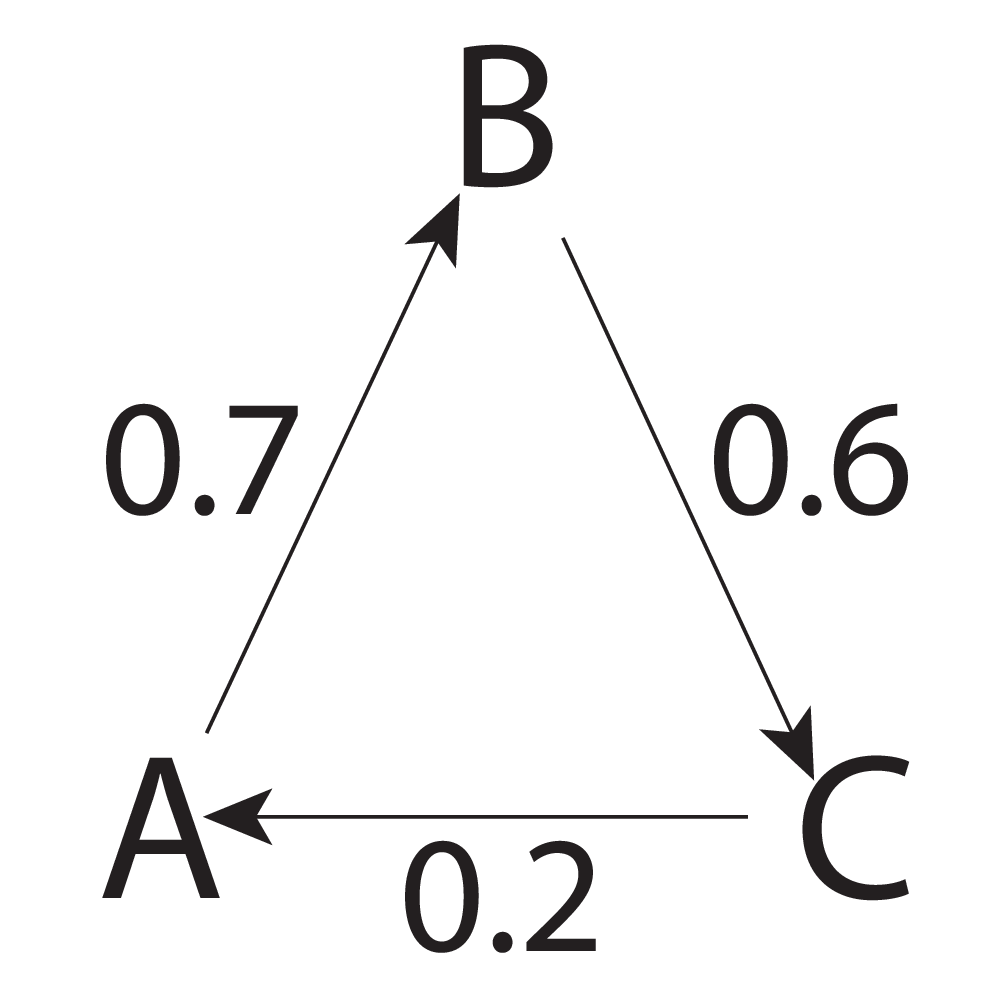}}
\subfigure[V, R, I]{
\label{fig:gr:101} 
\includegraphics[scale=0.18]{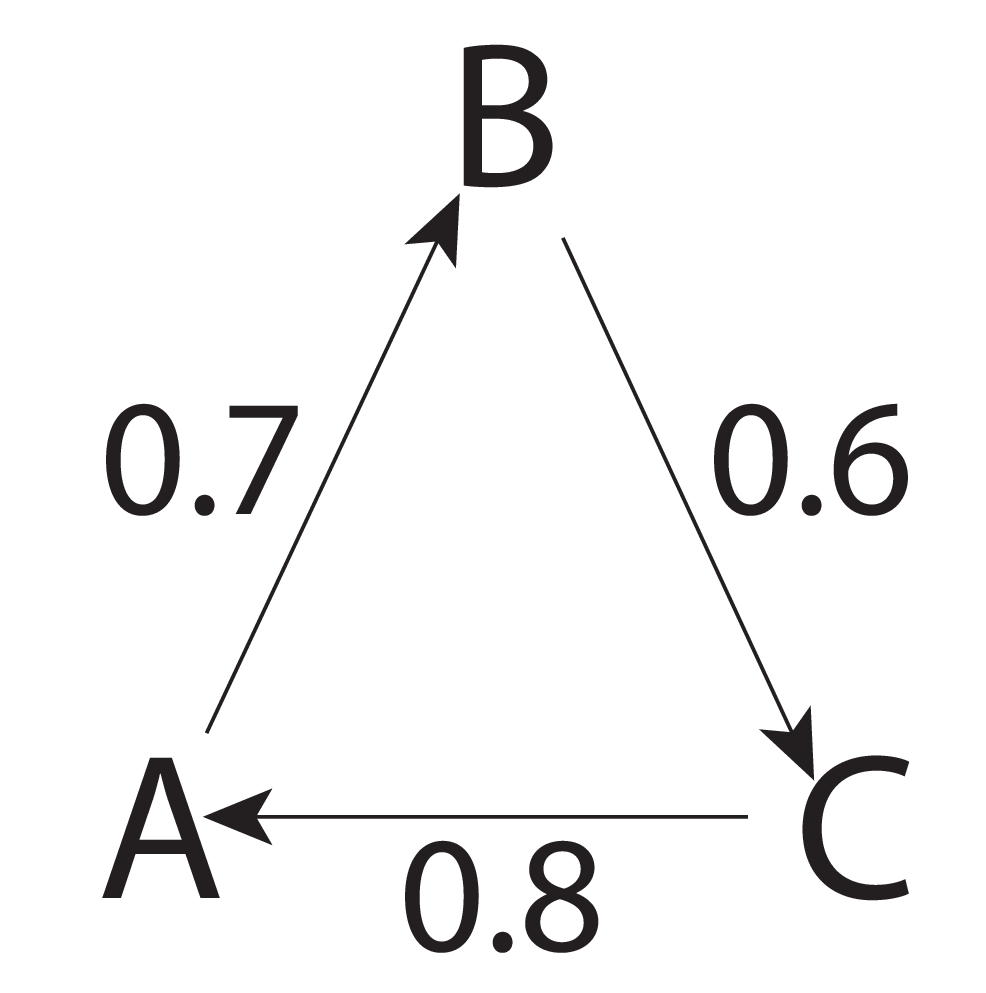}}
\subfigure[V, S, T]{
\label{fig:gr:110} 
\includegraphics[scale=0.18]{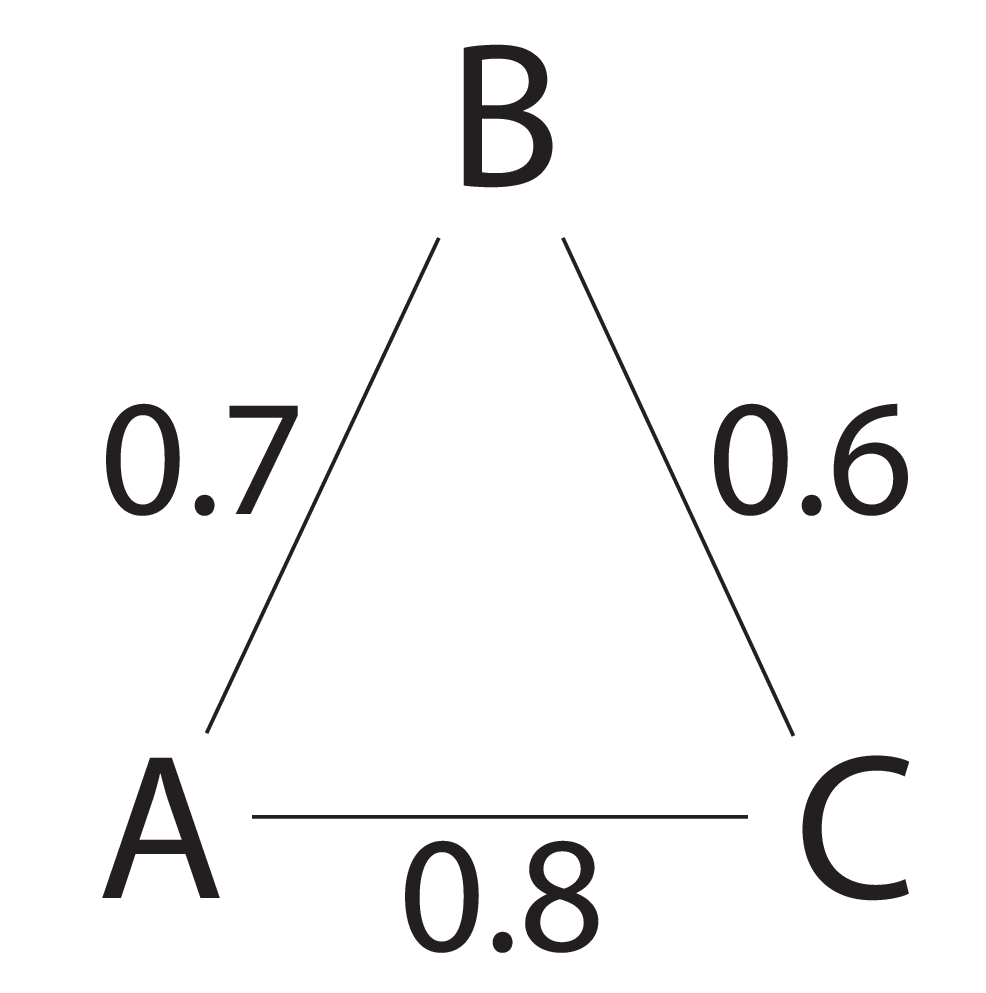}}
\subfigure[V, S, I]{
\label{fig:gr:111} 
\includegraphics[scale=0.18]{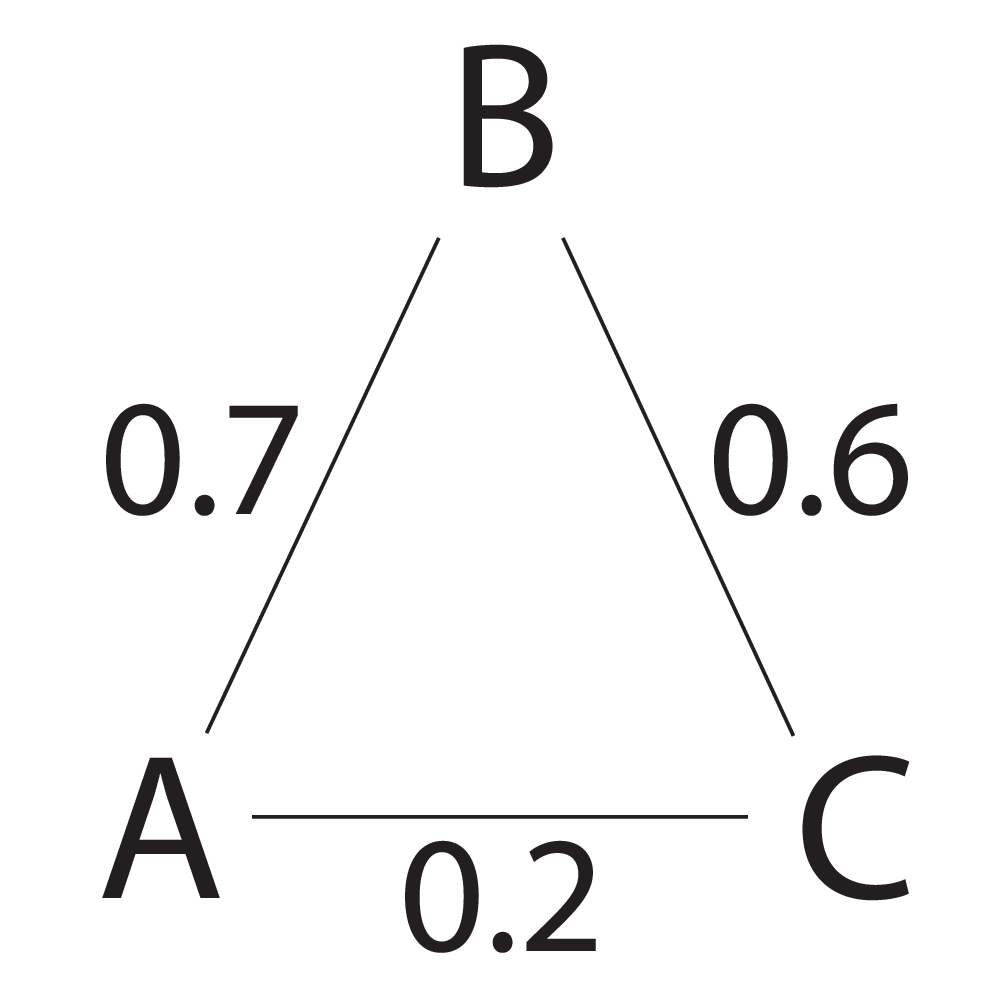}}
\end{minipage}
\caption{Left: example of a multi-graph representing the most general case where no additional properties of relations are assumed. Right: examples of eight different types of relations in a graph of cardinality three. The following relational properties are illustrated: (C) crisp, (V) valued, (R) reciprocal, (S) symmetric, (T) transitive and (I) intransitive.  For the reciprocal relations, (I) refers to a relation that does not satisfy weak stochastic transitivity, while (T) is showing an example of a relation fulfilling strong stochastic transitivity.  For the symmetric relations, (I) refers to a relation that does not satisfy $T$-transitivity w.r.t.\ the  \L ukasiewicz t-norm $T_{\bf L}(a,b) = \max(a+b-1,0)$, while (T) is showing an example of a relation that fulfills $T$-transitivity w.r.t.\ the product t-norm $T_{\bf P}(a,b) = ab$ -- see e.g. \citet{Luce1965,DeBaets2006} for formal definitions.}
\label{fig:examples}
\end{figure*}

\subsection{Learning Setting and Related Work}\label{rwsection}

We present a kernel framework for conditional ranking, which covers all above situations. Unlike existing single-task or multi-task ranking algorithms, where the conditioning is respectively ignored or only happening for training objects, our approach also allows to condition on new data objects that are not known during the training phase. Thus, in light of Figure~\ref{fig:examples} that will be explained below, the algorithm is not only able to predict conditional rankings for objects A to E, but also for objects F and G that do not participate in the training dataset. From this perspective, one can define four different learning settings in total:
\begin{itemize*}
\item Setting 1: predict a ranking of objects for a given conditioning object, where both the objects to be ranked and the conditioning object were contained in the training dataset (but not the ranking of the objects for that particular conditioning object).
\item Setting 2: given a new conditioning object unseen in the training phase, predict a ranking of the objects encountered in the training phase.
\item Setting 3: given a set of new objects unseen in the training phase, predict rankings of those objects with respect to the conditioning objects encountered in the training phase.
\item Setting 4: predict a ranking of objects for a given conditioning object, where neither the conditioning object nor the objects to be ranked were observed during the training phase.
\end{itemize*}
These four settings cover as special cases different types of conventional
machine learning problems. The framework that we propose in this article can be used for all four settings, in contrast to many existing methods. In this paper, we focus mainly on setting~4.

Setting~1 corresponds to the imputation type of link prediction setting, where missing relational values between known objects are predicted. In this setting matrix factorization methods (see e.g. \citet{Srebro2005}) are often applied. Many of the approaches are based solely on exploiting the available link structure, though approaches incorporating feature information have also been proposed \citep{Menon2010,Raymond2010scalable}. Setting~2 corresponds to the label ranking problem \citep{hullermeier2008label}, where a fixed set of labels is ranked given a new object. Setting~3 can be modeled as a multi-task ranking problem where the the number of ranking tasks is fixed in advance \citep{Agarwal2006}. Finally, setting~4 requires that the used methods are able to generalize both over new conditioning objects and objects to be ranked (see e.g. \citet{park2012flaws} for some recent discussion about this setting). Learning in setting~4 may be realized by using joint feature representations of conditioning objects and objects to be ranked.

In its most general form the conditional ranking problem can be considered as a special case of the listwise ranking problem, encountered especially in the learning to rank for information retrieval literature (see e.g. \citep{cao2006adapting,yue2007map,Xia2008listwise,Qin2008,liu2009learning,chapelle2009efficient,Qin2008,Kersting2009,Xu2010,airola2011ranksvm}). For example, in document retrieval one is supplied both with query objects and associated documents that are ranked according to how well they match the query. The aim is to learn a model that can generalize to new queries and documents, predicting rankings that capture well the relative degree to which each document matches the test query.

Previous learning approaches in this setting have typically been based on using
hand-crafted low-dimensional joint feature representations of query-document
pairs. In our graph-based terminology, this corresponds to having a given feature representation for edges, possibly encoding prior knowledge about the ranking task. A typical example of this kind of joint feature particularly designed for the domain of information retrieval is the well-known Okapi BM25 score, indicating how well a given query set of words matches a given set of words extracted from a text document. While this type of features are often very efficient and absolutely necessary in certain practical tasks, designing such application-specific features requires human experts and detailed information about every different problem to be solved, which may not be always available.

In contrast, we focus on a setting in which we are only given a feature representation of nodes from which the feature representations of the edges have to be constructed, that is, the learning must be performed without access to the prior information about the edges. This opens many possibilities for applications, since we are not restricted to the setting where explicit feature representations of the edges are provided. In our experiments, we present several examples of learning tasks for which our approach can be efficiently used. In addition, the focus of our work is the special case where both the conditioning objects and the objects to be ranked come from the same domain (see e.g. \citet{Weston2004,Yang2009} for a similar settings). This allows us to consider how to enforce relational properties such as symmetry and reciprocity, a subject not studied in previous ranking literature.
To summarize, the considered learning setting is the following:
\begin{enumerate}[(a)]
  \item We are given a training set of objects (nodes), which have a feature representation of their own, either an explicit one or an implicit representation obtained via a nonlinear kernel function.\label{firstpoint}
  \item We are also given observed relations between these objects (weighted edges), whose values are known only between the nodes encountered in the training set. This information is distinct from the similarities in point~(\ref{firstpoint}). \label{secondpoint}
  \item The aim is to learn to make predictions for pairs of objects (edges) for which the value of this relation is unknown, by taking advantage of the features or kernels given in point~(\ref{firstpoint}), and in the conditional ranking setting, the goal is to learn a ranking of the object pairs.
  \item The node kernel is used as a building block to construct a pairwise kernel able to capture similarities of the edges, which in turn, is used for learning to predict the edge weights considered in point~(\ref{secondpoint}).
\end{enumerate}

The proposed framework is based on the Kronecker product kernel for generating implicit joint feature representations of conditioning objects and the sets of objects to be ranked. This kernel has been proposed independently by a number of research groups for modelling pairwise inputs in different application domains \citep{basilico2004unifying,oyama2004using,Benhur2005}. From a different perspective, it has been considered in structured output prediction methods for defining joint feature representations of inputs and outputs \citep{Tsochantaridis2005,Weston2007}.

While the usefulness of Kronecker product kernels for pairwise learning has been
clearly established, computational efficiency of the resulting algorithms
remains a major challenge. Previously proposed methods require the explicit
computation of the kernel matrix over the data object pairs, hereby introducing
bottlenecks in terms of processing and memory usage, even for modest dataset
sizes. To overcome this problem, one typically applies sampling strategies to
avoid computing the whole kernel matrix for training. However, non-approximate methods can be implemented by taking advantage of the Kronecker structure of the kernel matrix. This idea has been traditionally used to solve certain linear regression problems (see e.g. \citet{Loan2000ubiquitous} and references therein). More recent and related applications have emerged in link prediction tasks by \citep{Kashima2009linkprob,Raymond2010scalable}, which can be considered under setting~1.

An alternative approach known as the Cartesian kernel has been proposed by \citet{kashima2009pairwise} for overcoming the computational challenges associated with the Kronecker product kernels. This kernel indeed exhibits interesting computational properties, but it can be solely employed in selected applications, because it cannot make predictions for (couples of) objects that are not observed in the training dataset, that is, in setting~4 (see \citet{Waegeman2011b} for further discussion and experimental results).

There exists a large body of literature about relational kernels, with values obtained from e.g. similarity graphs of data points, random walk and path kernels, et cetera. These can be considered to be complementary to the Kronecker product based pairwise kernels in the sense that they are used to infer similarities for data points rather than for pairs of data points. Thus, if relational information, such as paths or random walks for example, is used to form a kernel for the points in a graph, a pairwise kernel could be subsequently used to construct a kernel for the edges of the same graph.

Yet another family of related methods consists of the generalization of the
pairwise Kronecker kernels framework to tasks, in which the condition and target
objects come from different domains. Typical examples of this type of Kronecker
kernel applications are found in bioinformatics, such as the task of predicting
interactions between drugs and targets \citep{vanlaarhoven2011gaussian}. To our knowledge, none of these studies still concern the fourth setting. While the algorithms considered in this paper can be straightforwardly generalized to the two domain case, we only focus on the single domain case for simplicity, because the concepts of symmetric and reciprocal relations would not be meaningful with two domains.


\subsection{Contributions}\label{cosection}

The main contributions of this paper can be summarized as follows:
\begin{enumerate}[1]
  \item We propose kernel-based conditional ranking algorithms for
  setting~4, that is, for cases where predictions are performed for (couples of) objects that are not
  observed in the training dataset. We consider both regression and
  ranking based losses, extending regularized least-squares (RLS)
  \citep{saunders1998krr,evgeniou2000rnandsvm} and the RankRLS
  \citep{pahikkala2009preferences} algorithms to conditional ranking.
  We propose both update rules for iterative optimization algorithms, as well as
  closed-form solutions, exploiting the structure of the Kronecker product in
  order to make learning efficient. The proposed methods scale to
  graphs consisting of millions of labeled edges.
  \item We show how prior knowledge about the underlying relation can be
  efficiently incorporated in the learning process, considering the specific
  cases of symmetric and reciprocal relations. These properties are enforced on
  the learned models via corresponding modifications of the Kronecker product
  kernel, namely the symmetric kernel studied by \citet{Benhur2005} and the
  reciprocal kernel introduced by us \citep{Pahikkala2010}. We prove that, for
  RLS, symmetry and reciprocity can be implicitly encoded by including each
  edge in the training set two times, once for each direction. To our
  knowledge, the only related result so far has been established for the support vector machine classifiers by \citet{brunner2012pairwise}. We also prove that this implicitness, in turn, ensures the computational efficiency of the training phase with symmetric and reciprocal Kronecker kernels. These results are, to our knowledge, completely novel in the field of both machine learning and matrix algebra.
  \item We present new generalization bounds, showing the benefits of
  applying RankRLS instead of basic RLS regression in conditional ranking
  tasks. The analysis presented in this paper shows that the larger expressive power of the space of regression functions compared to the corresponding space of conditional ranking functions indicates the learning to be likely to generalize better if the space of functions available for the training algorithm is restricted to conditional ranking functions rather than all regression functions.
  \item Finally, we evaluate the proposed algorithms with an array of different
  practical problems. The results demonstrate the ability of the algorithms to
  solve learning problems in setting 4. Moreover, in our scalability
  experiments, we show that the algorithms proposed by us scale considerably
  better to large data sets than the state-of-the-art RankSVM solvers, even in
  cases where the SVMs use fast primal training methods for linear kernels.
\end{enumerate}


\subsection{Organization}

The article is organized as follows. We start in Section~\ref{frameworksection} with a formal description of conditional ranking from a graph-theoretic perspective. The Kronecker product kernel is reviewed in Section~\ref{relationsection} as a general edge kernel that allows for modelling the most general type of relations. In addition, we briefly recall two important subclasses of relations, namely symmetric and reciprocal relations, for which more specific, knowledge-based kernels can be derived. The proposed learning algorithms are presented in Section~\ref{algorithmsection}, and the connections and differences with related learning algorithms are discussed in Section~\ref{adaptingsection}, with a particular emphasis on the computational complexity of the algorithms. In Section~\ref{experimentsection} we present promising experimental results on synthetic and real-world data, illustrating the advantages of our approach in terms of predictive power and computational scalability.

\section{General Framework}\label{frameworksection}

Let us start with introducing some notations. We consider ranking of data structured as a graph $G = (\nodeset,\edgeset,\weightfunc)$, where $\nodeset\subseteq\nodespace$ corresponds to the set of nodes, where nodes are sampled from a space $\nodespace$, and $\edgeset \subseteq 2^{\nodeset^2}$ represents the set of edges~$\edge$, for which labels are provided in terms of relations. Moreover, these relations are represented by weights $\edgeweight_{\edge}$ on the edges and they are generated from an unknown underlying relation $\weightfunc: \nodespace^2 \rightarrow [0,1]$. We remark that the interval $[0,1]$ is used here only due to certain properties that are historically defined for such relations. However, relations taking values in arbitrary closed real intervals can be straightforwardly transformed to this interval with an appropriate increasing bijection and vice versa.

Following the standard notations for kernel methods, we formulate our learning problem as the selection of a suitable function $\hypothesis\in\funspace$, with $\funspace$ a certain hypothesis space, in particular a reproducing kernel Hilbert space (RKHS). Given an input space $\geninspace$ and a kernel $\kernelf:\geninspace\times\geninspace\rightarrow\mathbb{R}$, the RKHS associated with $\kernelf$ can be considered as the completion of
\begin{equation*}
\left\{ f \in \mathbb{R}^\geninspace  \left\arrowvert
f(\genex)=\sum_{i=1}^m\beta_i\kernelf(\genex,\genex_i)\right.\right\}\,,
\end{equation*}
in the norm
\[
\Arrowvert f\Arrowvert_\kernelf
=\sqrt{\sum_{i,j}\beta_i\beta_j\kernelf(\genex_i,\genex_j)}\,,
\]
where $\beta_i\in\mathbb{R},m\in\mathbb{N},\genex_i\in\geninspace$.

Hypotheses  $h: \nodespace^2 \rightarrow \mbr$ are usually denoted as $h(\edge) = \langle \bm{w}, \Phi(\edge) \rangle$ with $\bm{w}$ a vector of parameters that need to be estimated based on training data. Let us denote a training dataset of cardinality $\edgecount = |\edgeset|$ as a set
$T = \{(\edge,\edgeweight_{\edge}) \mid \edge \in \edgeset \}$
of input-label pairs, then we formally consider the following variational problem in which we select an appropriate
hypothesis $\hypothesis$ from $\funspace$ for training
data $T$. Namely, we consider an algorithm
\begin{eqnarray}\label{regalgorithm}
\mathcal{A}(T)=\argmin_{\hypothesis\in\funspace} \lossfunction(\hypothesis,T)
+\regparam\Arrowvert \hypothesis\Arrowvert_{\funspace}^2 \,
\end{eqnarray}
with $\lossfunction$ a given loss function and $\regparam>0$ a regularization parameter.

According to the representer theorem \citep{Kimeldorf1971results},
any minimizer $\hypothesis\in\funspace$ of
(\ref{regalgorithm}) admits a dual representation of the following form:
\begin{eqnarray}
\label{eq:primalmodel}
\hypothesis(\ol{\edge})
=\langle \hyperplane,\Phi(\ol{\edge})\rangle = \sum_{\edge \in \edgeset}
\dualpar_{\edge}\kernelf^{\Phi}(\edge,\ol{\edge}) \,,
\end{eqnarray}
with $\dualpar_{\edge} \in\mathbb{R}$ dual parameters, $\kernelf^{\Phi}$ the kernel function associated with the RKHS and $\Phi$ the feature mapping corresponding to $\kernelf^{\Phi}$.

Given two relations $\weightfunc(\node,\node')$ and $\weightfunc(\node,\node'')$ defined on any triplet of nodes in $\nodespace$, we compose the ranking of $\node'$ and $\node''$ conditioned on $\node$ as
\begin{eqnarray}
\label{eq:condorder}
\node' \succeq_{\node} \node'' \Leftrightarrow \weightfunc(\node,\node') \geq \weightfunc(\node,\node'') \,.
\end{eqnarray}
Let the number of correctly ranked pairs for all nodes in the dataset serve as evaluation criterion for verifying (\ref{eq:condorder}), then one aims to minimize the following empirical loss when computing the loss over all conditional rankings simultaneously:
\begin{eqnarray}
\label{eq:combloss}
\lossfunction(\hypothesis,T) =\sum_{\node\in\nodeset}\sum_{\edge,\ol{\edge}\in\edgeset_\node: \edgeweight_\edge < \edgeweight_{\ol{\edge}}} I(\hypothesis(\edge) - \hypothesis(\ol{\edge})) \,,
\end{eqnarray}
with $I$ the Heaviside function returning one when its argument is strictly positive, returning $1/2$ when its argument is exactly zero and returning zero otherwise. Importantly, $\edgeset_\node$ denotes the set of all edges starting from,
or the set of all edges ending at the node $\node$, depending on the specific task. For example, concerning the relation ``trust" in a social network, the former loss would correspond to ranking the persons in the network who \emph{are trusted by} a specific person, while the latter loss corresponds to ranking the persons who \emph{trust} that person. So, taking Figure~\ref{fig:examples} into account, we would in such an application respectively use the rankings $A \succ_C E \succ_C D$ (outgoing edges) and $D \succ_C B$ (incoming edges) as training info for node $C$.

Since (\ref{eq:combloss}) is neither convex nor differentiable, we look for an approximation that has these properties as this considerably simplifies the development of efficient algorithms for solving the learning problem.
Let us to this end start by considering the following squared loss function over the $\edgecount$ observed edges in the training set:
\begin{equation}\label{regrloss}
\lossfunction(\hypothesis,T) = \sum_{\edge\in\edgeset}(\edgeweight_\edge-\hypothesis(\edge))^2 \,.
\end{equation}
Such a setting would correspond to directly learning the labels on the edges in a regression or classification setting. For the latter case, optimizing (\ref{regrloss}) instead of the more conventional hinge loss has the advantage that the solution can be found by simply solving a system of linear equations \citep{saunders1998krr,Suykens2002,Shawetaylor2004,pahikkala2009preferences}. However, the simple squared loss might not be optimal in conditional ranking tasks. Consider for example a node $\node$ and we aim to learn to predict which of the two other nodes, $\node'$ or $\node''$, would be closer to it. Let us denote $\edge=(\node,\node')$ and $\ol{\edge}=(\node,\node'')$, and let $\edgeweight_{\edge}$ and $\edgeweight_{\ol{\edge}}$ denote the relation between $\node$ and $\node'$ and between $\node$ and $\node''$, respectively. Then it would be beneficial for the regression function to have a minimal squared difference $(\edgeweight_{\edge}-\edgeweight_{\ol{\edge}}-\hypothesis(\edge)+\hypothesis(\ol{\edge}))^2$, leading to the following loss function:
\begin{equation}\label{singlecentloss}
\lossfunction(\hypothesis,T) =\sum_{\node\in\nodeset}\sum_{\edge,\ol{\edge}\in\edgeset_\node}(\edgeweight_{\edge}-\edgeweight_{\ol{\edge}}-\hypothesis(\edge)+\hypothesis(\ol{\edge}))^2\,,
\end{equation}
which can be interpreted as a differentiable and convex approximation of (\ref{eq:combloss}).

\section{Relational Domain Knowledge}\label{relationsection}

Above, a framework was defined where kernel functions are constructed over the edges, leading to kernels of the form $\kernelf^{\Phi}(\edge,\ol{\edge})$. In this section we show how these kernels can be constructed using domain knowledge about the underlying relations. The same discussion was put forward for inferring relations. Details and formal proofs can be found in our previous work on this topic \citep{Pahikkala2010,Waegeman2011b}.

\subsection{Arbitrary Relations}

When no further restrictions on the underlying relation can be specified, the following Kronecker product feature mapping is used to express pairwise interactions between features of nodes:
\begin{eqnarray*}
\Phi(\edge) = \Phi(\node,\node') = \phi(\node) \otimes \phi(\node')\,,
\end{eqnarray*}
where $\phi$ represents the feature mapping for individual nodes and $\otimes$ denotes the Kronecker product. As shown by \citet{Benhur2005}, such a pairwise feature mapping yields the Kronecker product pairwise kernel in the dual model:
\begin{eqnarray}\label{eq:tppk}
\kernelf_{\otimes}^{\Phi}(\edge,\ol{\edge}) = \kernelf_{\otimes}^{\Phi}(\node,\node',\ol{\node},\ol{\node}')= \kernelf^{\phi}(\node,\ol{\node}) \kernelf^{\phi}(\node',\ol{\node}') \,,
\end{eqnarray}
with $\kernelf^{\phi}$ the kernel corresponding to $\phi$.

It can be formally proven that with an appropriate choice of the
node kernel $K^{\phi}$, such as the Gaussian RBF kernel, the RKHS of the
corresponding Kronecker product edge kernel $\kernelf^{\Phi}$ allows
approximating arbitrarily closely any relation that corresponds to a
continuous function from $\nodespace^2$ to $\mathbb{R}$.
Before summarizing this important result, we recollect
the definition of universal kernels.

\begin{definition} \citep{Steinwart2002consistency}
A continuous kernel $\kernelf$ on a compact metric space $\geninspace$ (i.e. $\geninspace$ is closed and bounded) is called universal if the RKHS induced by $\kernelf$ is dense in $C(\geninspace)$, where $C(\geninspace)$ is the space of all continuous functions $f : \geninspace \rightarrow \mathbb{R}$.
\end{definition}
Accordingly, the hypothesis space induced by the kernel $\kernelf$ can approximate any function in $C(\geninspace)$ arbitrarily well, and hence it has the universal approximating property.

\begin{theorem}\label{unikrontheorem} \citep{Waegeman2011b}
Let us assume that the space of nodes $\nodespace$ is a compact metric space.
If a continuous kernel $\kernelf^\phi$ is universal on $\nodespace$, then $\kernelf_{\otimes}^\Phi$ defines a universal kernel on $\nodespace^2$.
\end{theorem}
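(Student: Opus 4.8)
The plan is to invoke the Stone--Weierstrass theorem together with the density of the node RKHS in $C(\nodespace)$ furnished by universality of $\kernelf^\phi$. First I would record the easy structural facts. Since $\nodespace$ is a compact metric space, the product $\nodespace^2 = \nodespace \times \nodespace$ is again a compact metric space (e.g.\ under $d((\node_1,\node_1'),(\node_2,\node_2')) = d(\node_1,\node_2)+d(\node_1',\node_2')$), so universality is meaningful on $\nodespace^2$. The kernel $\kernelf_{\otimes}^\Phi$ of (\ref{eq:tppk}) is continuous, being a product of the continuous kernel $\kernelf^\phi$ in its node arguments, and it is positive definite as already established. Let $\funspace^\phi$ denote the RKHS of $\kernelf^\phi$; the RKHS of $\kernelf_{\otimes}^\Phi$ contains the algebraic tensor product $\funspace^\phi \otimes \funspace^\phi$, i.e.\ all finite sums $\sum_i f_i(\node)g_i(\node')$ with $f_i,g_i \in \funspace^\phi$. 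Hence it suffices to prove that this tensor product is dense in $C(\nodespace^2)$ in the supremum norm.

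The central idea is to route the density argument through $C(\nodespace)\otimes C(\nodespace)$. I would first show that the set $\mathcal{A}$ of finite sums $\sum_i f_i(\node)g_i(\node')$ with $f_i,g_i \in C(\nodespace)$ is dense in $C(\nodespace^2)$. This set is closed under addition and under multiplication (the product of $f_1\otimes g_1$ and $f_2\otimes g_2$ equals $(f_1 f_2)\otimes(g_1 g_2)$, and $C(\nodespace)$ is a genuine function algebra), it contains the constants since $1 = 1\otimes 1$, and it separates the points of $\nodespace^2$ because $C(\nodespace)$ separates the points of the compact metric space $\nodespace$. The real Stone--Weierstrass theorem then yields density of $\mathcal{A}$ in $C(\nodespace^2)$.

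Next I would descend from $C(\nodespace)$ to the RKHS. Given $\varepsilon>0$ and a target $g \in C(\nodespace^2)$, Stone--Weierstrass supplies $\sum_{i=1}^n f_i\otimes g_i$ within $\varepsilon$ of $g$ in sup-norm, with $f_i,g_i\in C(\nodespace)$. Universality of $\kernelf^\phi$ lets me approximate each factor by $\tilde f_i,\tilde g_i\in\funspace^\phi$ with $\|f_i-\tilde f_i\|_\infty$ and $\|g_i-\tilde g_i\|_\infty$ as small as I like. The product estimate
\[
\|f_i\otimes g_i-\tilde f_i\otimes\tilde g_i\|_\infty \le \|f_i-\tilde f_i\|_\infty\,\|g_i\|_\infty + \|\tilde f_i\|_\infty\,\|g_i-\tilde g_i\|_\infty \,,
\]
combined with the uniform boundedness of the finitely many continuous factors on the compact space, shows that $\sum_i \tilde f_i\otimes\tilde g_i \in \funspace^\phi\otimes\funspace^\phi$ is within $2\varepsilon$ of $g$. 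Thus $\funspace^\phi\otimes\funspace^\phi$, and a fortiori the RKHS of $\kernelf_{\otimes}^\Phi$, is dense in $C(\nodespace^2)$, which is exactly universality of $\kernelf_{\otimes}^\Phi$.

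I expect the one genuine obstacle to be the realization that Stone--Weierstrass must be applied to $C(\nodespace)\otimes C(\nodespace)$ rather than directly to $\funspace^\phi\otimes\funspace^\phi$: an RKHS is in general not closed under pointwise multiplication, so $\funspace^\phi\otimes\funspace^\phi$ is not a subalgebra of $C(\nodespace^2)$ and the naive direct application fails. Everything else is a routine triangle-inequality estimate on a compact domain, together with the standard fact that the tensor product of the two factor RKHSs sits inside the RKHS of the product kernel $\kernelf_{\otimes}^\Phi$, which I would invoke without proof.
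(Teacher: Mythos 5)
Your proposal is correct and follows essentially the same route as the paper, which does not spell out the argument but explicitly states that the proof (deferred to \citet{Waegeman2011b}) is based on the Stone--Weierstra{\ss} theorem: one applies Stone--Weierstra{\ss} to the algebra of finite sums $\sum_i f_i(\node)g_i(\node')$ with $f_i,g_i\in C(\nodespace)$, then transfers density to the RKHS of $\kernelf_{\otimes}^{\Phi}$ via universality of $\kernelf^{\phi}$ in each factor. Your observation that Stone--Weierstra{\ss} cannot be applied directly to the tensor product of the RKHSs (which need not be an algebra) is precisely the point that makes the two-stage argument necessary, and your triangle-inequality descent is the standard and correct way to finish.
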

The proof is based on the so-called Stone-Weierstra{\ss} theorem (see e.g.
\citet{rudin1991functional}).
The above result is interesting because it shows, given that an appropriate loss is optimized and a universal kernel applied on the node level $K^{\phi}$, that the Kronecker product pairwise kernel has the ability to assure universal consistency, guaranteeing that the expected prediction error converges to its the lowest possible value when the amount of training data approaches infinity. We refer to \citep{Steinwart2008svmbook} for a more detailed discussion on the relationship between universal kernels and consistency. As a consequence, the Kronecker product
kernel can always be considered a valid choice for learning relations if no
specific a priori information other than a kernel for the nodes is provided about the relation that underlies the data.
However, we would like to emphasize that Theorem~\ref{unikrontheorem} does not
guarantee anything about the speed of the convergence or how large training sets are required for approximating the function closely enough. As a rule of thumb, whenever we have an access to useful prior information about the relation to be learned, it is beneficial to restrict the expressive power of the hypothesis space accordingly. The following two sections  illustrate this more in detail for two particular types of relational domain knowledge: symmetry and reciprocity.

\subsection{Symmetric Relations}
Symmetric relations form an important subclass of relations in our framework. As
a specific type of symmetric relations, similarity relations constitute the underlying relation in many application domains where relations between objects need to be learned. Symmetric relations are formally defined as follows.
\begin{definition}
A binary relation $\weightfunc: \nodespace^2 \rightarrow [0,1]$ is called a symmetric relation if for all $(\node,\node') \in \nodespace^2$ it holds that $\weightfunc(\node,\node') = \weightfunc(\node',\node)$.
\end{definition}
More generally, symmetry can be defined for real-valued relations
analogously as follows.
\begin{definition}
A binary relation $\hypothesis: \nodespace^2 \rightarrow \mbr$ is called a symmetric relation if for all $(\node,\node') \in \nodespace^2$ it holds that $\hypothesis(\node,\node') = \hypothesis(\node',\node)$.
\end{definition}
For symmetric relations, edges in multi-graphs like Figure~\ref{fig:examples} become undirected. Applications arise in many domains and metric learning or learning similarity measures can be seen as special cases. If the relation is $2$-valued as $Q: \nodespace^2 \rightarrow \{0,1\}$, then we end up with a classification setting instead of a regression setting.

Symmetry can be easily incorporated in our framework via the following modification of the Kronecker kernel:
\begin{eqnarray}\label{symmkernel}
\kernelf_{\otimes S}^{\Phi}(\edge,\ol{\edge})=\frac{1}{2} \big(\kernelf^{\phi}(\node,\ol{\node}) \kernelf^{\phi}(\node',\ol{\node}') + \kernelf^{\phi}(\node,\ol{\node}') \kernelf^{\phi}(\node',\ol{\node})\big) \,.
\end{eqnarray}
The symmetric Kronecker kernel has been previously used for predicting protein-protein interactions in bioinformatics \citep{Benhur2005}. The following theorem shows that the RKHS of the symmetric Kronecker kernel can approximate arbitrarily well any type of continuous symmetric relation.
\begin{theorem}\citep{Waegeman2011b} Let
\begin{equation*}
S(\nodespace^2)=\{t \mid t\in C(\nodespace^2),t(\node,\node')=t(\node',\node)\}
\end{equation*}
be the space of all continuous symmetric relations from $\nodespace^2$ to $\mathbb{R}$.
If $\kernelf^\phi$ on $\nodespace$ is universal, then the RKHS induced by the kernel $\kernelf_{\otimes S}^{\Phi}$ defined in (\ref{symmkernel}) is dense in $S(\nodespace^2)$.
\end{theorem}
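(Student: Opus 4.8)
The plan is to reduce the statement to the universality of the ordinary Kronecker kernel already established in Theorem~\ref{unikrontheorem}, via a symmetrization argument, rather than invoking the Stone--Weierstra{\ss} theorem from scratch.

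First I would record the two elementary facts that frame the problem. Since $\kernelf^\phi$ is continuous and the kernel $\kernelf_{\otimes S}^{\Phi}$ in (\ref{symmkernel}) is a sum of products of continuous kernels, every function in its RKHS is continuous; moreover, because $\kernelf_{\otimes S}^{\Phi}((\node,\node'),(\ol{\node},\ol{\node}'))$ is invariant under swapping $\node$ and $\node'$, every finite expansion $\sum_i \dualpar_i \kernelf_{\otimes S}^{\Phi}(\cdot,(\node_i,\node_i'))$ is a symmetric function, and so is any sup-norm limit of such expansions. Hence the RKHS is contained in $S(\nodespace^2)$, which confirms that $S(\nodespace^2)$ is exactly the right target space in which to ask for density.

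For the density itself, fix $t \in S(\nodespace^2)$ and $\epsilon > 0$. By Theorem~\ref{unikrontheorem} the kernel $\kernelf_\otimes^\Phi$ is universal on $\nodespace^2$, so I can choose a finite expansion $g(\cdot)=\sum_i \dualpar_i \kernelf_\otimes^\Phi(\cdot,(\node_i,\node_i'))$ in its RKHS with $\|t-g\|_\infty < \epsilon$. I then symmetrize, setting $g_S(\node,\node')=\tfrac12\big(g(\node,\node')+g(\node',\node)\big)$. Because $t$ is symmetric, $t(\node,\node')=\tfrac12\big(t(\node,\node')+t(\node',\node)\big)$, so $t(\node,\node')-g_S(\node,\node')=\tfrac12\big(t(\node,\node')-g(\node,\node')\big)+\tfrac12\big(t(\node',\node)-g(\node',\node)\big)$ and the triangle inequality yields $\|t-g_S\|_\infty\le\|t-g\|_\infty<\epsilon$. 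Thus $g_S$ still approximates $t$ uniformly.

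The crux is to verify that $g_S$ genuinely belongs to the RKHS of $\kernelf_{\otimes S}^{\Phi}$, and this is where the explicit product structure of the kernels does the work: writing out $\kernelf_\otimes^\Phi((\node_i,\node_i'),(\node,\node'))=\kernelf^{\phi}(\node_i,\node)\kernelf^{\phi}(\node_i',\node')$ and averaging it with its swap $\kernelf^{\phi}(\node_i,\node')\kernelf^{\phi}(\node_i',\node)$ reproduces precisely $\kernelf_{\otimes S}^{\Phi}((\node_i,\node_i'),(\node,\node'))$ from (\ref{symmkernel}). Consequently $g_S=\sum_i \dualpar_i \kernelf_{\otimes S}^{\Phi}(\cdot,(\node_i,\node_i'))$ is a bona fide finite kernel expansion, hence an element of the RKHS of $\kernelf_{\otimes S}^{\Phi}$ and not merely of its closure. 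Since $t$ and $\epsilon$ were arbitrary, density in $S(\nodespace^2)$ follows. The main obstacle is exactly this last step: symmetrizing an approximant costs nothing in approximation quality, but one must check that the symmetrization operator maps the Kronecker RKHS into the symmetric Kronecker RKHS, which hinges on the fact that $\kernelf_{\otimes S}^{\Phi}$ is, by construction, the symmetrization of $\kernelf_\otimes^\Phi$.
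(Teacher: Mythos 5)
Your proof is correct, but note that it cannot be compared line-by-line with the paper: the paper never proves this theorem itself. It is imported from \citet{Waegeman2011b}, and the only hint given about the underlying argument is that the base result, Theorem~\ref{unikrontheorem}, rests on the Stone--Weierstra{\ss} theorem. Your argument is therefore a genuine alternative route: you take Theorem~\ref{unikrontheorem} as a black box and obtain the symmetric case by symmetrization --- approximate $t$ by a finite expansion $g$ in the RKHS of $\kernelf_{\otimes}^{\Phi}$, average $g$ with its swap, observe that this costs nothing in sup norm because $t$ is itself symmetric, and check that term by term the symmetrization of $\kernelf_{\otimes}^{\Phi}(\cdot,(\node_i,\node_i'))$ is exactly $\kernelf_{\otimes S}^{\Phi}(\cdot,(\node_i,\node_i'))$ as defined in (\ref{symmkernel}). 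That algebraic identity is the crux and you verify it correctly. Two facts you use silently deserve a sentence each: (i) universality gives density of the \emph{RKHS} in $C(\nodespace^2)$, whereas you pick a \emph{finite} expansion; the upgrade is standard, since on a compact domain a continuous kernel is bounded, so $\Vert h\Vert_\infty\leq\sqrt{\kappa}\,\Vert h\Vert_{\kernelf}$ with $\kappa=\sup_\edge \kernelf_{\otimes}^{\Phi}(\edge,\edge)$, and finite expansions are RKHS-norm, hence sup-norm, dense in the RKHS; (ii) compactness of $\nodespace$ is inherited from the hypotheses of Theorem~\ref{unikrontheorem}. Working with finite expansions is also what keeps the crux step purely algebraic: had you symmetrized a general RKHS element, you would have needed an additional lemma showing the symmetrization operator maps the Kronecker RKHS boundedly into the symmetric Kronecker RKHS. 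As for what each approach buys: the Stone--Weierstra{\ss} route of the cited work is self-contained and establishes universality of product kernels from first principles, while your reduction is more elementary given Theorem~\ref{unikrontheorem} and transfers verbatim to Theorem~\ref{antisymmetrictheorem} --- replace the average of $g$ and its swap by half their difference, and the same computation produces finite expansions of the reciprocal kernel $\kernelf_{\otimes R}^{\Phi}$ of (\ref{eq:recedgekernel}).
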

In other words the above theorem states that using the symmetric Kronecker product kernel is a way to incorporate the prior knowledge about the symmetry of the relation to be learned by only sacrificing the unnecessary expressive power. Thus, consistency can still be assured, despite considering a smaller hypothesis space.

\subsection{Reciprocal Relations}
Let us start with a definition of this type of relation.
\begin{definition}
A binary relation $\weightfunc: \nodespace^2 \rightarrow [0,1]$ is called a reciprocal relation if for all $(\node,\node') \in \nodespace^2$ it holds that $\weightfunc(\node,\node') = 1 - \weightfunc(\node',\node)$. \end{definition}
For general real-valued relations, the notion of  antisymmetry can be used in place of reciprocity:
\begin{definition}
A binary relation $\hypothesis: \nodespace^2 \rightarrow \mbr$ is called an antisymmetric relation if for all $(\node,\node') \in \nodespace^2$ it holds that $\hypothesis(\node,\node') = - \hypothesis(\node',\node)$. \end{definition}

For reciprocal and antisymmetric relations, every edge $\edge=(\node,\node')$ in a multi-graph like Figure~\ref{fig:examples} induces an unobserved invisible edge $\edge_R = (\node',\node)$ with appropriate weight in the opposite direction. Applications arise here in domains such as preference learning, game theory and bioinformatics for representing preference relations, choice probabilities, winning probabilities, gene regulation, et cetera. The weight on the edge defines the real direction of such an edge. If the weight on the edge $\edge = (\node,\node')$ is higher than 0.5, then the direction is from $v$ to $v'$, but when the weight is lower than 0.5, then the direction should be interpreted as inverted, for example, the edges from $A$ to $C$ in Figures~\ref{fig:examples} (a) and (e) should be interpreted as edges starting from $A$ instead of $C$. If the relation is $3$-valued as $Q: \nodespace^2 \rightarrow \{0,1/2,1\}$, then we end up with a three-class ordinal regression setting instead of an ordinary regression setting. Analogously to symmetry, reciprocity can also be easily incorporated in our framework via the following modification of the Kronecker kernel:
\begin{eqnarray}
\label{eq:recedgekernel}
K_{\otimes R}^{\Phi}(\edge,\ol{\edge})=\frac{1}{2} \big(K^{\phi}(\node,\ol{\node}) K^{\phi}(\node',\ol{\node}') - K^{\phi}(\node,\ol{\node}') K^{\phi}(\node',\ol{\node})\big)\,.
\end{eqnarray}
Thus, the addition of kernels in the symmetric case becomes a subtraction of
kernels in the reciprocal case. One can also prove that the RKHS of this so-called reciprocal Kronecker kernel allows
approximating arbitrarily well any type of continuous reciprocal relation.
\begin{theorem}\label{antisymmetrictheorem} \citep{Waegeman2011b}
Let
\begin{equation*}
R(\nodespace^2)=\{ t \mid t \in C(\nodespace^2),t(\node,\node')= - t(\node',\node)\}
\end{equation*}
be the space of all continuous antisymmetric relations from $\nodespace^2$ to $\mathbb{R}$. If $\kernelf^\phi$ on $\nodespace$ is universal, then the RKHS induced by the kernel $\kernelf_{\otimes R}^{\Phi}$ defined in (\ref{eq:recedgekernel}) is dense in $R(\nodespace^2)$.
\end{theorem}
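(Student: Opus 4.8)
The plan is to transfer the universality of the plain Kronecker kernel, already established in Theorem~\ref{unikrontheorem}, to the reciprocal kernel by an antisymmetrization argument, rather than attacking $R(\nodespace^2)$ directly. Let $\funspace_R$ denote the RKHS induced by $\kernelf_{\otimes R}^\Phi$. First I would record two preliminary facts. Since $\kernelf^\phi$ is continuous and $\nodespace^2$ is compact, $\kernelf_{\otimes R}^\Phi$ is a continuous bounded kernel, so every element of $\funspace_R$ is a continuous function, and by the reproducing property $|\hypothesis(\edge)| \le \|\hypothesis\|_{\funspace_R}\sqrt{\kernelf_{\otimes R}^\Phi(\edge,\edge)}$, convergence in the RKHS norm dominates uniform convergence. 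Moreover, writing $\edge=(\node,\node')$, the definition (\ref{eq:recedgekernel}) gives $\kernelf_{\otimes R}^\Phi((\node,\node'),\ol{\edge}) = -\kernelf_{\otimes R}^\Phi((\node',\node),\ol{\edge})$, so every finite kernel expansion, and hence every element of $\funspace_R$, is antisymmetric. Together these yield $\funspace_R \subseteq R(\nodespace^2)$, which is what makes the density claim meaningful.

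The heart of the argument is the observation that antisymmetrizing the ordinary Kronecker kernel in its evaluation argument reproduces exactly the reciprocal kernel. Concretely, for the projection $P$ defined on $C(\nodespace^2)$ by $(Pf)(\node,\node') = \tfrac{1}{2}\big(f(\node,\node')-f(\node',\node)\big)$, a one-line expansion using (\ref{eq:tppk}) and (\ref{eq:recedgekernel}) gives
\[
\tfrac{1}{2}\big(\kernelf_{\otimes}^\Phi(\edge,(\ol{\node},\ol{\node}')) - \kernelf_{\otimes}^\Phi(\edge,(\ol{\node}',\ol{\node}))\big) = \kernelf_{\otimes R}^\Phi(\edge,\ol{\edge}).
\]
Consequently, if $f(\cdot)=\sum_i \beta_i \kernelf_{\otimes}^\Phi(\edge_i,\cdot)$ is any finite expansion in the plain Kronecker pre-RKHS, then $Pf = \sum_i \beta_i \kernelf_{\otimes R}^\Phi(\edge_i,\cdot)$ is a finite expansion in $\funspace_R$; thus $P$ maps the pre-RKHS of $\kernelf_\otimes^\Phi$ into $\funspace_R$.

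With this in hand the density follows cleanly. Fix $g \in R(\nodespace^2)$ and $\varepsilon>0$. Since $\kernelf^\phi$ is universal, Theorem~\ref{unikrontheorem} makes $\kernelf_\otimes^\Phi$ universal on $\nodespace^2$, so its RKHS is uniformly dense in $C(\nodespace^2)$; by the boundedness remark above, its finite kernel expansions are already uniformly dense, so I may pick such an $f$ with $\|g-f\|_\infty<\varepsilon$. Applying $P$, using that it is a sup-norm contraction and fixes the antisymmetric function $g$ (i.e. $Pg=g$), I obtain
\[
\|g - Pf\|_\infty = \|Pg - Pf\|_\infty \le \|g-f\|_\infty < \varepsilon,
\]
while $Pf \in \funspace_R$ by the previous step. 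Hence $\funspace_R$ is uniformly dense in $R(\nodespace^2)$, as claimed. The same scheme proves the symmetric theorem, replacing $P$ by the symmetrizer $\tfrac{1}{2}(f(\node,\node')+f(\node',\node))$.

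The main obstacle is not a deep analytic difficulty but the correct verification of the kernel antisymmetrization identity above: it is the single computation on which the whole reduction rests, and getting the bookkeeping of the swapped evaluation arguments right is essential. The only other point needing care is the standard passage from RKHS-density to uniform density of finite kernel expansions, which relies on the kernel being bounded. It is worth emphasizing why the indirect route is necessary: a direct Stone-Weierstra{\ss} attack on $R(\nodespace^2)$ is unavailable, because a product of two antisymmetric functions is symmetric, so the antisymmetric continuous functions do not form a subalgebra. Routing the argument through the already-universal plain kernel is precisely what circumvents this.
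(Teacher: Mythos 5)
Your proof is correct: the kernel antisymmetrization identity $\tfrac{1}{2}\bigl(\kernelf_{\otimes}^\Phi(\edge,(\ol{\node},\ol{\node}'))-\kernelf_{\otimes}^\Phi(\edge,(\ol{\node}',\ol{\node}))\bigr)=\kernelf_{\otimes R}^{\Phi}(\edge,\ol{\edge})$, the sup-norm contraction property of the projection $P$ with $Pg=g$ for antisymmetric $g$, and the passage from RKHS density to uniform density of finite kernel expansions (valid since the kernel is bounded on the compact $\nodespace^2$) are all sound, and your closing remark correctly explains why a direct Stone--Weierstra{\ss} argument on $R(\nodespace^2)$ is impossible. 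The paper itself states this theorem without proof, deferring to \citet{Waegeman2011b}, and the argument there is essentially the one you give --- transferring the universality of the plain Kronecker kernel (Theorem~\ref{unikrontheorem}) to the reciprocal kernel via the antisymmetrizing projection --- so your route coincides with the source's.
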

Unlike many existing kernel-based methods for relational data, the models obtained with the presented kernels are able to represent any symmetric or reciprocal relation, respectively, without imposing additional transitivity properties of the relations.

\section{Algorithmic Aspects}
\label{algorithmsection}

This section gives a detailed description of the different algorithms that we propose for conditional ranking tasks. Our algorithms are primarily based on solving specific systems of linear equations, in which domain knowledge about the underlying relations is taken into account. In addition, a detailed discussion about the differences between optimizing (\ref{regrloss}) and (\ref{singlecentloss}) is provided.

\subsection{Matrix Representation of Symmetric and Reciprocal Kernels}
Let us define the so-called commutation matrix, which provides a powerful tool for formalizing the kernel matrices corresponding to the symmetric and reciprocal kernels.
\begin{definition}[Commutation matrix]
The $\dimone^2\times\dimone^2$-matrix
\begin{equation*}
\shufflem^{\dimone^2}=\sum_{i=1}^\dimone\sum_{j=1}^\dimone\natbase_{(i-1)\dimone+j}\natbase_{(j-1)\dimone+i}\transpose
\end{equation*}
is called the commutation matrix \citep{abadir2005matrixalgebra}, where $\natbase_i$ are the standard basis vectors of $\mathbb{R}^{\dimone^2}$.
\end{definition}
We use the superscript $\dimone^2$ to indicate the dimension $\dimone^2\times\dimone^2$ of the matrix $\shufflem$ but we omit this notation when the dimensionality is clear from the context or when the considerations do not depend on the dimensionality. For $\shufflem$, we have the following properties. First, $\shufflem\shufflem=\idmatrix$, where $\idmatrix$ is the identity matrix, since $\shufflem$ is a symmetric permutation matrix. Moreover, for every square matrix $\anymatrix\in\mathbb{R}^{\dimone\times\dimone}$, we have $\shufflem\ve(\anymatrix)=\ve(\anymatrix\transpose)$, where $\ve$ is the column vectorizing operator that stacks the columns of an $\dimone\times\dimone$-matrix in an $\dimone^2$-dimensional column vector, that is,
\begin{equation}\label{vecdef}
\ve(\anymatrix)=
(\anymatrix_{1,1},
\anymatrix_{2,1},
\ldots,
\anymatrix_{\dimone,1},
\anymatrix_{1,2},
\ldots,
\anymatrix_{\dimone,\dimone})\transpose\,.
\end{equation}
Furthermore, for $\anymatrix,\othermatrix\in\mathbb{R}^{\dimone\times\dimtwo}$, we have $$\shufflem^{\dimone^2}(\anymatrix\otimes\othermatrix)=(\othermatrix\otimes\anymatrix)\shufflem^{\dimtwo^2}.$$

The commutation matrix is used as a building block in constructing the following types of matrices:
\begin{definition}[Symmetrizer and skew-symmetrizer matrices]
The matrices
\begin{eqnarray*}
\symm=\frac{1}{2}(\idmatrix+\shufflem)\textrm{ and }
\asymm=\frac{1}{2}(\idmatrix-\shufflem) \,
\end{eqnarray*}
are known as the symmetrizer and skew-symmetrizer matrix, respectively \citep{abadir2005matrixalgebra}.
\end{definition}

Armed with the above definitions, we will now consider how the kernel matrices corresponding to the reciprocal kernel $\kernelf_{\otimes R}^{\Phi}$ and the symmetric kernel $\kernelf_{\otimes S}^{\Phi}$ can be represented in a matrix notation. Note that the next proposition covers also the kernel matrices constructed between, say, nodes encountered in the training set and the nodes encountered at the prediction phase, and hence the considerations involve two different sets of nodes.
\begin{proposition}\label{symmasymmkmprop}
Let $\kernelm\in\mathbb{R}^{\rsize\times\nodecount}$ be a kernel matrix consisting of all kernel evaluations between nodes in sets $\nodeset\subseteq\nodespace$ and $\ol{\nodeset}\subseteq\nodespace$, with $\arrowvert\nodeset\arrowvert=\rsize$ and $\arrowvert\ol{\nodeset}\arrowvert=\nodecount$, that is, $\kernelm_{i,j}=\kernelf^\phi(\node_i,\ol{\node}_j)$, where $\node_i\in\nodeset$ and $\ol{\node}_j\in \ol{\nodeset}$. The ordinary, symmetric and reciprocal Kronecker kernel matrices consisting of all kernel evaluations between edges in $\nodeset\times \nodeset$ and edges in $\overline{\nodeset}\times \ol{\nodeset}$ are given by
$$\ol{\kernelm} = \kernelm\otimes\kernelm \,, \quad \ol{\kernelm}^S = \symm^{\rsize^2}(\kernelm\otimes\kernelm) \,, \quad  \ol{\kernelm}^R = \asymm^{\rsize^2}(\kernelm\otimes\kernelm) \,.$$
\end{proposition}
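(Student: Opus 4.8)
The plan is to fix an explicit ordering of the edges that is compatible with both the Kronecker-product and the $\ve$ conventions, and then to verify each of the three identities entrywise. Concretely, I would index the edges of $\nodeset\times\nodeset$ so that the edge $(\node_i,\node_k)$ occupies row $(i-1)\rsize+k$, and the edges of $\ol{\nodeset}\times\ol{\nodeset}$ so that $(\ol{\node}_j,\ol{\node}_l)$ occupies column $(j-1)\nodecount+l$. With this convention the defining property of the Kronecker product gives $(\kernelm\otimes\kernelm)_{(i-1)\rsize+k,\,(j-1)\nodecount+l}=\kernelm_{i,j}\kernelm_{k,l}=\kernelf^{\phi}(\node_i,\ol{\node}_j)\kernelf^{\phi}(\node_k,\ol{\node}_l)$, which is exactly the ordinary Kronecker edge kernel $\kernelf_{\otimes}^{\Phi}$ of (\ref{eq:tppk}) evaluated on the pair of edges $\edge=(\node_i,\node_k)$ and $\ol{\edge}=(\ol{\node}_j,\ol{\node}_l)$. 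This settles $\ol{\kernelm}=\kernelm\otimes\kernelm$ immediately.

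For the symmetric and reciprocal matrices the only extra ingredient is to understand the action of $\shufflem^{\rsize^2}$ on $\kernelm\otimes\kernelm$. Using the basis-vector expression $\shufflem^{\rsize^2}=\sum_{a,b}\natbase_{(a-1)\rsize+b}\natbase_{(b-1)\rsize+a}\transpose$, one reads off that left multiplication by $\shufflem^{\rsize^2}$ carries the row indexed by $(i-1)\rsize+k$ to the row indexed by $(k-1)\rsize+i$; hence $(\shufflem^{\rsize^2}(\kernelm\otimes\kernelm))_{(i-1)\rsize+k,\,(j-1)\nodecount+l}=\kernelm_{k,j}\kernelm_{i,l}$, i.e.\ the term in which the two nodes of the conditioning edge have been swapped. (Equivalently one may invoke the identity $\shufflem^{\rsize^2}(\kernelm\otimes\kernelm)=(\kernelm\otimes\kernelm)\shufflem^{\nodecount^2}$ recorded above, which swaps the nodes of the second edge instead; both produce the same scalar.) This cross term $\kernelm_{i,l}\kernelm_{k,j}=\kernelf^{\phi}(\node_i,\ol{\node}_l)\kernelf^{\phi}(\node_k,\ol{\node}_j)$ is precisely the second summand appearing in the symmetric kernel (\ref{symmkernel}) and, up to sign, in the reciprocal kernel (\ref{eq:recedgekernel}).

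It then remains to assemble the pieces. Expanding $\symm^{\rsize^2}=\frac{1}{2}(\idmatrix+\shufflem^{\rsize^2})$ and $\asymm^{\rsize^2}=\frac{1}{2}(\idmatrix-\shufflem^{\rsize^2})$ and applying them to $\kernelm\otimes\kernelm$, the identity term reproduces $\frac{1}{2}\kernelm_{i,j}\kernelm_{k,l}$, while the $\shufflem^{\rsize^2}$ term contributes $\pm\frac{1}{2}\kernelm_{i,l}\kernelm_{k,j}$, and summing yields exactly $\kernelf_{\otimes S}^{\Phi}$ and $\kernelf_{\otimes R}^{\Phi}$ on the corresponding edge pair. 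I expect the only genuine difficulty to be bookkeeping: keeping the two distinct node sets $\nodeset$ and $\ol{\nodeset}$ straight, and making sure the edge orderings used on the row side, on the column side, and inside $\shufflem^{\rsize^2}$ are all mutually consistent with the $\ve$ convention of (\ref{vecdef}), so that the Kronecker, commutation, and symmetrizer operators compose as intended. Once the indexing is pinned down, every remaining step is a direct entrywise comparison with the kernel definitions.
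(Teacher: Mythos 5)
Your proposal is correct and follows essentially the same route as the paper: the ordinary case is read off from the definition of the Kronecker product, and the symmetric and reciprocal cases are reduced to the index-swapping action of the commutation matrix, after which expanding $\symm=\frac{1}{2}(\idmatrix+\shufflem)$ and $\asymm=\frac{1}{2}(\idmatrix-\shufflem)$ matches the entries with the kernels (\ref{symmkernel}) and (\ref{eq:recedgekernel}). The only cosmetic difference is that the paper performs this bookkeeping columnwise, writing each column of $\kernelm\otimes\kernelm$ as $\ve(\anymatrix)$ and invoking $\symm\ve(\anymatrix)=\frac{1}{2}\ve(\anymatrix+\anymatrix\transpose)$, whereas you verify the same permutation action entrywise on the rows.
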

\begin{proof}
The claim concerning the ordinary Kronecker kernel is an immediate consequence of the definition of the Kronecker product, that is, the entries of $\ol{\kernelm}$ are given as
$$\ol{\kernelm}_{(h-1)\rsize+i,(j-1)\nodecount+k}=\kernelf^{\phi}(\node_h,\ol{\node}_j) \kernelf^{\phi}(\node_i,\ol{\node}_k)\,,$$
where $1\leq h,i\leq\rsize$ and $1\leq j,k\leq\nodecount$. To prove the other two claims, we pay closer attention to the entries of $\kernelm\otimes\kernelm$. In particular, the $((j-1)\nodecount+k)$-th column of $\kernelm\otimes\kernelm$ contains all kernel evaluations of the edges in $\nodeset\times\nodeset$ with the edge $(\ol{\node}_j,\ol{\node}_k)\in\ol{\nodeset} \times\ol{\nodeset}$. By definition (\ref{vecdef}) of $\ve$, this column can be written as $\ve(\anymatrix)$, where $\anymatrix\in\mathbb{R}^{\rsize\times\rsize}$ is a matrix whose $i,h$-th entry contains the kernel evaluation between the edges $(\node_h,\node_i)\in\nodeset\times\nodeset$ and $(\ol{\node}_j,\ol{\node}_k)\in \ol{\nodeset} \times \ol{\nodeset}$:
\[
\kernelf^{\phi}(\node_h,\ol{\node}_j) \kernelf^{\phi}(\node_i,\ol{\node}_k)\,.
\]
We have the following properties of the symmetrizer and skew-symmetrizer matrices that straightforwardly follow from those of the commutation matrix. For any $\anymatrix\in\mathbb{R}^{\dimone\times\dimone}$
{\setlength\arraycolsep{2pt}
\begin{eqnarray}
\symm\ve(\anymatrix)&=&\frac{1}{2}\ve(\anymatrix+\anymatrix\transpose)\label{symmdef}\\
\asymm\ve(\anymatrix)&=&\frac{1}{2}\ve(\anymatrix-\anymatrix\transpose)\nonumber \,.
\end{eqnarray}
}Thus, the $((j-1)\nodecount+k)$-th column of $\symm^{\rsize^2}(\kernelm\otimes\kernelm)$ can, according to (\ref{symmdef}), be written as $\frac{1}{2}\ve(\anymatrix+\anymatrix\transpose)$, where the $i,h$-th entry of $\anymatrix+\anymatrix\transpose$ contains the kernel evaluation
\[
\frac{1}{2}\big(\kernelf^{\phi}(\node_h,\ol{\node}_j) \kernelf^{\phi}(\node_i,\ol{\node}_k) + \kernelf^{\phi}(\node_i,\ol{\node}_j) \kernelf^{\phi}(\node_h,\ol{\node}_k)\big)\,,
\]
which corresponds to the symmetric Kronecker kernel between the edges $(\node_i,\node_h)\in \nodeset\times \nodeset$ and $(\ol{\node}_j,\ol{\node}_k)\in \overline{\nodeset} \times \overline{\nodeset}$. The reciprocal case is analogous. 
\end{proof}

We also note that for $\anymatrix\in\mathbb{R}^{\dimone\times\dimtwo}$ the symmetrizer and skew-symmetrizer matrices commute with the $\dimone^2\times\dimtwo^2$-matrix $\anymatrix\otimes\anymatrix$ in the following sense:
{\setlength\arraycolsep{2pt}
\begin{eqnarray}
\label{symmcomm}
\symm^{\dimone^2}(\anymatrix\otimes\anymatrix)&=&(\anymatrix\otimes\anymatrix)\symm^{\dimtwo^2}\\
\label{asymmcomm}
\asymm^{\dimone^2}(\anymatrix\otimes\anymatrix)&=&(\anymatrix\otimes\anymatrix)\asymm^{\dimtwo^2}\,,
\end{eqnarray}
}where $\symm^{\dimone^2}$ and $\asymm^{\dimone^2}$ in the left-hand sides are $\dimone^2\times\dimone^2$-matrices and $\symm^{\dimtwo^2}$ and $\asymm^{\dimtwo^2}$ are $\dimtwo^2\times\dimtwo^2$-matrices in the right-hand sides. Thus, due to (\ref{symmcomm}), the above-considered symmetric Kronecker kernel matrix may as well be written as $(\kernelm\otimes\kernelm)\symm^{\nodecount^2}$ or as $\symm^{\rsize^2}(\kernelm\otimes\kernelm)\symm^{\nodecount^2}$. The same applies to the reciprocal Kronecker kernel matrix due to (\ref{asymmcomm}).

\subsection{Regression with Symmetric and Reciprocal Kernels}

Let $\nodecount$ and $\edgecount$, respectively, represent the number of nodes and edges in $\tset$. In the following, we make an assumption that $\tset$ contains, for each ordered pair of nodes $(\node,\node')$, exactly one edge starting from $\node$ and ending to $\node'$, that is, $\edgecount=\nodecount^2$ and $\tset$ corresponds to a complete directed graph on $\nodecount$ nodes which includes a loop at each node. As we will show below, this important special case enables the use of many computational short-cuts for the training phase. This assumption is dropped in Section~\ref{conjugatesection}, where we present training algorithms for the more general case.

In practical applications, a fully connected graph is most commonly available
in settings where the edges are generated by comparing some direct property
of the nodes, such as whether they belong to same or similar class in a
classification taxonomy (for examples, see the experiments). In experimental research on small sample data,
such as commonly considered in many bioinformatics applications
(see e.g. \citep{park2012flaws}), it may also be feasible to gather the edge
information directly for all pairs through experimental comparisons.
If this is not the case, then imputation techniques can be used to fill in the
values for missing edges in the training data in case only a small minority is
missing.

Using the notation of Proposition~\ref{symmasymmkmprop}, we let $\kernelm\in\mathbb{R}^{\nodecount\times\nodecount}$ be the kernel matrix of $\kernelf^{\phi}$, containing similarities between all nodes encountered in $\tset$. Due to the above assumption and Proposition~\ref{symmasymmkmprop}, the kernel matrix containing the evaluations of the kernels $\kernelf_{\otimes}^{\Phi}$, $\kernelf_{\otimes}^{\Phi S}$ and $\kernelf_{\otimes}^{\Phi R}$ between the edges in $\tset$ can be expressed as $\ol{\kernelm}$, $\ol{\kernelm}^S$ and $\ol{\kernelm}^R$, respectively.

Recall that, according to the representer theorem, the prediction function obtained as a solution to problem (\ref{regalgorithm}) can be expressed with the dual representation (\ref{eq:primalmodel}), involving a vector of so-called dual parameters, whose dimension equals the number of edges in the training set. Here, we represent the dual solution with a vector $\dualparvect\in\mathbb{R}^{\nodecount^2}$ containing one entry per each possible edge between the vertices occurring in the training set.

Thus, using standard Tikhonov regularization \citep{evgeniou2000rnandsvm}, the objective function of problem (\ref{regalgorithm}) with kernel $\kernelf_{\otimes}^{\Phi}$ can be rewritten in matrix notation as
\begin{equation}\label{specialcaseobjfun}
\lossfunction(\labelvector,\ol{\kernelm}\dualparvect)
+\regparam\dualparvect\transpose\overline{\kernelm}\dualparvect\,,
\end{equation}
where $\lossfunction:\mathbb{R}^\edgecount\times\mathbb{R}^\edgecount\rightarrow\mathbb{R}$ is a convex loss function that maps the vector $\labelvector$ of training labels and the vector $\ol{\kernelm}\dualparvect$ of predictions to a real value.

Up to multiplication with a constant, the loss (\ref{regrloss}) can be represented in matrix form as
\begin{equation}\label{regrlossmatrixform}
(\labelvector-\ol{\kernelm}\dualparvect)\transpose(\labelvector-\ol{\kernelm}\dualparvect)\,.
\end{equation}
Thus, for the regression approach, the objective function to be minimized becomes:
\begin{equation}\label{rlsregr}
(\labelvector-\overline{\kernelm}\dualparvect)\transpose(\labelvector-\overline{\kernelm}\dualparvect)
+\regparam\dualparvect\transpose\overline{\kernelm}\dualparvect\,.
\end{equation}
By taking the derivative of (\ref{rlsregr}) with respect to $\dualparvect$, setting it to zero, and solving with respect to $\dualparvect$, we get the following system of linear equations:
\begin{equation}\label{notsimplifiedsystem}
(\overline{\kernelm}\overline{\kernelm}+\regparam\overline{\kernelm})\dualparvect=\overline{\kernelm}\labelvector\,.
\end{equation}
If the kernel matrix $\overline{\kernelm}$ is not strictly positive definite but only positive semi-definite, $\overline{\kernelm}$ should be interpreted as $\lim_{\epsilon\rightarrow 0^+}(\overline{\kernelm}+\epsilon\idmatrix)$. Accordingly, (\ref{notsimplifiedsystem}) can be simplified to
\begin{equation}\label{simplifiedsystem}
(\overline{\kernelm}+\regparam\idmatrix)\dualparvect=\labelvector\,.
\end{equation}
Due to the positive semi-definiteness of the kernel matrix, (\ref{simplifiedsystem}) always has a unique solution. Since the solution of (\ref{simplifiedsystem}) is also a solution of (\ref{notsimplifiedsystem}), it is enough to concentrate on solving (\ref{simplifiedsystem}).

\newcommand{\fourthmatrix}{\bm{V}}

Using the standard notation and rules of Kronecker product algebra, we show how to efficiently solve shifted Kronecker product systems. For a more in-depth analysis of the shifted Kronecker product systems, we refer to \citet{martin2006shiftedkron}.
\begin{proposition}\label{shiftedlemma}
Let  $\anymatrix,\othermatrix\in\mathbb{R}^{\nodecount\times\nodecount}$ be diagonalizable matrices, that is, the matrices can be eigen decomposed as
\begin{equation*}
\anymatrix=\evecmatrix\evalmatrix \evecmatrix^{-1}\,,\quad\othermatrix=\thirdmatrix\bm{\Sigma} \thirdmatrix^{-1}\,,
\end{equation*}
where $\evecmatrix,\thirdmatrix\in\mathbb{R}^{\nodecount\times\nodecount}$ contain the eigenvectors and the diagonal matrices $\evalmatrix,\bm{\Sigma}\in\mathbb{R}^{\nodecount\times\nodecount}$ contain the corresponding eigenvalues of $\anymatrix$ and $\othermatrix$. Then, the following type of shifted Kronecker product system
\begin{equation}\label{shiftedkron}
(\anymatrix\otimes\othermatrix
+\regparam\idmatrix)\dualparvect=\ve(\labelmatrix)\,,
\end{equation}
where $\regparam>0$ and $\labelmatrix\in\mathbb{R}^{\nodecount\times\nodecount}$, can be solved with respect to $\dualparvect$ in $O(\nodecount^3)$ time if the inverse of $\anymatrix\otimes\othermatrix
+\regparam\idmatrix$ exists.
\end{proposition}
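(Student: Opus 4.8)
The plan is to diagonalize the shifted Kronecker operator explicitly using the two given eigendecompositions, and then to exploit the correspondence between Kronecker products and the $\ve$ operator so that the solution is obtained through a constant number of $\nodecount\times\nodecount$ matrix operations rather than by ever forming a $\nodecount^2\times\nodecount^2$ matrix. First I would apply the mixed-product property of the Kronecker product to write
\[
\anymatrix\otimes\othermatrix
=(\evecmatrix\evalmatrix\evecmatrix^{-1})\otimes(\thirdmatrix\bm{\Sigma}\thirdmatrix^{-1})
=(\evecmatrix\otimes\thirdmatrix)(\evalmatrix\otimes\bm{\Sigma})(\evecmatrix^{-1}\otimes\thirdmatrix^{-1})\,,
\]
and then use $(\evecmatrix\otimes\thirdmatrix)^{-1}=\evecmatrix^{-1}\otimes\thirdmatrix^{-1}$ together with the fact that $\regparam\idmatrix$ is invariant under conjugation by $\evecmatrix\otimes\thirdmatrix$ to obtain
\[
\anymatrix\otimes\othermatrix+\regparam\idmatrix
=(\evecmatrix\otimes\thirdmatrix)(\evalmatrix\otimes\bm{\Sigma}+\regparam\idmatrix)(\evecmatrix\otimes\thirdmatrix)^{-1}\,.
\]
Since $\evalmatrix$ and $\bm{\Sigma}$ are diagonal, $\evalmatrix\otimes\bm{\Sigma}+\regparam\idmatrix$ is diagonal, so its inverse is obtained by reciprocating its diagonal entries; this also makes the existence hypothesis transparent, as the inverse exists precisely when $\evalmatrix_{i,i}\bm{\Sigma}_{j,j}+\regparam\neq 0$ for all $i,j$. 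Consequently $\dualparvect=(\evecmatrix\otimes\thirdmatrix)(\evalmatrix\otimes\bm{\Sigma}+\regparam\idmatrix)^{-1}(\evecmatrix^{-1}\otimes\thirdmatrix^{-1})\ve(\labelmatrix)$.

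The key step for the complexity claim is to evaluate this expression without instantiating any Kronecker factor, using the identity $(\bm{B}\transpose\otimes\bm{A})\ve(\bm{X})=\ve(\bm{A}\bm{X}\bm{B})$, which is consistent with the column-stacking convention (\ref{vecdef}). Applying it to the rightmost factor gives $(\evecmatrix^{-1}\otimes\thirdmatrix^{-1})\ve(\labelmatrix)=\ve(\bm{Z})$ with $\bm{Z}=\thirdmatrix^{-1}\labelmatrix(\evecmatrix^{-1})\transpose$, computable in $O(\nodecount^3)$ time. Next, multiplication by the diagonal $(\evalmatrix\otimes\bm{\Sigma}+\regparam\idmatrix)^{-1}$ acts entrywise: because the $((i-1)\nodecount+j)$-th diagonal entry of $\evalmatrix\otimes\bm{\Sigma}$ equals $\evalmatrix_{i,i}\bm{\Sigma}_{j,j}$ and the same position of $\ve(\bm{Z})$ holds $\bm{Z}_{j,i}$, this is the Hadamard scaling $\bm{Z}'_{j,i}=\bm{Z}_{j,i}/(\evalmatrix_{i,i}\bm{\Sigma}_{j,j}+\regparam)$, costing $O(\nodecount^2)$. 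Finally the identity applied to $\evecmatrix\otimes\thirdmatrix$ yields $\dualparvect=\ve(\thirdmatrix\bm{Z}'\evecmatrix\transpose)$, again two $\nodecount\times\nodecount$ products. Summing the costs, including forming $\evecmatrix^{-1}$ and $\thirdmatrix^{-1}$, the whole procedure runs in $O(\nodecount^3)$ time.

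I expect the main obstacle to be conceptual rather than computational: a naive reading of the closed form suggests forming and inverting the $\nodecount^2\times\nodecount^2$ matrix, which would cost $O(\nodecount^6)$, so the crux is recognizing that the vec-identity collapses each Kronecker-vector product into a pair of small matrix multiplications and that the shift leaves the eigenvector structure untouched. Beyond that, the remaining care is purely bookkeeping, namely verifying that the index arrangement of $\ve$ in (\ref{vecdef}) matches the block pattern of $\evalmatrix\otimes\bm{\Sigma}$ so that the diagonal rescaling is correctly expressed as an entrywise operation on $\bm{Z}$; once this alignment is confirmed, the complexity bound follows immediately.
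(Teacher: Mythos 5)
Your proposal is correct and follows essentially the same route as the paper's proof: decompose $\anymatrix\otimes\othermatrix+\regparam\idmatrix$ via the mixed-product property so the shift lands on the diagonal eigenvalue factor, then collapse every Kronecker-times-vector product with the identity $(\bm{B}\transpose\otimes\bm{A})\ve(\bm{X})=\ve(\bm{A}\bm{X}\bm{B})$ and perform the diagonal inversion as an entrywise (Hadamard) rescaling, giving $O(\nodecount^3)$ overall. Your explicit index check that the $((i-1)\nodecount+j)$-th entries align, and your remark that invertibility amounts to $\evalmatrix_{i,i}\bm{\Sigma}_{j,j}+\regparam\neq 0$, are welcome clarifications but do not change the argument.
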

\begin{proof}
Before starting the actual proof, we recall certain rules concerning the Kronecker product (see e.g. \citet{horn1991topics}) and introduce some notations. Namely, for $\anymatrix\in\mathbb{R}^{a\times b}$, $\thirdmatrix\in\mathbb{R}^{c\times d}$,
$\othermatrix\in\mathbb{R}^{b\times \dimone}$ and $\fourthmatrix\in\mathbb{R}^{d\times \dimtwo}$, we have:
\begin{equation*}
(\anymatrix\otimes\thirdmatrix)(\othermatrix\otimes \fourthmatrix)=(\anymatrix \othermatrix)\otimes(\thirdmatrix \fourthmatrix)\,.
\end{equation*}
From this, it directly follows that
\begin{equation}\label{kroninv}
(\anymatrix\otimes\othermatrix)^{-1}=\anymatrix^{-1}\otimes\othermatrix^{-1}\,.
\end{equation}
Moreover, for $\anymatrix\in\mathbb{R}^{a\times b}$, $\othermatrix\in\mathbb{R}^{b\times c}$,
and $\thirdmatrix\in\mathbb{R}^{c\times d}$, we have:
\begin{equation*}
(\thirdmatrix\transpose\otimes \anymatrix)\ve(\othermatrix)=\ve(\anymatrix\othermatrix\thirdmatrix)\,.
\end{equation*}
Furthermore, for $\anymatrix,\othermatrix\in\mathbb{R}^{a\times b}$, let $\anymatrix\odot \othermatrix$ denote the Hadamard (elementwise) product, that is, $(\anymatrix\odot \othermatrix)_{i,j}=\anymatrix_{i,j}\othermatrix_{i,j}$. Further, for a vector $\bm{v}\in\mathbb{R}^\dimone$, let $\textnormal{diag}(\bm{v})$ denote the diagonal $\dimone\times\dimone$-matrix, whose diagonal entries are given as $\textnormal{diag}(\bm{v})_{i,i}=\bm{v}_{i}$. Finally, for $\anymatrix,\othermatrix\in\mathbb{R}^{a\times b}$, we have:
\begin{equation*}
\ve(\anymatrix\odot \othermatrix)=\textnormal{diag}(\ve(\anymatrix))\ve(\othermatrix)\,.
\end{equation*}

By multiplying both sides of Eq.~(\ref{shiftedkron}) with $(\anymatrix\otimes\othermatrix
+\regparam\idmatrix)^{-1}$ from the left, we get
{\setlength\arraycolsep{2pt}
\begin{eqnarray}
\nonumber
\dualparvect
&=&(\anymatrix\otimes\othermatrix
+\regparam\idmatrix)^{-1}
\ve(\labelmatrix)\\
\nonumber
&=&((\evecmatrix\evalmatrix \evecmatrix^{-1})\otimes(\thirdmatrix\bm{\Sigma} \thirdmatrix^{-1})
+\regparam\idmatrix)^{-1}
\ve(\labelmatrix)\\
\nonumber
&=&((\evecmatrix\otimes \thirdmatrix)(\evalmatrix\otimes\bm{\Sigma})(\evecmatrix^{-1}\otimes \thirdmatrix^{-1})
+\regparam\idmatrix)^{-1}
\ve(\labelmatrix)\\
\label{eigenshift}
&=&(\evecmatrix\otimes \thirdmatrix)
(\evalmatrix\otimes\bm{\Sigma}+\regparam\idmatrix)^{-1}
(\evecmatrix^{-1}\otimes \thirdmatrix^{-1})
\ve(\labelmatrix)\\
\nonumber
&=&(\evecmatrix\otimes \thirdmatrix)
(\evalmatrix\otimes\bm{\Sigma}
+\regparam\idmatrix)^{-1}
\ve(\thirdmatrix^{-1}\labelmatrix \evecmatrix^{-\textnormal{T}})\\
\nonumber
&=&(\evecmatrix\otimes \thirdmatrix)
\ve(\bm{C}\odot \bm{E})\\
\label{finalshift}
&=&
\ve(\thirdmatrix(\bm{C}\odot \bm{E})\evecmatrix\transpose)\,,
\end{eqnarray}
}where
$\bm{E}=\thirdmatrix^{-1}\labelmatrix \evecmatrix^{-\textnormal{T}}$
and $\textrm{diag}(\ve(\bm{C}))=(\evalmatrix\otimes\bm{\Sigma}
+\regparam\idmatrix)^{-1}$.
In line (\ref{eigenshift}), we use (\ref{kroninv}) and therefore we can write $\regparam\idmatrix = \regparam(\evecmatrix\otimes \thirdmatrix)(\evecmatrix^{-1}\otimes \thirdmatrix^{-1})$ after which we can add $\regparam$ directly to the eigenvalues $\evalmatrix\otimes\bm{\Sigma}$ of $\anymatrix\otimes\othermatrix$.
The eigen decompositions of $\anymatrix$ and $\othermatrix$ as well as all matrix multiplications in (\ref{finalshift}) can be computed in $O(\nodecount^3)$ time.
\end{proof}

\begin{corollary}\label{regrcoro}
A minimizer of (\ref{rlsregr}) can be computed in $O(\nodecount^3)$ time.
\end{corollary}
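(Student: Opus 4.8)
The plan is to recognize the normal equations of the RLS problem as a special case of the shifted Kronecker product system already handled by Proposition~\ref{shiftedlemma}, and then to verify that its two hypotheses hold in the present setting. First I would recall from the derivation immediately preceding the corollary that a minimizer of (\ref{rlsregr}) is obtained by solving the linear system (\ref{simplifiedsystem}), i.e.\ $(\ol{\kernelm}+\regparam\idmatrix)\dualparvect=\labelvector$. Under the standing assumption $\edgecount=\nodecount^2$, Proposition~\ref{symmasymmkmprop} gives $\ol{\kernelm}=\kernelm\otimes\kernelm$ with $\kernelm\in\mathbb{R}^{\nodecount\times\nodecount}$, so this system is exactly of the form (\ref{shiftedkron}) with $\anymatrix=\othermatrix=\kernelm$ and right-hand side $\labelvector=\ve(\labelmatrix)$, where $\labelmatrix\in\mathbb{R}^{\nodecount\times\nodecount}$ is the reshaping of the $\nodecount^2$-dimensional label vector into a matrix whose columns are its successive length-$\nodecount$ blocks.

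Next I would check the two preconditions of Proposition~\ref{shiftedlemma}. Diagonalizability is immediate, since $\kernelm$ is a kernel matrix and hence symmetric, so it admits an eigendecomposition $\kernelm=\evecmatrix\evalmatrix\evecmatrix^{-1}$ with real eigenvalues (indeed one may take $\evecmatrix$ orthogonal, whence $\evecmatrix^{-1}=\evecmatrix\transpose$). Invertibility of $\kernelm\otimes\kernelm+\regparam\idmatrix$ also holds: as a kernel matrix $\kernelm$ is positive semidefinite, the eigenvalues of $\kernelm\otimes\kernelm$ are the pairwise products of the nonnegative eigenvalues of $\kernelm$ and are therefore nonnegative, so adding $\regparam\idmatrix$ with $\regparam>0$ shifts every eigenvalue strictly above zero and makes the matrix strictly positive definite, hence invertible. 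Applying Proposition~\ref{shiftedlemma} with $\anymatrix=\othermatrix=\kernelm$ then returns the solution $\dualparvect$, and therefore a minimizer of (\ref{rlsregr}), in $O(\nodecount^3)$ time.

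I do not expect a genuine obstacle here: the heavy lifting---the eigendecomposition-based diagonalization of the shift and the $O(\nodecount^3)$ matrix arithmetic of (\ref{finalshift})---has already been carried out in Proposition~\ref{shiftedlemma}. The only real content of this corollary is the observation that the RLS normal equations specialize to a shifted Kronecker system whose two Kronecker factors coincide and are symmetric positive semidefinite, so the main point to get right is simply that these factors satisfy the diagonalizability and invertibility assumptions, rather than any fresh computation.
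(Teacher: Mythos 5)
Your proposal is correct and follows essentially the same route as the paper's own proof: both reduce the problem to the system (\ref{simplifiedsystem}), substitute $\kernelm$ for both $\anymatrix$ and $\othermatrix$ in Proposition~\ref{shiftedlemma}, and verify its hypotheses via the symmetry and positive semi-definiteness of $\kernelm$ (diagonalizability, and strictly positive eigenvalues of $\kernelm\otimes\kernelm+\regparam\idmatrix$ ensuring invertibility). The extra details you supply, such as reshaping the label vector into the matrix $\labelmatrix$, are consistent with what the paper leaves implicit.
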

\begin{proof}
Since the kernel matrix $\kernelm$ is symmetric and positive semi-definite, it is diagonalizable and it has nonnegative eigenvalues. This ensures that the matrix $\kernelm\otimes\kernelm+\regparam\idmatrix$ has strictly positive eigenvalues and therefore its inverse exists. Consequently, the claim follows directly from Proposition~\ref{shiftedlemma}, which can be observed by substituting $\kernelm$ for both $\anymatrix$ and $\othermatrix$.
\end{proof}


We continue by considering the use of the symmetric and reciprocal Kronecker kernels and show that, with those, the dual solution can be obtained as easily as with the ordinary Kronecker kernel. We first present and prove the following two inversion identities:
\begin{lemma}\label{identitylemma}
Let $\overline{\othermatrix}=\othermatrix\otimes\othermatrix$ for some square matrix $\othermatrix$. Then,
{\setlength\arraycolsep{2pt}
\begin{eqnarray}\label{identitysymm}
(\symm\overline{\othermatrix}\symm+\regparam\idmatrix)^{-1}
&=&\symm(\overline{\othermatrix}+\regparam\idmatrix)^{-1}\symm+\frac{1}{\regparam}\asymm \,,\\
\label{identityasymm}
(\asymm\overline{\othermatrix}\asymm+\regparam\idmatrix)^{-1}
&=&\asymm(\overline{\othermatrix}+\regparam\idmatrix)^{-1}\asymm+\frac{1}{\regparam}\symm \,,
\end{eqnarray}
}if the considered inverses exist.
\end{lemma}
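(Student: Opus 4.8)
The plan is to verify each identity directly: show that the proposed right-hand side, multiplied by the matrix being inverted, equals $\idmatrix$; since these are square matrices a one-sided inverse is two-sided, so this suffices (the hypothesis that the inverses exist lets me treat $(\overline{\othermatrix}+\regparam\idmatrix)^{-1}$ as a genuine matrix). The computation rests on three elementary facts. First, because $\shufflem\shufflem=\idmatrix$, the matrices $\symm$ and $\asymm$ are complementary projections: $\symm^2=\symm$, $\asymm^2=\asymm$, $\symm\asymm=\asymm\symm=0$, and $\symm+\asymm=\idmatrix$. Second, by the commutation relations (\ref{symmcomm}) and (\ref{asymmcomm}) — applied with the square matrix $\othermatrix$, so that both sides have equal dimension — both $\symm$ and $\asymm$ commute with $\overline{\othermatrix}=\othermatrix\otimes\othermatrix$. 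Third, commuting with $\overline{\othermatrix}$ forces $\symm$ and $\asymm$ to commute also with $N:=(\overline{\othermatrix}+\regparam\idmatrix)^{-1}$.

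These facts first simplify the matrix to be inverted. Since $\symm$ commutes with $\overline{\othermatrix}$ and is idempotent, $\symm\overline{\othermatrix}\symm=\symm\symm\overline{\othermatrix}=\symm\overline{\othermatrix}$, so the left-hand side of (\ref{identitysymm}) is the inverse of $\symm\overline{\othermatrix}+\regparam\idmatrix$. Using $\symm N\symm=\symm N$, the candidate inverse on the right-hand side is $C=\symm N+\frac{1}{\regparam}\asymm$. I would then expand the product $(\symm\overline{\othermatrix}+\regparam\idmatrix)\,C$. The cross term $\symm\overline{\othermatrix}\cdot\frac{1}{\regparam}\asymm$ vanishes, because $\symm$ commutes with $\overline{\othermatrix}$ and $\symm\asymm=0$; the remaining terms collect as $\symm\,\overline{\othermatrix}N+\regparam\symm N+\asymm=\symm(\overline{\othermatrix}+\regparam\idmatrix)N+\asymm=\symm+\asymm=\idmatrix$, which proves (\ref{identitysymm}).

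Identity (\ref{identityasymm}) then follows from the identical computation with the roles of $\symm$ and $\asymm$ exchanged: the only properties used — idempotency, mutual annihilation, summing to $\idmatrix$, and commutation with $\overline{\othermatrix}$ — are symmetric under that swap. The step to get right is the commutation: without (\ref{symmcomm})/(\ref{asymmcomm}) the sandwiched product $\symm\overline{\othermatrix}\symm$ would not collapse, and the cancellation of the cross terms — which is precisely what makes the additive correction $\frac{1}{\regparam}\asymm$ (respectively $\frac{1}{\regparam}\symm$) exactly right — would fail. Conceptually the whole lemma is the statement that $\symm$ and $\asymm$ project onto $\overline{\othermatrix}$-invariant subspaces on which $\symm\overline{\othermatrix}\symm+\regparam\idmatrix$ reduces to $\overline{\othermatrix}+\regparam\idmatrix$ and to $\regparam\idmatrix$ respectively; the direct verification above is simply the coordinate-free inversion of that block decomposition.
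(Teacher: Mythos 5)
Your proof is correct and takes essentially the same route as the paper's: direct verification that the candidate inverse multiplies against $\symm\overline{\othermatrix}\symm+\regparam\idmatrix$ to give $\idmatrix$, using idempotence, the mutual orthogonality $\symm\asymm=\bm{0}$, and the fact that $\symm,\asymm$ commute with $\overline{\othermatrix}$ and hence with $(\overline{\othermatrix}+\regparam\idmatrix)^{-1}$. The only cosmetic difference is that you collect $\symm\overline{\othermatrix}(\overline{\othermatrix}+\regparam\idmatrix)^{-1}+\regparam\symm(\overline{\othermatrix}+\regparam\idmatrix)^{-1}=\symm(\overline{\othermatrix}+\regparam\idmatrix)(\overline{\othermatrix}+\regparam\idmatrix)^{-1}=\symm$ in one factoring step, where the paper reaches the same cancellation by invoking the inversion identity $\overline{\othermatrix}(\overline{\othermatrix}+\regparam\idmatrix)^{-1}=\idmatrix-\regparam(\overline{\othermatrix}+\regparam\idmatrix)^{-1}$ of \citet{henderson1981dism}.
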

\begin{proof}
For a start, we note certain directly verifiable properties of the symmetrizer and skew-symmetrizer matrices. Namely, the matrices $\symm$ and $\asymm$ are idempotent, that is,
\begin{equation*}
\symm\symm=\symm\textrm{ and }\asymm\asymm=\asymm \,.
\end{equation*}
Furthermore, $\symm$ and $\asymm$ are orthogonal to each other, that is,
\begin{equation}\label{symmorthog}
\symm\asymm=\asymm\symm=\bm{0} \,.
\end{equation}

We prove (\ref{identitysymm}) by multiplying $\symm\overline{\othermatrix}\symm+\regparam\idmatrix$ with its alleged inverse matrix and show that the result is the identity matrix:
{\setlength\arraycolsep{2pt}
\begin{eqnarray}
\nonumber
\lefteqn{(\symm\overline{\othermatrix}\symm+\regparam\idmatrix)
(\symm(\overline{\othermatrix}+\regparam\idmatrix)^{-1}\symm+\frac{1}{\regparam}\asymm)}\\
\label{pfsecondrow}
&=&\symm\overline{\othermatrix}\symm\symm(\overline{\othermatrix}+\regparam\idmatrix)^{-1}\symm+\frac{1}{\regparam}\symm\overline{\othermatrix}\symm\asymm\\
\nonumber
&&+\regparam\symm(\overline{\othermatrix}+\regparam\idmatrix)^{-1}\symm+\regparam\frac{1}{\regparam}\asymm\\
\label{pfthirdrow}
&=&\symm\overline{\othermatrix}(\overline{\othermatrix}+\regparam\idmatrix)^{-1}+\regparam\symm(\overline{\othermatrix}+\regparam\idmatrix)^{-1}+\asymm\\
\label{pffouthrow}
&=&\symm(\idmatrix-\regparam(\overline{\othermatrix}+\regparam\idmatrix)^{-1})+\regparam\symm(\overline{\othermatrix}+\regparam\idmatrix)^{-1}+\asymm\\
\nonumber
&=&\symm-\regparam\symm(\overline{\othermatrix}+\regparam\idmatrix)^{-1}+\regparam\symm(\overline{\othermatrix}+\regparam\idmatrix)^{-1}+\asymm\\
\nonumber
&=&\idmatrix\,.
\end{eqnarray}
}When going from (\ref{pfsecondrow}) to (\ref{pfthirdrow}), we use the fact that $\symm$ commutes with $(\overline{\othermatrix}+\regparam\idmatrix)^{-1}$, because it commutes with both $\overline{\othermatrix}$ and $\idmatrix$. Moreover, the second term of (\ref{pfsecondrow}) vanishes, because of the orthogonality of $\symm$ and $\asymm$ to each other. In (\ref{pffouthrow}) we have used the following inversion identity known in matrix calculus literature \citep{henderson1981dism}
\begin{equation*}
\overline{\othermatrix}(\overline{\othermatrix}+\idmatrix)^{-1}
=\idmatrix-(\overline{\othermatrix}+\idmatrix)^{-1}\,.
\end{equation*}
Identity (\ref{identityasymm}) can be proved analogously.
\end{proof}
These inversion identities indicate that we can invert a diagonally shifted symmetric or reciprocal Kronecker kernel matrix simply by modifying the inverse of a diagonally shifted ordinary Kronecker kernel matrix. This is an advantageous property, since the computational short-cuts provided by Proposition~\ref{shiftedlemma} ensure the fast inversion of the shifted ordinary Kronecker kernel matrices, and its results can thus be used to accelerate the computations for the symmetric and reciprocal cases too.

The next result uses the above inversion identities to show that, when learning symmetric or reciprocal relations with  kernel ridge regression \citep{saunders1998krr,Suykens2002,Shawetaylor2004,pahikkala2009preferences}, we do not explicitly have to use the symmetric and reciprocal Kronecker kernels. Instead, we can just use the ordinary Kronecker kernel to learn the desired model as long as we ensure that the symmetry or reciprocity is encoded in the labels.
\begin{proposition}
Using the symmetric Kronecker kernel for RLS regression with a label vector $\labelvector$ is equivalent to using an ordinary Kronecker kernel and a label vector $\symm\labelvector$. One can observe an analogous relationship between the reciprocal Kronecker kernel and a label vector $\asymm\labelvector$.
\end{proposition}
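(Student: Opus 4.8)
The plan is to compare the closed-form RLS dual solutions of the two models through the simplified linear system~(\ref{simplifiedsystem}) and to show that they yield identical predictions on every edge. I first record, via Proposition~\ref{symmasymmkmprop} and the commutation relation~(\ref{symmcomm}), that the symmetric Kronecker kernel matrix over the training edges is $\ol{\kernelm}^S=\symm\ol{\kernelm}\symm$ with $\ol{\kernelm}=\kernelm\otimes\kernelm$, using that $\symm$ is idempotent and commutes with $\ol{\kernelm}$. Since $\ol{\kernelm}^S$ is positive semidefinite, the same limiting argument that reduces~(\ref{notsimplifiedsystem}) to~(\ref{simplifiedsystem}) applies, so the symmetric-kernel solution is $\dualparvect^S=(\symm\ol{\kernelm}\symm+\regparam\idmatrix)^{-1}\labelvector$, whereas the ordinary-kernel solution for the symmetrized labels $\symm\labelvector$ is $\dualparvect=(\ol{\kernelm}+\regparam\idmatrix)^{-1}\symm\labelvector$.

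Next I would apply the inversion identity~(\ref{identitysymm}) with $\othermatrix=\kernelm$ (so that $\ol{\othermatrix}=\ol{\kernelm}$) to write $\dualparvect^S=\big[\symm(\ol{\kernelm}+\regparam\idmatrix)^{-1}\symm+\tfrac{1}{\regparam}\asymm\big]\labelvector$, and then left-multiply by $\symm$. Invoking idempotency $\symm\symm=\symm$, the orthogonality $\symm\asymm=\bm{0}$ from~(\ref{symmorthog}), and the commutativity of $\symm$ with $(\ol{\kernelm}+\regparam\idmatrix)^{-1}$, the antisymmetric term is annihilated and the expression collapses to $\symm\dualparvect^S=(\ol{\kernelm}+\regparam\idmatrix)^{-1}\symm\labelvector=\dualparvect$.

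To translate this into a statement about predictions, I use that the symmetric Kronecker kernel vector $\bm{k}^S$ between an arbitrary edge $\ol{\edge}$ and the training edges equals $\symm\bm{k}$, where $\bm{k}$ is the corresponding ordinary Kronecker kernel vector; this is exactly the structure established in Proposition~\ref{symmasymmkmprop}. Since $\symm$ is symmetric, the symmetric-kernel prediction is $(\bm{k}^S)\transpose\dualparvect^S=\bm{k}\transpose\symm\dualparvect^S=\bm{k}\transpose\dualparvect$, which is precisely the prediction of the ordinary-kernel model trained on $\symm\labelvector$; letting $\ol{\edge}$ range over the training edges gives in particular $\ol{\kernelm}^S\dualparvect^S=\ol{\kernelm}\dualparvect$. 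The reciprocal claim follows verbatim after replacing $\symm$ by $\asymm$, using identity~(\ref{identityasymm}) together with $\asymm\symm=\bm{0}$.

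The subtle point, and the only place where care is needed, is that the two dual vectors are \emph{not} equal: the inversion identity produces a spurious $\tfrac{1}{\regparam}\asymm\labelvector$ (respectively $\tfrac{1}{\regparam}\symm\labelvector$) term, reflecting that $\ol{\kernelm}^S+\regparam\idmatrix$ acts on the antisymmetric subspace purely as $\regparam\idmatrix$. The equivalence must therefore be phrased at the level of predictions rather than coefficients: this stray component lies in the null space of the projector $\symm$ applied on the left and is orthogonal to every symmetric-kernel evaluation vector $\bm{k}^S$, so it is invisible to the model and never affects a prediction. Once this is recognized, the remaining steps are routine manipulations with the idempotency, orthogonality, and commutativity properties of $\symm$ and $\asymm$.
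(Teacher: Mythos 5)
Your proof is correct and takes essentially the same route as the paper's own proof: both rest on the inversion identity~(\ref{identitysymm}), the idempotency, orthogonality and commutation properties of $\symm$ and $\asymm$, and both phrase the equivalence at the level of predictions using the fact (from Proposition~\ref{symmasymmkmprop}) that the symmetric kernel evaluation vector is $\symm$ applied to the ordinary one. The only cosmetic difference is that you absorb the stray $\tfrac{1}{\regparam}\asymm\labelvector$ term by first establishing $\symm\dualparvect^S=\dualparvect$ and then forming predictions, whereas the paper expands the dual solution and lets that term cancel inside the prediction formula via $\symm\asymm=\bm{0}$.
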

\begin{proof}
Let
{\setlength\arraycolsep{2pt}
\begin{eqnarray*}
\dualparvect&=&(\symm\overline{\kernelm}\symm+\regparam\idmatrix)^{-1}\labelvector\\
\bm{b}&=&(\overline{\kernelm}+\regparam\idmatrix)^{-1}\symm\labelvector
\end{eqnarray*}
}be solutions of (\ref{rlsregr}) with the symmetric Kronecker kernel and label vector $\labelvector$ and with the ordinary Kronecker kernel and label vector $\symm\labelvector$, respectively. Using identity (\ref{identitysymm}), we get
{\setlength\arraycolsep{2pt}
\begin{eqnarray*}
\dualparvect
&=&(\symm(\overline{\kernelm}+\regparam\idmatrix)^{-1}\symm+\frac{1}{\regparam}\asymm)\labelvector\\
&=&(\overline{\kernelm}+\regparam\idmatrix)^{-1}\symm\labelvector+\frac{1}{\regparam}\asymm\labelvector\,.
\end{eqnarray*}
}In the last equality, we again used the fact that $\symm$ commutes with $(\overline{\kernelm}+\regparam\idmatrix)^{-1}$, because it commutes with both $\overline{\kernelm}$ and $\idmatrix$. Let $(\node,\node')$ be a new couple of nodes for which we are supposed to do a prediction with a regressor determined by the coefficients~$\dualparvect$. Moreover, let $\bm{k_{\node}},\bm{k_{\node'}}\in\mathbb{R}^\nodecount$ denote, respectively, the base kernel $\kernelf^\phi$ evaluations of the nodes $\node$ and $\node'$ with the nodes in the training data. Then, $\bm{k_{v}}\otimes\bm{k_{v'}}\in\mathbb{R}^\edgecount$ contains the Kronecker kernel $\kernelf^\Phi_\otimes$ evaluations of the edge $(\node,\node')$ with all edges in the training data. Further, according to Proposition~\ref{symmasymmkmprop}, the corresponding vector of symmetric Kronecker kernel evaluations is $\symm(\bm{k_v}\otimes\bm{k_{v'}})$. Now, the prediction for the couple $(\node,\node')$ can be expressed as
{\setlength\arraycolsep{2pt}
\begin{eqnarray}
\nonumber
(\bm{k_v}\otimes\bm{k_{v'}})\transpose\symm\dualparvect
&=&(\bm{k_v}\otimes\bm{k_{v'}})\transpose\symm((\overline{\kernelm}+\regparam\idmatrix)^{-1}\symm\labelvector+\frac{1}{\regparam}\asymm\labelvector)\\
\nonumber
&=&(\bm{k_v}\otimes\bm{k_{v'}})\transpose\symm(\overline{\kernelm}+\regparam\idmatrix)^{-1}\symm\labelvector\\
\label{thisvanishes}
&&+\frac{1}{\regparam}(\bm{k_v}\otimes\bm{k_{v'}})\transpose\symm\asymm\labelvector\\
\nonumber
&=&(\bm{k_v}\otimes\bm{k_{v'}})\transpose\symm(\overline{\kernelm}+\regparam\idmatrix)^{-1}\symm\labelvector\\
\nonumber
&=&(\bm{k_v}\otimes\bm{k_{v'}})\transpose\symm\bm{b}\,,
\end{eqnarray}
}where term (\ref{thisvanishes}) vanishes due to (\ref{symmorthog}). The analogous result for the reciprocal Kronecker kernel can be shown in a similar way.
\end{proof}
As a consequence of this, we also have a computationally efficient method for RLS regression with symmetric and reciprocal Kronecker kernels. Encoding the properties into the label matrix ensures that the corresponding variations of the Kronecker kernels are implicitly used.

\subsection{Conditional Ranking with Symmetric and Reciprocal Kernels}\label{rankalgsect}

Now, we show how loss function (\ref{singlecentloss}) can be represented in matrix form. This representation is similar to the RankRLS loss introduced by \citet{pahikkala2009preferences}. Let
\begin{equation}\label{centeringmatrix}
\centerm^\nodedegree=\idmatrix-\frac{1}{l}\textbf{1}^{\nodedegree}{\textbf{1}^{\nodedegree}}\transpose\,,
\end{equation}
where $\nodedegree\in \mathbb{N}$, $\idmatrix$ is the $\nodedegree\times\nodedegree$-identity matrix, and $\textbf{1}^{\nodedegree}\in\mathbb{R}^\nodedegree$ is the vector of which every entry is equal to $1$, be the $\nodedegree\times\nodedegree$-centering matrix. The matrix $\centerm^\nodedegree$ is an idempotent matrix and multiplying it with a vector subtracts the mean of the vector entries from all elements of the vector. Moreover, the following equality can be shown
\begin{equation*}
\frac{1}{2l^2}\sum_{i,j=1}^{\nodedegree}(c_i-c_j)^2=\frac{1}{\nodedegree}\bm{c}\transpose\centerm^\nodedegree\bm{c} \,,
\end{equation*}
where $c_i$ are the entries of a vector $\bm{c}$. Now, let us consider the following quasi-diagonal matrix:
\begin{equation}\label{quasidiagmatrix}
\laplacian=
\left(
\begin{array}{ccc}
\centerm^{l_1}&&\\
&\ddots&\\
&&\centerm^{l_\nodecount}\\
\end{array}
\right)\,,
\end{equation}
where $\nodedegree_i$ is the number of edges starting from $\node_i$ for $i\in\{1,\ldots,\nodecount\}$. Again, given the assumption that the training data contains all possible edges between the nodes exactly once and hence $\nodedegree_i=\nodecount$ for all $1\leq i\leq\nodecount$, loss function (\ref{singlecentloss}) can be, up to multiplication with a constant, represented in matrix form as
\begin{eqnarray}\label{condrankmatrixform}
(\labelvector-\ol{\kernelm}\dualparvect)\transpose\laplacian(\labelvector-\ol{\kernelm}\dualparvect)\,,
\end{eqnarray}
provided that the entries of $\labelvector-\ol{\kernelm}\dualparvect$ are ordered in a way compatible with the entries of $\laplacian$, that is, the training edges are arranged according to their starting nodes.

Analogously to the regression case, the training phase corresponds to solving the following system of linear equations:
\begin{eqnarray}\label{linearsystem}
(\ol{\kernelm}\transpose\laplacian\ol{\kernelm}+\regparam\ol{\kernelm})\dualparvect
=\ol{\kernelm}\transpose\laplacian\labelvector\,.
\end{eqnarray}
If the ordinary Kronecker kernel is used, we get a result analogous to Corollary~\ref{regrcoro}.
\begin{corollary}\label{closedformprop}
A solution of (\ref{linearsystem}) can be computed in $O(\nodecount^3)$ time.
\end{corollary}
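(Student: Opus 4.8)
The plan is to mirror the reduction used for the regression system (\ref{notsimplifiedsystem})--(\ref{simplifiedsystem}) and then cast the resulting system in the shifted Kronecker product form (\ref{shiftedkron}), so that Proposition~\ref{shiftedlemma} applies directly. First I would exploit the structure of the training set: since every node has exactly $\nodecount$ outgoing edges, all diagonal blocks of $\laplacian$ in (\ref{quasidiagmatrix}) coincide with $\centerm^\nodecount$, and because the edges are ordered by their starting node (consistently with the column-stacking convention (\ref{vecdef}) that underlies $\ol{\kernelm}=\kernelm\otimes\kernelm$), this block structure is exactly $\laplacian=\idmatrix\otimes\centerm^\nodecount$, where $\idmatrix$ is the $\nodecount\times\nodecount$ identity acting on the starting-node index.

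Next, since $\kernelm$ is symmetric, $\ol{\kernelm}$ is symmetric and the coefficient matrix of (\ref{linearsystem}) factors as $\ol{\kernelm}\transpose\laplacian\ol{\kernelm}+\regparam\ol{\kernelm}=\ol{\kernelm}(\laplacian\ol{\kernelm}+\regparam\idmatrix)$, while the right-hand side is $\ol{\kernelm}\laplacian\labelvector$. Exactly as in the passage from (\ref{notsimplifiedsystem}) to (\ref{simplifiedsystem}), any solution of the simplified system
\begin{equation*}
(\laplacian\ol{\kernelm}+\regparam\idmatrix)\dualparvect=\laplacian\labelvector
\end{equation*}
is also a solution of (\ref{linearsystem}), so it suffices to solve this one. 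Using the mixed-product rule together with $\laplacian=\idmatrix\otimes\centerm^\nodecount$, I would rewrite $\laplacian\ol{\kernelm}=(\idmatrix\otimes\centerm^\nodecount)(\kernelm\otimes\kernelm)=\kernelm\otimes(\centerm^\nodecount\kernelm)$, turning the simplified system into the shifted Kronecker product system
\begin{equation*}
(\kernelm\otimes(\centerm^\nodecount\kernelm)+\regparam\idmatrix)\dualparvect=\ve(\labelmatrix)\,,
\end{equation*}
with $\ve(\labelmatrix)=\laplacian\labelvector$.

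The final step is to invoke Proposition~\ref{shiftedlemma} with $\anymatrix=\kernelm$ and $\othermatrix=\centerm^\nodecount\kernelm$. The matrix $\kernelm$ is symmetric positive semidefinite, hence diagonalizable with nonnegative eigenvalues, just as in Corollary~\ref{regrcoro}. The eigenvalues of $\kernelm\otimes(\centerm^\nodecount\kernelm)$ are products of those of the two factors and hence nonnegative, so adding $\regparam>0$ keeps every eigenvalue positive and the required inverse exists; Proposition~\ref{shiftedlemma} then delivers the solution in $O(\nodecount^3)$ time.

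The one point that is not completely routine — and the main obstacle — is that $\othermatrix=\centerm^\nodecount\kernelm$ is in general \emph{not} symmetric, so the diagonalizability hypothesis of Proposition~\ref{shiftedlemma} is not automatic. I would resolve this by noting that $\centerm^\nodecount$ is itself symmetric positive semidefinite (an orthogonal projection), so $\centerm^\nodecount\kernelm$ is a product of two symmetric positive semidefinite matrices; such a product is always diagonalizable over the reals with nonnegative eigenvalues, a classical fact that can be seen by relating $\centerm^\nodecount\kernelm$ to the symmetric positive semidefinite matrix $\kernelm^{1/2}\centerm^\nodecount\kernelm^{1/2}$, with which it shares all nonzero eigenvalues. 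This guarantees both that Proposition~\ref{shiftedlemma} is applicable and that the eigendecomposition of the $\nodecount\times\nodecount$ matrix $\centerm^\nodecount\kernelm$ can be computed in $O(\nodecount^3)$ time, which together with the $O(\nodecount^3)$ cost of the remaining Kronecker manipulations yields the claimed bound.
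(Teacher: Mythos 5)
Your proposal is correct and follows the same overall route as the paper: identify $\laplacian=\idmatrix\otimes\centerm^\nodecount$ under the complete-graph assumption, cancel one factor of $\ol{\kernelm}$ to obtain a shifted Kronecker system, and invoke Proposition~\ref{shiftedlemma} with one factor being a product of $\kernelm$ and the centering matrix. There are, however, two genuine differences. First, the ordering of the reduction: you arrive at $(\kernelm\otimes\centerm^\nodecount\kernelm+\regparam\idmatrix)\dualparvect=\laplacian\labelvector$, whereas the paper reduces to $(\kernelm\otimes\kernelm\centerm^\nodecount+\regparam\idmatrix)\dualparvect=\laplacian\labelvector$. Your ordering is the cleaner one: left-multiplying your reduced system by $\ol{\kernelm}$ literally reproduces (\ref{linearsystem}), while with the paper's ordering (where $\laplacian$ sits to the right of $\ol{\kernelm}$, and the two need not commute) one only recovers (\ref{linearsystem}) after an additional projection of the solution by $\laplacian$. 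Since $\centerm^\nodecount\kernelm$ and $\kernelm\centerm^\nodecount$ are transposes of one another, the subsequent eigen-analysis and the $O(\nodecount^3)$ cost are identical in both variants. Second, the treatment of a singular kernel matrix: the paper applies Proposition~\ref{shiftedlemma} only when $\kernelm$ is strictly positive definite (citing Horn and Johnson for diagonalizability of $\kernelm\centerm^\nodecount$), and for merely positive semi-definite $\kernelm$ it switches to a primal formulation with an empirical kernel map, explicitly omitting the details; you instead handle both cases uniformly via the fact that a product of two symmetric positive semi-definite matrices is diagonalizable with nonnegative eigenvalues. That fact is true, and it makes your argument more self-contained than the paper's at this point.

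One caveat on that last step: your one-line justification is weaker than what is needed when $\kernelm$ is singular. That $\centerm^\nodecount\kernelm$ shares its nonzero eigenvalues with the positive semi-definite matrix $\kernelm^{1/2}\centerm^\nodecount\kernelm^{1/2}$ does not by itself imply diagonalizability; the similarity transformation behind this intuition uses $\kernelm^{-1/2}$ and hence only works in the strictly positive definite case (which is exactly the case the paper covers). A complete argument needs two additional ingredients: (i) for $X=\centerm^\nodecount\kernelm^{1/2}$ and $Y=\kernelm^{1/2}$, the products $XY=\centerm^\nodecount\kernelm$ and $YX=\kernelm^{1/2}\centerm^\nodecount\kernelm^{1/2}$ have identical Jordan structure at all \emph{nonzero} eigenvalues (Flanders' theorem), so those blocks are trivial; and (ii) a rank count, $\mathrm{rank}(\centerm^\nodecount\kernelm)=\mathrm{rank}(\centerm^\nodecount\kernelm^{1/2})=\mathrm{rank}(\kernelm^{1/2}\centerm^\nodecount\kernelm^{1/2})$, which rules out nontrivial Jordan blocks at the eigenvalue zero. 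With these two observations supplied, your proof is complete and in fact covers the case the paper leaves unproven.
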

\begin{proof}
Given that $\nodedegree_i=\nodecount$ for all $1\leq i\leq\nodecount$ and that the ordinary Kronecker kernel is used, matrix (\ref{quasidiagmatrix}) can be written as $(\idmatrix\otimes\centerm^\nodecount)$ and the system of linear equations (\ref{linearsystem}) becomes:
\begin{eqnarray*}
\begin{array}{l}
((\kernelm\otimes\kernelm)(\idmatrix\otimes\centerm^\nodecount)(\kernelm\otimes\kernelm)+\regparam\kernelm\otimes\kernelm)\dualparvect\\
=(\kernelm\otimes\kernelm)(\idmatrix\otimes\centerm^\nodecount)\labelvector \,.
\end{array}
\end{eqnarray*}
While the kernel matrix $\kernelm\otimes\kernelm$ is not necessarily invertible, a solution can still be obtained from the following reduced form:
\begin{eqnarray*}
((\kernelm\otimes\kernelm)(\idmatrix\otimes\centerm^\nodecount)+\regparam\idmatrix)\dualparvect=(\idmatrix\otimes\centerm^\nodecount)\labelvector \,.
\end{eqnarray*}
This can, in turn, be rewritten as
\begin{eqnarray}\label{ranklinsystem}
(\kernelm\otimes\kernelm\centerm^\nodecount+\regparam\idmatrix)\dualparvect=(\idmatrix\otimes\centerm^\nodecount)\labelvector \,.
\end{eqnarray}
The matrix $\centerm^\nodecount$ is symmetric, and hence if $\kernelm$ is strictly positive definite,
the product $\kernelm\centerm^\nodecount$ is diagonalizable and has nonnegative eigenvalues (see e.g. \cite[p. 465]{horn1985matrixanalysis}). Therefore, (\ref{ranklinsystem}) is of the form which can be solved in $O(\nodecount^3)$ time due to Proposition~\ref{shiftedlemma}. The situation is more involved if $\kernelm$ is positive semi-definite. In this case, we can solve the so-called primal form with an empirical kernel map (see e.g. \citet{airola2011nystromcv}) instead of (\ref{ranklinsystem}) and again end up with a Kronecker system solvable in $O(\nodecount^3)$ time. We omit the details of this consideration due to its lengthiness and technicality.
\end{proof}

\subsection{Conjugate Gradient-Based Training Algorithms}\label{conjugatesection}

Interestingly, if we use the symmetric or reciprocal Kronecker kernel for conditional ranking, we do not have a similar efficient closed-form solution as those indicated by Corollaries~\ref{regrcoro}~and~\ref{closedformprop}. The same concerns both regression and ranking if the above assumption of the training data having every possible edge between all nodes encountered in the training data (i.e. $\nodedegree_i=\nodecount$ for all $1\leq i\leq\nodecount$) is dropped. Fortunately, we can still design algorithms that take advantage of the special structure of the kernel matrices and the loss function in speeding up the training process, while they are not as efficient as the above-described closed-form solutions.

Before proceeding, we introduce some extra notation. Let $\bookkeepmat\in\{0,1\}^{\edgecount\times\nodecount^2}$ be a bookkeeping matrix of the training data, that is, its rows and columns are indexed by the edges in the training data and the set of all possible pairs of nodes, respectively. Each row of $\bookkeepmat$ contains a single nonzero entry indicating to which pair of nodes the edge corresponds. This matrix covers both the situation in which some of the possible edges are not in the training data and the one in which there are several edges adjacent to the same nodes. Objective function (\ref{specialcaseobjfun}) can be written as
\begin{equation*}
\lossfunction(\labelvector,\bookkeepmat\ol{\kernelm}\dualparvect)
+\regparam\dualparvect\transpose\overline{\kernelm}\dualparvect
\end{equation*}
with the ordinary Kronecker kernel and analogously with the symmetric and reciprocal kernels. Note that the number of dual variables stored in vector $\dualparvect$ is still equal to $\nodecount^2$, while the number of labels in $\labelvector$ is equal to $\edgecount$. If an edge is not in the training data, the corresponding entry in $\dualparvect$ is zero, and if a particular edge occurs several times, the corresponding entry is the sum of the corresponding variables $a_\edge$ in representation~(\ref{eq:primalmodel}).
For the ranking loss, the system of linear equations to be solved becomes
\begin{eqnarray*}
(\ol{\kernelm}\bookkeepmat\transpose\laplacian\bookkeepmat\ol{\kernelm}+\regparam\ol{\kernelm})\dualparvect
=\ol{\kernelm}\bookkeepmat\transpose\laplacian\labelvector\,.
\end{eqnarray*}
If we use an identity matrix instead of $\laplacian$ in (\ref{condrankmatrixform}), the system corresponds to the regression loss.

To solve the above type of linear systems, we consider an approach based on conjugate gradient type of methods with early stopping regularization. The Kronecker product $(\kernelm\otimes\kernelm)\anyvector$ can be written as $\ve(\kernelm\bm{V}\kernelm)$, where $\anyvector=\ve(\bm{V})\in\mathbb{R}^{\nodecount^2}$ and $\bm{V}\in\mathbb{R}^{\nodecount\times\nodecount}$. Computing this product is cubic in the number of nodes. Moreover, multiplying a vector with the matrices $\symm$ or $\asymm$ does not increase the computational complexity, because they contain only $O(\nodecount^2)$ nonzero elements. Similarly, the matrix $\bookkeepmat$ has only $\edgecount$ non-zero elements. Finally, we observe from (\ref{centeringmatrix}) and (\ref{quasidiagmatrix}) that the matrix $\laplacian$ can be written as $\laplacian=\idmatrix-\slrmatrix\slrmatrix\transpose$, where $\slrmatrix\in\mathbb{R}^{\edgecount\times\nodecount}$ is the following
quasi-diagonal matrix:
\begin{equation}\label{slrdef}
\slrmatrix=
\left(
\begin{array}{ccc}
\frac{1}{\sqrt{l_1}}\textbf{1}^{l_1}&&\\
&\ddots&\\
&&\frac{1}{\sqrt{l_\nodecount}}\textbf{1}^{l_\nodecount}\\
\end{array}
\right)\,.
\end{equation}
The matrices $\idmatrix$ and $\slrmatrix$ both have $O(\edgecount)$ nonzero entries, and hence multiplying a vector with the matrix $\laplacian$ can also be performed in $O(\edgecount)$ time.

Conjugate gradient methods require, in the worst case, $O(\nodecount^4)$ iterations in order to solve the system of linear equations (\ref{linearsystem}) under consideration. However, the number of iterations required in practice is a small constant, as we will show in the experiments. In addition, since using early stopping with gradient-based methods has a regularizing effect on the learning process (see e.g. \citet{engl1996reginvprob}), this approach can be used instead of or together with Tikhonov regularization.

\subsection{Theoretical Considerations}

Next, we give theoretical insights to back the idea of using RankRLS-based learning methods instead of ordinary RLS regression. As observed in Section~\ref{rankalgsect}, the main difference between RankRLS and the ordinary RLS is that RankRLS enforces the learned models to be block-wise centered, that is, the aim is to learn models that, for each node $\node$, correctly predict the differences between the utility values of the edges $(\node,\node')$ and $(\node,\node'')$, rather than the utility values themselves. This is common for most of the pairwise learning to rank algorithms, since learning the individual utility values is, in ranking tasks, relevant only with relation to other utility values. Below, we consider whether the block-wise centering approach actually helps in achieving this aim. This is done via analyzing the regression performance of the utility value differences.

\newcommand{\objfunregr}{J}
\newcommand{\objfunrank}{F}
\newcommand{\objfunaltrank}{W}
\newcommand{\errbound}{B}

We start by considering the matrix forms of the objective functions of the ordinary RLS regression
\begin{equation}\label{origsetting}
\objfunregr(\dualparvect)=(\bm{y}-\overline{\kernelm}\dualparvect)\transpose(\bm{y}-\overline{\kernelm}\dualparvect)+\regparam\dualparvect\transpose\overline{\kernelm}\dualparvect
\end{equation}
and RankRLS for conditional ranking
\begin{equation}\label{origrankrls}
\objfunrank(\dualparvect)=(\labelvector-\overline{\kernelm}\dualparvect)\transpose\laplacian(\labelvector-\overline{\kernelm}\dualparvect)+\regparam\dualparvect\transpose\overline{\kernelm}\dualparvect\,,
\end{equation}
where $\laplacian\in\mathbb{R}^{\edgecount\times\edgecount}$ is a quasi-diagonal matrix whose diagonal blocks are $\nodecount\times\nodecount$-centering matrices. Here we make the further assumption that the label vector is block-wise centered, that is, $\labelvector=\laplacian\labelvector$. We are always free to make this assumption with conditional ranking tasks.


The following lemma indicates that we can consider the RankRLS problem as an ordinary RLS regression problem with a modified kernel.
\begin{lemma}\label{rankrlsasregressionlemma}
Objective functions (\ref{origrankrls}) and
\begin{equation}\label{rankrlsasakernel}
\objfunaltrank(\dualparvect)=(\labelvector-\laplacian\overline{\kernelm}\laplacian\dualparvect)\transpose(\labelvector-\laplacian\overline{\kernelm}\laplacian\dualparvect)+\regparam\dualparvect\transpose\laplacian\overline{\kernelm}\laplacian\dualparvect
\end{equation}
have a common minimizer.
\end{lemma}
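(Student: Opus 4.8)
The plan is to exhibit a single vector $\dualparvect^*$ that is simultaneously a minimizer of $\objfunrank$ and of $\objfunaltrank$. Both objectives are convex quadratics: their Hessians $2(\overline{\kernelm}\laplacian\overline{\kernelm}+\regparam\overline{\kernelm})$ and $2(\laplacian\overline{\kernelm}\laplacian\overline{\kernelm}\laplacian+\regparam\laplacian\overline{\kernelm}\laplacian)$ are positive semidefinite, since $\overline{\kernelm}$ is symmetric positive semidefinite and $\laplacian$, being block-diagonal with centering-matrix blocks, is symmetric, idempotent ($\laplacian\transpose=\laplacian$, $\laplacian\laplacian=\laplacian$) and hence itself positive semidefinite. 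Consequently every stationary point of either objective is a global minimizer, so it suffices to find one $\dualparvect^*$ that zeroes both gradients. Throughout I would lean on these three algebraic facts together with the standing centering assumption $\labelvector=\laplacian\labelvector$.

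Setting the two gradients to zero yields the normal equations $(\overline{\kernelm}\laplacian\overline{\kernelm}+\regparam\overline{\kernelm})\dualparvect=\overline{\kernelm}\laplacian\labelvector$ for $\objfunrank$, which I will call (A), and $(\laplacian\overline{\kernelm}\laplacian\overline{\kernelm}\laplacian+\regparam\laplacian\overline{\kernelm}\laplacian)\dualparvect=\laplacian\overline{\kernelm}\labelvector$ for $\objfunaltrank$, which I will call (B). Because $\overline{\kernelm}$ may be singular, these two systems each possess an entire affine family of solutions and are not literally the same, so the argument cannot proceed by matching them. Instead I would single out the natural representative $\dualparvect^*=(\laplacian\overline{\kernelm}\laplacian+\regparam\idmatrix)^{-1}\labelvector$, which is well defined because $\laplacian\overline{\kernelm}\laplacian$ is positive semidefinite and $\regparam>0$. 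Left-multiplying its defining identity $(\laplacian\overline{\kernelm}\laplacian+\regparam\idmatrix)\dualparvect^*=\labelvector$ by $\laplacian\overline{\kernelm}\laplacian$ shows at once that $\dualparvect^*$ solves (B); this is exactly the reduction used in passing from (\ref{notsimplifiedsystem}) to (\ref{simplifiedsystem}).

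The crux is then to verify that this same $\dualparvect^*$ also solves (A). The enabling observation is that $\dualparvect^*$ lies in the range of the projection $\laplacian$, i.e.\ $\laplacian\dualparvect^*=\dualparvect^*$: applying $\idmatrix-\laplacian$ to the defining identity and using $(\idmatrix-\laplacian)\laplacian\overline{\kernelm}\laplacian=\bm{0}$ (idempotence) together with $(\idmatrix-\laplacian)\labelvector=\bm{0}$ (the centering assumption) leaves $\regparam(\idmatrix-\laplacian)\dualparvect^*=\bm{0}$, whence $\dualparvect^*=\laplacian\dualparvect^*$. Substituting $\overline{\kernelm}\dualparvect^*=\overline{\kernelm}\laplacian\dualparvect^*$ into the left-hand side of (A) and factoring $\overline{\kernelm}$ out on the left turns it into $\overline{\kernelm}(\laplacian\overline{\kernelm}\laplacian\dualparvect^*+\regparam\dualparvect^*)=\overline{\kernelm}\labelvector$, where the last equality is the defining identity of $\dualparvect^*$; finally $\overline{\kernelm}\labelvector=\overline{\kernelm}\laplacian\labelvector$ by the centering assumption, which is precisely the right-hand side of (A). Hence $\dualparvect^*$ satisfies both normal equations and is a common minimizer.

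The step I expect to be the main obstacle is conceptual rather than computational. One is tempted to argue that the two normal equations share their solutions, but rank-deficiency of $\overline{\kernelm}$ makes that false in general; the whole difficulty is to choose the correct minimizer and to recognise that the centering hypothesis $\labelvector=\laplacian\labelvector$ pins $\dualparvect^*$ into $\mathrm{range}(\laplacian)$. That containment is what reconciles the mismatched regularizers $\dualparvect\transpose\overline{\kernelm}\dualparvect$ and $\dualparvect\transpose\laplacian\overline{\kernelm}\laplacian\dualparvect$, as well as the mismatched data-fit terms; once it is in hand, the remaining manipulations are routine consequences of the symmetry and idempotence of $\laplacian$.
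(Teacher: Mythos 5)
Your proof is correct, and it arrives at exactly the same witness as the paper --- the vector $\dualparvect^*=(\laplacian\overline{\kernelm}\laplacian+\regparam\idmatrix)^{-1}\labelvector$ --- but by a genuinely different route. The paper starts from a closed-form solution of (\ref{origrankrls}), namely $(\laplacian\overline{\kernelm}+\regparam\idmatrix)^{-1}\laplacian\labelvector$, and transforms it into $(\laplacian\overline{\kernelm}\laplacian+\regparam\idmatrix)^{-1}\labelvector$ through a chain of identities that alternates the idempotence of $\laplacian$ with the push-through inversion identities of \citet{henderson1981dism} (moving $\laplacian$ through the inverse), invoking the centering assumption $\labelvector=\laplacian\labelvector$ at the last step; the resulting expression is then recognized as the standard minimizer of (\ref{rankrlsasakernel}). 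You instead verify stationarity of both objectives directly: your key lemma is that centering together with idempotence forces $\dualparvect^*$ into the range of the projector, $\laplacian\dualparvect^*=\dualparvect^*$, and this single containment reconciles the mismatched data-fit and regularization terms of the two normal equations. What your route buys: it is self-contained (no external inversion identities), it makes explicit the convexity argument that stationary points of these possibly degenerate quadratics are global minimizers --- a point the paper leaves implicit when it calls $(\laplacian\overline{\kernelm}+\regparam\idmatrix)^{-1}\laplacian\labelvector$ ``one of the solutions'' --- and it handles a rank-deficient $\overline{\kernelm}$ without the limit convention $\lim_{\epsilon\rightarrow 0^+}(\overline{\kernelm}+\epsilon\idmatrix)$ that the paper relies on elsewhere. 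What the paper's route buys: it is shorter, and the push-through manipulation is the same tool used in the paper's other closed-form derivations, so the proof slots naturally into the surrounding development. Your caveat that the two normal equations cannot simply be matched solution-for-solution because of the rank deficiency of $\overline{\kernelm}$ is correct and is a subtlety the paper does not spell out.
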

\begin{proof}
By repeatedly applying the idempotence of $\laplacian$ and the inversion identities of \cite{henderson1981dism}, one of the solutions of (\ref{origrankrls}) can be written as
\begin{eqnarray*}
\dualparvect
&=&(\laplacian\overline{\kernelm}+\regparam\idmatrix)^{-1}\laplacian\labelvector\\
&=&(\laplacian\laplacian\overline{\kernelm}+\regparam\idmatrix)^{-1}\laplacian\labelvector\\
&=&\laplacian(\laplacian\overline{\kernelm}\laplacian+\regparam\idmatrix)^{-1}\labelvector\\
&=&\laplacian(\laplacian\overline{\kernelm}\laplacian\laplacian+\regparam\idmatrix)^{-1}\labelvector\\
&=&(\laplacian\laplacian\overline{\kernelm}\laplacian+\regparam\idmatrix)^{-1}\laplacian\labelvector\\
&=&(\laplacian\overline{\kernelm}\laplacian+\regparam\idmatrix)^{-1}\labelvector\,,
\end{eqnarray*}
which is also a minimizer of (\ref{rankrlsasakernel}). 
\end{proof}
This lemma provides us a different perspective on RankRLS. Namely, if we have a prior knowledge that the underlying regression function to be learned is block-wise centered (i.e. we have a conditional ranking task), this knowledge is simply encoded into a kernel function, just like we do with the knowledge about the reciprocity and symmetry.

In the literature, there are many results (see e.g. \citet{Vito2005inverse} and references therein) indicating that the expected prediction error of the regularized least-squared based kernel regression methods obey the following type of probabilistic upper bounds. For simplicity, we only consider the regression error. Namely, for any $0<\eta<1$, it holds that
\begin{equation}\label{boundeq}
P\left[I[\widehat{\predfun}_{\regparam,\tset}]-\textnormal{inf}_{\predfun\in\funspace_\kernelf}I[\predfun] \leq \errbound(\regparam,\kernelf,\eta)\right] \geq 1-\eta\,.
\end{equation}
where $P[\cdot]$ denotes the probability, $I[\cdot]$ is the expected prediction error, $\widehat{\predfun}_{\regparam,\tset}$ is the prediction function obtained via regularized risk minimization on a training set $\tset$ and a regularization parameter $\regparam$, $\funspace_\kernelf$ is the RKHS associated to the kernel $\kernelf$, and $\errbound(\regparam,\kernelf,\eta)$ is a complexity term depending on the kernel, the amount of regularization, and the confidence level $\eta$.

According to Lemma~\ref{rankrlsasregressionlemma}, if the underlying regression function $y$ is block-wise centered, which is the case in the conditional ranking tasks, we can consider learning with conditional RankRLS as performing regression with a block-wise centered kernel, and hence the behaviour of RLS regression and RankRLS can be compared with each other under the framework given in (\ref{boundeq}). When comparing the two kernels, we first have to pay attention to the corresponding RKHS constructions $\funspace_\kernelf$. The RKHS of the original kernel is more expressive than that of the block-wise centered kernel, because the former is able to express functions that are not block-wise centered while the latter can not. However, since we consider conditional ranking tasks, this extra expressiveness is of no help and the terms $\textnormal{inf}_{\predfun\in\funspace_\kernelf}I[\predfun]$ are equal for the two kernels.

Next, we focus our attention on the complexity term. A typical example of the term is the one proposed by \citet{Vito2005inverse}, which is proportional to $\kappa=\sup_{\edge}\kernelf(\edge,\edge)$. Now, the quantity $\kappa$ is lower for the block-wise centered kernel than for the original one, and hence the former has tighter error bounds than the latter. This, in turn, indicates that RankRLS is indeed a more promising approach for learning to predict the utility value differences than the ordinary RLS. It would be interesting to extend the analysis from the regression error to the pairwise ranking error itself but the analysis is far more challenging and it is considered as an open problem by \citet{Vito2005inverse}.

\section{Links with Existing Ranking Methods}
\label{adaptingsection}

Examining the pairwise loss (\ref{eq:combloss}) reveals that there exists a quite
straightforward mapping from the task of conditional ranking to that
of traditional ranking. Relation graph edges are in this mapping
explicitly used for training and prediction.
In recent years, several algorithms for learning to rank have
been proposed, which can be used for conditional ranking, by interpreting the conditioning node as a query
(see e.g.
\citet{joachims2002kddclickdata,Freund2003,pahikkala2009preferences}).
The main application has been in information retrieval, where the examples are joint feature representations of queries and documents,
and preferences are induced only between documents connected to the
same query. One of the earliest and most successful of these methods
is the ranking support vector machine RankSVM \citep{joachims2002kddclickdata}, which optimizes the
pairwise hinge loss. Even much more closely related is the ranking
regularized least-squares method RankRLS \citep{pahikkala2009preferences}, previously proposed by some
of the present authors. The method is based on minimizing the pairwise
regularized squared loss and becomes equivalent to the algorithms
proposed in this article, if it is trained directly on the relation
graph edges.

What this means in practice is that when the training relation graph is
sparse enough, say consisting of only a few thousand edges, existing methods
for learning to rank can be used to train conditional ranking models.
In fact this is how we perform the rock-paper-scissors experiments, as
discussed in Section~\ref{rpssection}. However, if the training graph is dense, existing
methods for learning to rank are of quite limited use.

Let us assume a training graph that has $\nodecount$ nodes. Furthermore,
we assume that most of the edges in the graph are connected, meaning
that the number of edges is of the order $\nodecount^2$. Using a
learning algorithm that explicitly calculates the kernel matrix
for the edges would thus need to construct and store
a $\nodecount^2 \times \nodecount^2$ matrix, which is intractable already
when $\nodecount$ is less than thousand. When the standard Kronecker
kernel is used together with a linear kernel for the nodes, primal
training algorithms
(see e.g. \citet{Joachims2006})
could be used without forming the kernel matrix.
Assuming on average $d$ non-zero features per node, this would
result in having to form a data matrix with $\nodecount^2d^2$ non-zero
entries. Again, this would be both memory-wise and computationally infeasible
for relatively modest values of $\nodecount$ and $d$.

Thus, building practical algorithms for solving the conditional ranking task
requires computational shortcuts to avoid the above-mentioned space and
time complexities. The methods presented in this article are based on such shortcuts, because queries and objects come from the same domain, resulting in a special structure of the Kronecker product kernel and a closed-form
solution for the minimizer of the pairwise regularized squared loss.

\section{Experiments}
\label{experimentsection}
In the experiments we consider conditional ranking tasks on synthetic and real-world data in various application domains, illustrating different aspects of the generality of our approach. The first experiment considers a potential application in game playing, using the synthetic rock-paper-scissors data set, in which the underlying relation is both reciprocal and intransitive.
The task is to learn a model for ranking players according to their likelihood of winning against any
other player on whom the ranking is conditioned.
The second experiment considers a potential application in information retrieval, using the 20-newsgroups data set. Here the task consists of ranking documents according
to their similarity to any other document, on which the ranking is conditioned. The third experiment summarizes a potential application of identifying bacterial species in microbiology. The goal consists of retrieving a bipartite ranking for a given species, in which bacteria from the same species have to be ranked before bacteria from a different species.
On both the newsgroups and bacterial data we test the capability of the models
to generalize to such newsgroups or species that have not been observed during
training.

In all the experiments, we run both the conditional ranker that minimizes
the convex edgewise ranking loss approximation (\ref{singlecentloss}) and
the method that minimizes the regression loss (\ref{regrloss}) over the edges.
Furthermore, in the rock-paper-scissors experiment we also train a conditional ranker
with RankSVM.  For the 20-newsgroups and bacterial species data
this is not possible due to the large number of edges present in the relational
graph, resulting in too high memory requirements and computational costs for RankSVM
training to be practical. We use the Kronecker kernel
$\kernelf_{\otimes}^{\Phi}$ for edges in all the experiments.
We also test the
effects of enforcing domain knowledge by applying the reciprocal kernel
$\kernelf_{\otimes R}^{\Phi}$ in the rock-paper-scissors experiment, and applying the
symmetric kernel $\kernelf_{\otimes S}^{\Phi}$ in the 20-newsgroups
and bacterial data experiments.
The linear kernel is used for individual nodes
(thus, for $\kernelf^{\phi}$).
In all the experiments, performance is measured using the ranking loss~(\ref{eq:combloss}) on the test set.


We use a variety of approaches for minimizing the squared conditional ranking
and regression losses, depending on the characteristics of the task. All the
solvers based on optimizing the standard, or pairwise regularized least-squares
loss are from the RLScore software
package\footnote{Available
at \url{http://www.tucs.fi/RLScore}}. For the experiment where the training is performed
iteratively, we apply the biconjugate gradient stabilized method
(BGSM) \citep{Vandervorst1992}. The RankSVM based conditional ranker
baseline considered in the rock-paper-scissors experiment is trained with the
TreeRankSVM software \citep{airola2011ranksvm}.

\subsection{Game Playing: the Rock-Paper-Scissors Dataset}
\label{rpssection}

The synthetic benchmark data,
whose generation process is described in detail by \citet{Pahikkala2010},
consists of simulated games of the well-known game of rock-paper-scissors between pairs of players. The
training set contains the outcomes of 1000 games played between
100 players, the outcomes are labeled according to which of the
players won. The test set consists of another group of 100 players,
and for each pair of players the probability of the first player
winning against the second one.
Different players differ in how often they play each of the three
possible moves in the game. The data set can be considered as a
directed graph where players are nodes and edges played games,
the true underlying relation generating the data is in this case
reciprocal. Moreover, the relation is intransitive. It represents
the probability that one player wins against another player.
Thus, it is not meaningful to try to construct a global ranking of the
players. In contrast, conditional ranking is a sensible task, where players are ranked
according to their estimated probability of winning against a given
player.

We experiment with three different variations of the data set,
the $w1$, $w10$ and $w100$ sets. These data sets differ in how balanced the
strategies played by the players are. In w1 all the players have close
to equal probability of playing any of the three available moves, while
in $w100$ each of the players has a favorite strategy he/she will use
much more often than the other strategies. Both the training and test sets
in the three cases are generated one hundred times and the hundred ranking
results are averaged for each of the three cases and for every tested
learning method.

Since the training set consists of only one thousand games, it is
feasible to adapt existing ranking algorithm implementations for solving
the conditional ranking task. Each game is represented as two edges, labeled
as $+1$ if the edge starts from the winner, and as $-1$ if the
edge starts from the loser. Each node has only $3$ features, and
thus, the explicit feature representation where the Kronecker kernel
is used together with a linear kernel results in $9$ product features
for each edge. In addition, we generate an analogous feature representation
for the reciprocal Kronecker kernel. We use these generated feature
representations for the edges to train three algorithms.
RLS regresses directly the edge
scores, RankRLS minimizes pairwise regularized squared loss
on the edges, and RankSVM minimizes pairwise hinge loss on the edges.
For RankRLS and RankSVM, pairwise preferences are generated only between
edges starting from the same node.

In initial preliminary experiments we noticed that on this data set
regularization seemed to be harmful, with methods typically reaching
optimal performance for close to zero regularization parameter values.
Further, cross-validation as a parameter selection strategy appeared to
work very poorly, due to the small training set size and the large amount
of noise present in the training data. Thus, we performed the runs using
a fixed regularization parameter set to a close to zero value ($2^{-30}$).

\begin{table*}[t]
\begin{center}
\begin{small}
\begin{tabular}{cllllll}
\hline
 & c.reg & c.reg (r)& c.rank & c.rank (r) & RankSVM & RankSVM (r)\\
\hline
$w1$ & 0.4875 & 0.4868 & 0.4876 & 0.4880 & 0.4987 & 0.4891\\
$w10$ & 0.04172 & 0.04145 & 0.04519 & 0.04291 & 0.04535 & 0.04116\\
$w100$ & 0.001380 & 0.001366 & 0.001424 & 0.001354 & 0.006997 & 0.005824\\
\hline
\end{tabular}
\end{small}
\end{center}
\caption{Overview of the measured rank loss for rock-paper-scissors. The
 abbreviations c.reg and c.rank here refer to the RLS and RankRLS algorithm, respectively, and (r) refers to the use of a reciprocal Kronecker kernel instead of the ordinary Kronecker kernel.}
\label{fig:rpsresults}
\end{table*}

The results of the experiments for the fixed regularization parameter value are presented in Table~\ref{fig:rpsresults}.
Clearly, the methods are successful in learning conditional ranking models, and
the easier the problem is made, the better the performance is. For all the methods and data sets,
except for the conditional ranking method with $w1$ data, the pairwise ranking error is smaller when using the reciprocal
kernel. Thus enforcing prior knowledge about the properties of the true underlying relation appears to be beneficial.
On this data set, standard regression proves to be competitive with the pairwise ranking approaches. Similar results,
where regression approaches can yield an equally good, or even a lower ranking error than rank loss optimizers, are known
in the recent literature, see e.g. \citet{pahikkala2009preferences,Kotlowski2011}. Somewhat surprisingly, RankSVM loses to the other methods in all the experiments other than the $w10$ experiment with reciprocal kernel, with difference being
especially large in the $w100$ experiment. 

\begin{figure}
\label{fig:nosignal}
\centering
\begin{tabular}{cc}
\includegraphics[width=\linewidth]{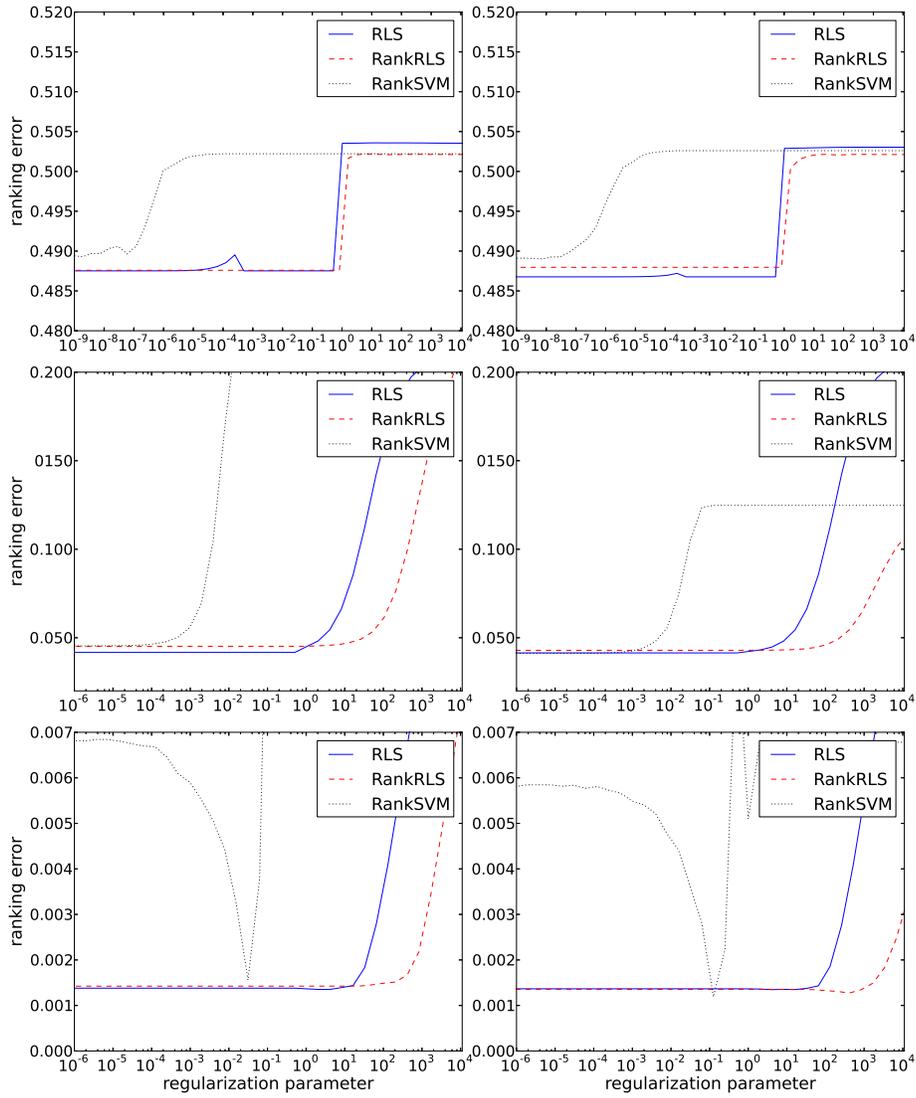}
\end{tabular}
\caption{Rock-paper-scissors data. Ranking test error as a function of
regularization parameter for the tested methods. Vertically: w1 (up), w10 (middle), w100 (bottom).
Horizontally: standard Kronecker kernel (left), reciprocal kernel (right).}
\label{fig:rpscurves}
\end{figure}

In order to have a more comprehensive view of the differences between the RLS, RankRLS and RankSVM results, we plotted the average test performance for the methods over the 100 repetitions of the experiments, for varying regularization parameter choices. The results are presented in Figure~\ref{fig:rpscurves}. For $w1$ and $w10$ data sets all the methods share a similar behaviour. The optimal ranking error can be reached for a range of smaller parameter values, until a point is reached where the error starts increasing. However, on $w100$ data sets RankSVM has quite a different type of behaviour\footnote{In order to ascertain that the difference was not simply caused by problems in the implementation or the underlying optimization library, we checked our results against those of the $\mathrm{SVM}^{rank}$ implementation available at \url{http://www.cs.cornell.edu/People/tj/svm_light/svm_rank.html}}. On this data, RankSVM can reach as good as, or even better performance than RLS or RankRLS, but only for a very narrow range of parameter values. Thus, for this data prior knowledge about the suitable parameter value would be needed in order to make RankSVM work, whereas the other approaches are more robust as long as the parameter is set to a fairly small value.

In conclusion, we have shown in this section
that highly intransitive relations can be modeled and successfully learned
in the conditional ranking setting. Moreover, we have shown that when
the relation graph of the training set is sparse enough, existing ranking
algorithms can be applied by explicitly using  the edges of the graph
as training examples. Further, the methods benefit from the use of the
reciprocal Kronecker kernel instead of the ordinary Kronecker kernel.
Finally, for this dataset it appears that a regression-based approach performs
as good as the pairwise ranking methods.

\begin{table*}[t]
\begin{center}
\begin{small}
\begin{tabular}{cllll}
	\hline
	 & Newsgr. 1 & Newsgr. 2 &  Bacterial 1 & Bacterial 2\\
	\hline
	c. rank & $0.2562$& $0.2895$&  0.1082 & 0.07631\\
	c. reg & $0.3685$ & $0.3967$ &  0.1084 & 0.07762\\
	\hline
\end{tabular}
\end{small}
\end{center}
\caption{Overview of the measured rank loss for the 20-Newsgroups and the bacterial
species ranking tasks in the large-scale experiments, where c.rank and c.reg
are trained using the closed-form solutions.}
\label{table:results}
\end{table*}

\subsection{Document Retrieval: the 20-Newsgroups Dataset}

In the second set of experiments we aim to learn to rank newsgroup documents
according to their similarity with respect to a document the ranking is
conditioned on. We use the publicly available 20-newsgroups data set\footnote{Available at: \url{http://people.csail.mit.edu/jrennie/20Newsgroups/}}
for the experiments. The data set consists of documents from 20 newsgroups, each
containing approximately 1000 documents, the document features are word frequencies.
Some of the newsgroups are considered
to have similar topics, such as the rec.sport.baseball, and
rec.sport.hockey newsgroups, which both contain messages about sports.
We define a three-level conditional ranking task. Given
a document, documents from the same newsgroup should be ranked the highest,
documents from similar newsgroups next, and documents from unrelated
newsgroups last. Thus, we aim to learn the conditional ranking model
from an undirected graph, and the underlying similarity relation is
a symmetric relation.
The setup is similar to that of \citet{Agarwal2006},
the difference is that we aim to learn a model for conditional
ranking instead of just ranking documents against a fixed newsgroup.

Since the training relation graph is complete, the number of edges
grows quadratically with the number of nodes. For $5000$ training nodes,
as considered in one of the experiments, this results already in a graph
of approximately 25 million edges. Thus, unlike in the previous
rock-paper-scissors experiment, training a ranking algorithm directly on the edges of the graph is no longer feasible. Instead, we solve the closed-form presented in Proposition~\ref{closedformprop}.
At the end of this section we also
present experimental results for the iterative conjugate gradient method, as
this allows us to examine the effects of early stopping, and enforcing symmetry on the prediction function.

In the first two experiments, where the closed-form solution is applied,
we assume a setting where the set of available newsgroups is not static,
but rather over time old newsgroups may wither
and die out, or new groups may be added. Thus, we cannot assume, when
seeing new examples, that we have seen documents from the same newsgroup
already when training our model.
We simulate this by selecting different newsgroups for testing than for
training. We form two disjoint sets of newsgroups. Set~1 contains
the messages from the newsgroups rec.autos, rec.sport.baseball,
comp.sys.ibm.pc.hardware and comp.windows.x, while set~2 contains the
messages from the newsgroups rec.motorcycles, rec.sport.hockey,
comp.graphics, comp.os.ms-windows.misc and comp.sys.mac.hardware.
Thus the graph formed by set~1 consists of approximately 4000
nodes, while the graph formed by set~2 contains
approximately 5000 nodes.
In the first experiment, set~1 is used for training and set~2 for
testing. In the second experiment, set~2 is used for training
and set~1 for testing. The regularization parameter is selected
by using half of the training newsgroups as a holdout set against
which the parameters are tested. When training the final model all
the training data is re-assembled.

The results for the closed-form solution experiments are presented in
Table~\ref{table:results}.
Both methods are successful in learning a conditional ranking model that
generalizes to new newsgroups which were not seen during the training
phase.
The method optimizing a ranking-based loss over the pairs
greatly outperforms the one regressing the values for the relations.




\begin{figure*}
\begin{center}
\includegraphics[width=0.8\linewidth]{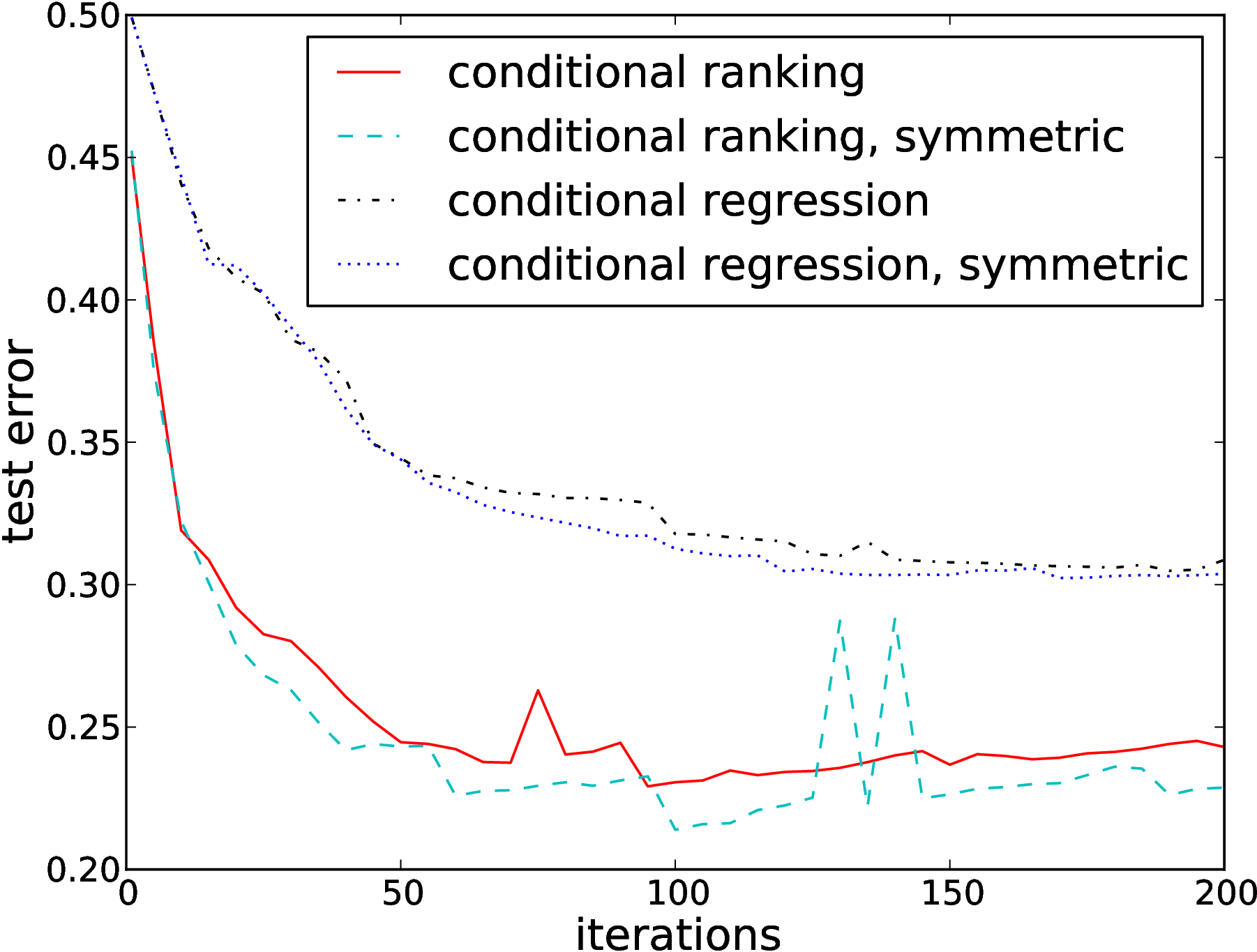}
\end{center}
\caption{Experimental results for the 20-Newsgroups data in the small-scale
experiment, in which all four models are learned using conjugate gradient
descent algorithms.}
\label{fig:curves}
\end{figure*}

Finally, we investigate whether enforcing the prior knowledge about the underlying
relation being symmetric is beneficial. In this final experiment we use
the iterative BGSM method, as it is compatible with the symmetric Kronecker kernel,
unlike the solution of Proposition~\ref{closedformprop}. The change
in setup results in an increased computational cost, since each iteration
of the BGSM method costs as much as using Proposition~\ref{closedformprop} to calculate
the solution. Therefore, we simplify the previous experimental setup by sampling a training
set of $1000$ nodes, and a test set of $500$ nodes from $4$ newsgroups.
The task is now easier than before, since the training and test sets
have the same distribution. All the methods are trained for $200$ iterations, and the test error
is plotted. We do not apply any regularization, but rather rely on the regularizing effect of early
stopping, as discussed in Section~\ref{algorithmsection}.

Figure~\ref{fig:curves} contains the performance curves. Again, we see that the
pairwise ranking loss quite clearly outperforms the regression loss. Using prior knowledge about the learned relation by enforcing symmetry leads to increased performance,
most notably for the ranking loss. The error rate curves are not monotonically
decreasing, but rather on some iterations the error may momentarily rise
sharply. This is due to the behaviour of the conjugate gradient optimization
scheme, which sometimes takes steps that lead further away from the optimal
solution. The performance curves
flatten out within the $200$ iterations, demonstrating the feasibility of
early stopping.

In conclusion, we have demonstrated various characteristics of our approach in the newsgroups experiments.
We showed that the introduced methods scale to training graphs that consist of tens of
millions of edges, each having a high-dimensional feature representation. We also showed the
generality of our approach, as it is possible to learn conditional ranking models even when the
test newsgroups are not represented in the training data, as long as data from similar newsgroups is available.
Unlike the earlier experiments on the rock-paper-scissors data, the pairwise loss yields a dramatic
improvement in performance compared to a regression-based loss. Finally,
enforcing prior knowledge about the type of the underlying relation with kernels was shown to be advantageous.

\subsection{Microbiology: Ranking Bacterial Species}

We also illustrate the potential of conditional ranking for multi-class
classification problems with a huge number of classes. For such problems it often happens that many classes are not represented in the training dataset, simply because no observations of these classes are known at the moment that the training dataset is constructed and the predictive model is learned. It speaks for itself that existing multi-class classification methods cannot make any correct predictions for observations of these classes, which might occur in the test set.

However, by reformulating the problem as a conditional ranking task, one is still capable of extracting some useful information for these classes during the test phase. The conditional ranking algorithms that we introduced in this article have the ability to condition a ranking on a target object that is unknown during the training phase. In a multi-class classification setting, we can condition the ranking on objects of classes that are not present in the training data. To this end, we consider bacterial species identification in microbiology.

In this application domain, one normally defines a multi-class classification problem with a huge number of classes as identifying bacterial species, given their fatty acid methyl ester (FAME) profile as input for the model \citep{Slabbinck2010FAME,MacLeod2010}. Here we reformulate this task as a conditional ranking task. For a given target FAME profile of a bacteria that is not necessarily present in the training dataset, the algorithm should rank all remaining FAME profiles of the same species higher than FAME profiles of
other species. For the most challenging scenario, none of these FAME profiles appears in the training dataset.

As a result, the underlying relational graph consists of two types of edges,
those connecting FAME profiles of identical species and those connecting FAME profiles of
different species. When conditioned on a single node, this setting realizes a
bipartite ranking problem, based on an underlying symmetric relation.

The data we used is described in more detail in
\citet{Slabbinck2010FAME}. Its original version consists of 955 FAME profiles, divided into 73
different classes that represent different bacterial species. A training set and
two separate test sets were formed as follows. The data points belonging
to the largest two classes were randomly divided between the training set, and
test set~1. Of the remaining smaller classes, 26 were included entirely in the
training set, and 27 were combined together to form test set~2. The difference
between the test sets was thus that FAME profiles from classes contained in test set~1 appear also in the training set, while this is not the
case for test set~2. The resulting set sizes were as follows. Training set: 473
nodes, test set~1: 308 nodes and test set~2: 174 nodes. Since the graphs are
fully connected, the number of edges grows quadratically with respect to
the number of nodes. The regularization parameter is chosen on a separate holdout set.

Due to the large number of edges, we train the rankers using the closed-form solution.
We also ran an experiment where we tested the effects of using
the symmetric Kronecker kernel, together with the iterative training algorithm.
In this experiment, using the symmetric Kronecker kernel leads to a very similar
performance as not using it, therefore we do not present these results separately.

Table~\ref{table:results} summarizes the resulting rank loss for the two different
test sets, obtained after training the conditional regression and ranking
methods using the closed-form solutions. Both methods are capable of training
accurate ranking models that can distinguish bacteria of the
same and different species groups, as the conditioning data points. Furthermore, comparing the results for test sets~1~and~2, we note that for this problem it is not
necessary to have bacteria from the same species present in both the test and training
sets, for the models to generalize. In fact, the test error on test set~2
is lower than the error on test set~1. The ranking-based loss function leads to
a slightly better test performance than regression.



\subsection{Bioinformatics: Functional ranking of enzymes}

As a last application we consider the problem of ranking a database of enzymes according to their catalytic similarity to a query protein. This catalytic similarity, which serves as the relation of interest, represents the relationship between enzymes w.r.t.\ their biological function. For newly discovered enzymes, this catalytic similarity is usually not known, so one can think of trying to predict it using machine learning algorithms and kernels that describe the structure-based or sequence-based similarity between enzymes. The Enzyme Commission (EC) functional classification is commonly used to subdivide enzymes into functional classes. EC numbers adopt a four-label hierarchical structure, representing different levels of catalytic detail.

We base the conditional rankings on the EC numbers of the enzymes, information which we assume to be available for the training data, but not at prediction time. This ground truth ranking can be deduced from the catalytic similarity (i.e.\, ground truth similarity) between the query and all database enzymes. To this end, we count the number of successive correspondences from left to right, starting from the first digit in the EC label of the query and the database enzymes, and stopping as soon as the first mismatch occurs. For example, an enzyme query with number EC~2.4.2.23 has a similarity of two with a database enzyme labeled EC~2.4.99.12, since both enzymes belong to the family of glycosyl transferases. The same query manifests a similarity of one with an enzyme labeled EC~2.8.2.23. Both enzymes are transferases in this case, but they show no further similarity in the chemistry of the reactions to be catalyzed. 

Our models were built and tested using a dataset of 1730 enzymes with known protein structures. All the enzyme structures had a resolution of at least 2.5~\AA, they had a binding site volume between 350 and 3500 $\text{\AA}^3$, and they were fully EC annotated. For evaluation purposes our database contained at least 20 observations for every EC number, leading to a total of 21 different EC numbers comprising members of all 6 top level codes. A heat map of the catalytic similarity of the enzymes is given in Figure~\ref{test_HM}. This catalytic similarity will be our relation of interest, constituting the output of the algorithm. As input we consider five state-of-the-art kernel matrices for enzymes, denoted cb (CavBase similarity), mcs (maximum common subgraph), lpcs (labeled point cloud superposition), wp (fingerprints) and wfp (weighted fingerprints). More details about the generation of these kernel matrices can be found in \cite{Stock2012monadic}. 

The dataset was randomized and split in four equal parts. Each part was withheld as a test set while the other three parts of the dataset were used for training and model selection. This process was repeated for each part so that every instance was used for training and testing (thus, four-fold outer cross-validation). In addition, a 10-fold inner cross validation loop was implemented for estimating the optimal regularization parameter $\lambda$, as recommended by~\cite{Varma2006}. The value of the hyperparameter was selected from a grid containing all the powers of 10 from $10^{-4}$ to $10^5$. The final model was trained using the whole training set and the median of the best hyperparameter values over the ten folds.

We benchmark our algorithms against an unsupervised procedure that is commonly used in bioinformatics for retrieval of enzymes. Given a specific enzyme query and one of the above similarity measures, a ranking is constructed by computing the similarity between the query and all other enzymes in the database. Enzymes having a high similarity to the query appear on top of the ranking, those exhibiting a low similarity end up at the bottom. More formally, let us represent the similarity between two enzymes by $K : \mathcal{V}^2  \rightarrow \mathbb{R}$, where $\mathcal{V}$ represents the set of all potential enzymes. Given the similarities $K(v,v')$ and $K(v,v'')$ we compose the ranking of $v'$ and $v''$  conditioned on the query $v$ as:
\begin{equation}
\label{eq:kernelranking}
v'  \succeq_v v'' \Leftrightarrow K(v,v') \geq K(v,v'')\,.
\end{equation}
This approach follows in principle the same methodology as a nearest neighbour classifier, but rather a ranking than a class label should be seen as the output of the algorithm.

\begin{table} [tb]
\begin{center}

\begin{footnotesize}
\begin{tabular}{llllll}
\hline
& cb & fp & wfp & mcs & lpcs\\
 \hline 
unsupervised & 0.0938 & 0.1185 & 0.1533 & 0.1077 & 0.1123 \\
c. reg & 0.0052 & 0.0050 & 0.0019 &
0.0054 & 0.0073 \\
c. rank & 0.0049 & 0.0050 & 0.0019 &
0.0056 & 0.0048 \\
 \end{tabular}
 \end{footnotesize}
\caption{A summary of the results obtained for the enzyme
ranking problem.}\label{results_unsup}
 \end{center}
 \end{table}

Table~\ref{results_unsup} gives a global summary of the results obtained for the different ranking approaches. All models score relatively well. One can observe that supervised ranking models outperform unsupervised ranking for all five kernels. Three important reasons can be put forward for explaining the improvement in performance. First of all, the traditional benefit of supervised learning plays an important role. One can expect that supervised ranking methods outperform unsupervised ranking methods, because they take ground-truth rankings into account during the training phase to steer towards retrieval of enzymes with a similar EC number. Conversely, unsupervised methods solely rely on the characterization of a meaningful similarity measure between enzymes, while ignoring EC numbers. 

Second, we also advocate that supervised ranking methods have the ability to preserve the hierarchical structure of EC numbers in their predicted rankings. Figure~\ref{test_HM} supports this claim. It summarizes the values used for ranking one fold of the test set obtained by the different models as well as the corresponding ground truth. So, for supervised ranking it visualizes the values $h(v,v')$, for unsupervised ranking it visualizes $K(v,v')$. Each row of the heatmap corresponds to one query. For the supervised models one notices a much better correspondence with the ground truth. Furthermore, the different levels of catalytic similarity can be better distinguished.

\begin{figure}[tb]
\includegraphics[width=0.24\textwidth]{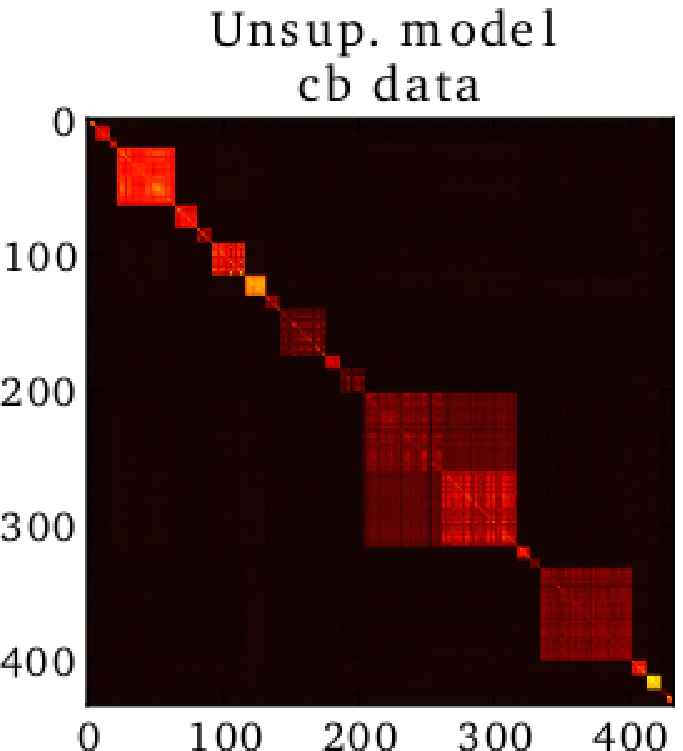}
\includegraphics[width=0.24\textwidth]{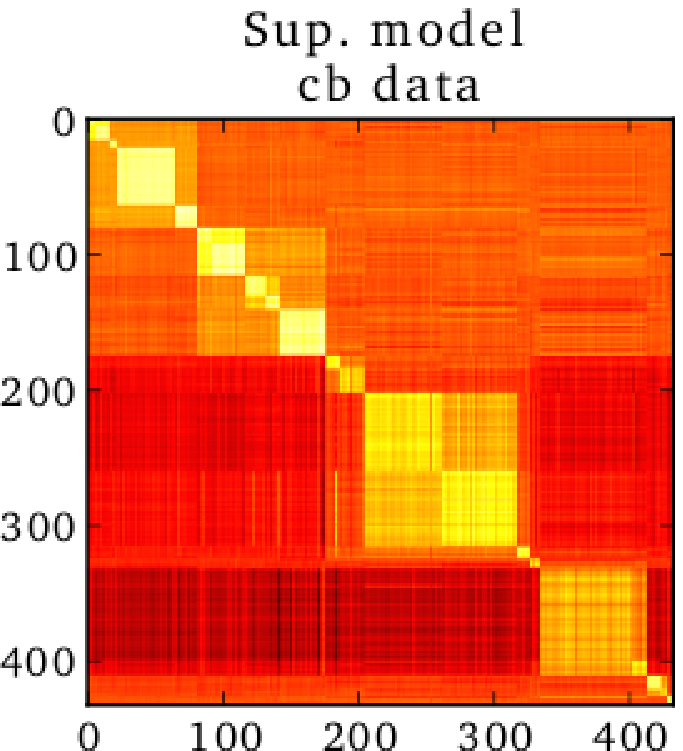}
\includegraphics[width=0.24\textwidth]{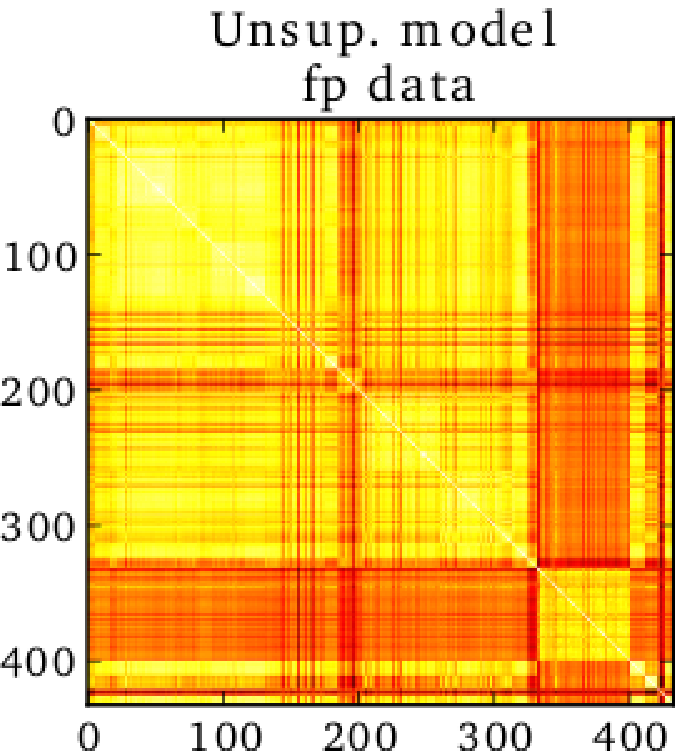}
\includegraphics[width=0.24\textwidth]{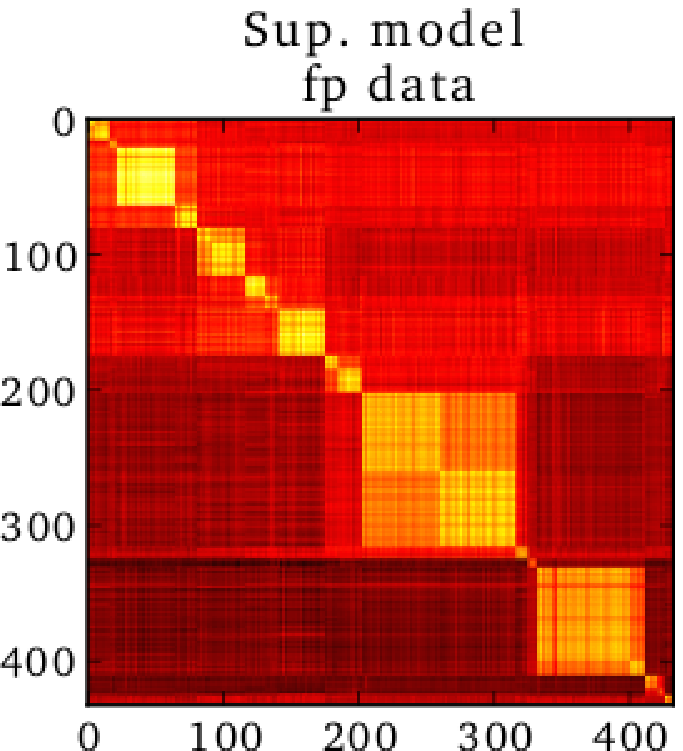}
\includegraphics[width=0.24\textwidth]{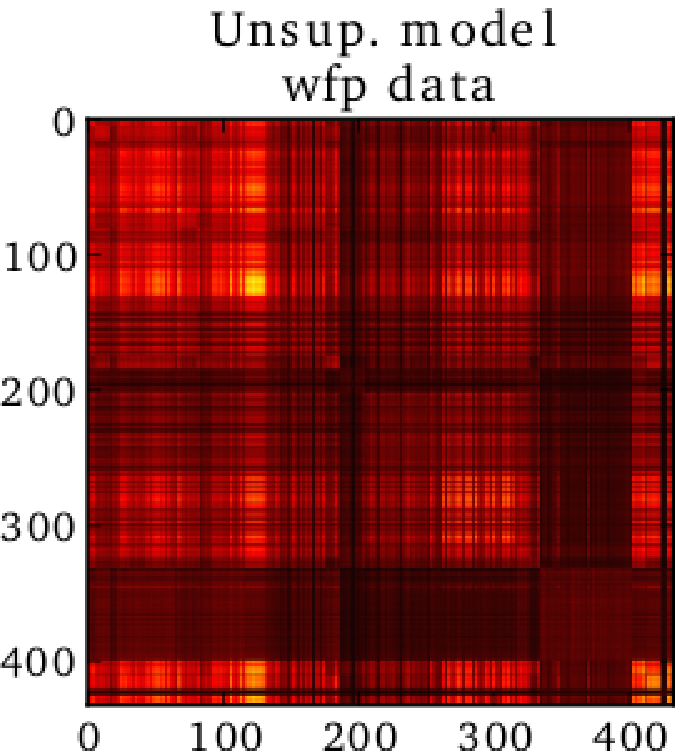}
\includegraphics[width=0.24\textwidth]{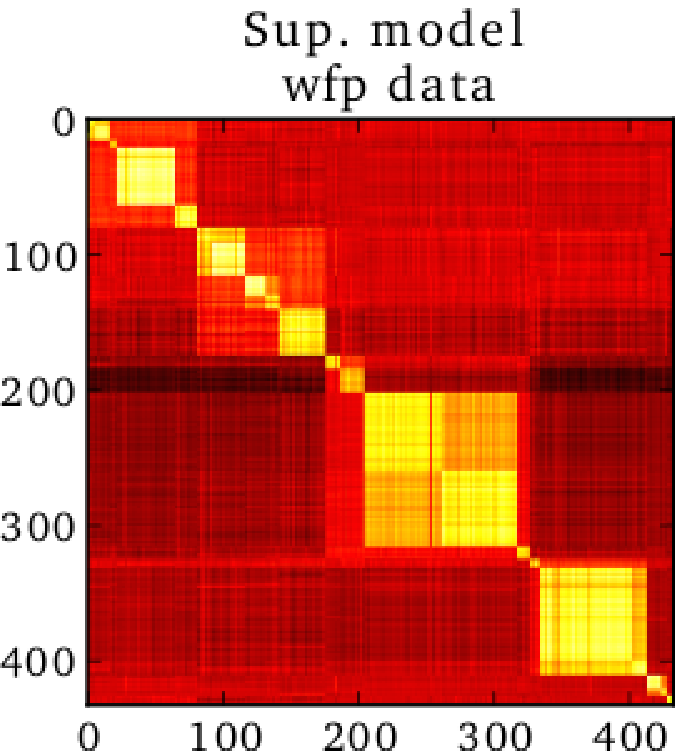}
\includegraphics[width=0.24\textwidth]{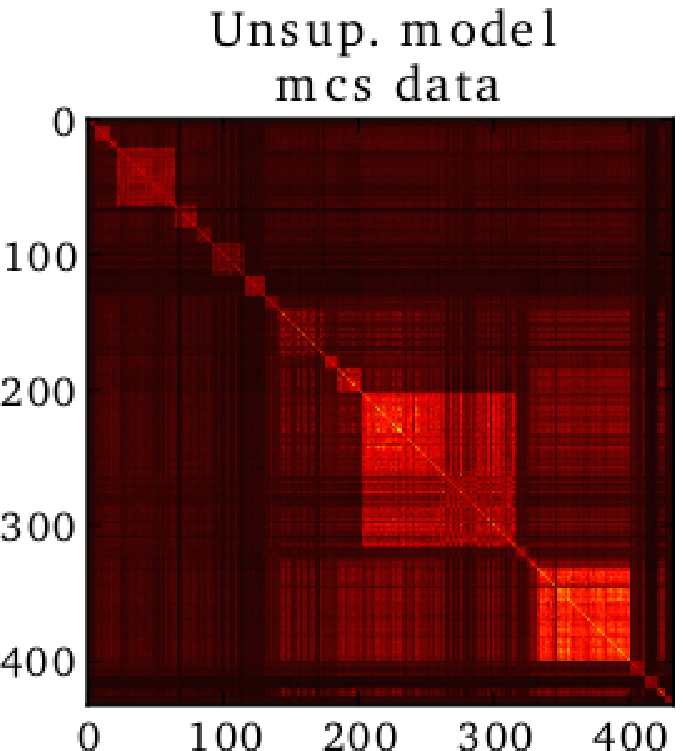}
\includegraphics[width=0.24\textwidth]{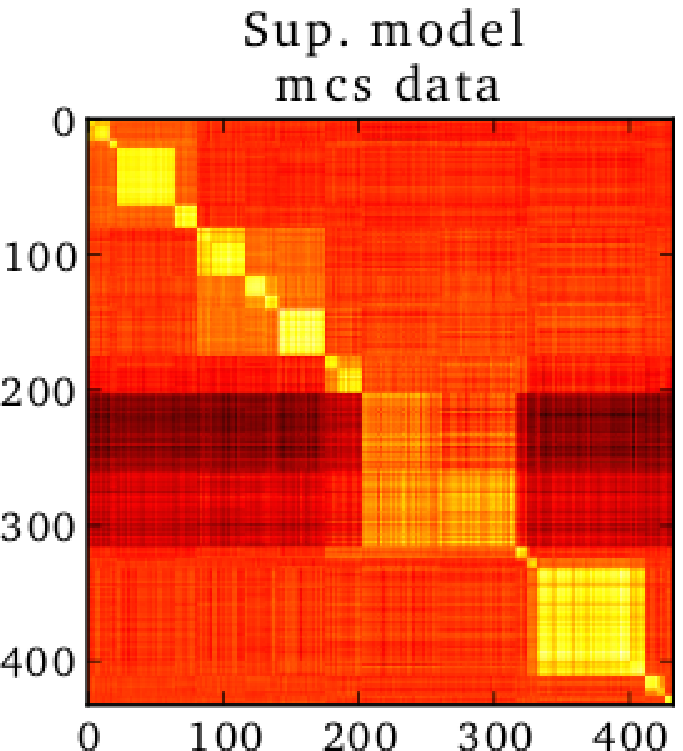}
\includegraphics[width=0.24\textwidth]{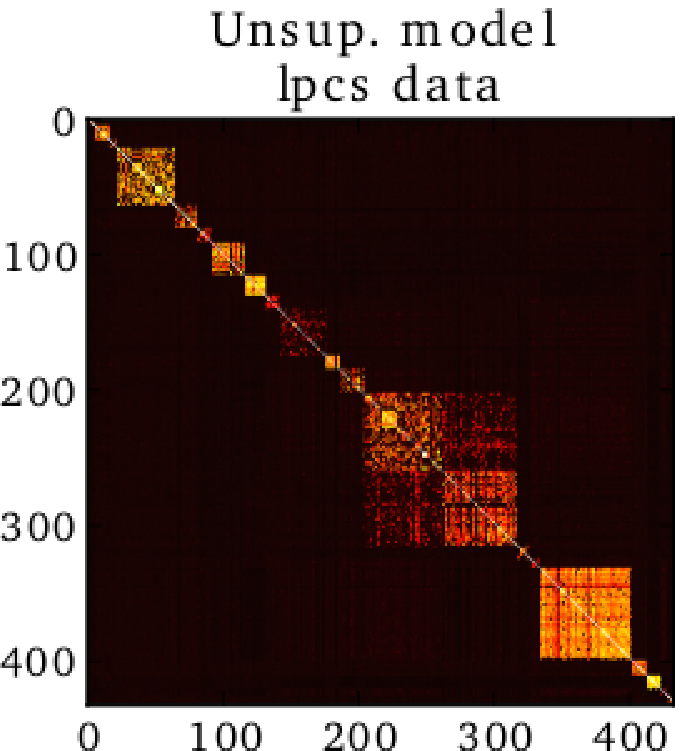}
\includegraphics[width=0.24\textwidth]{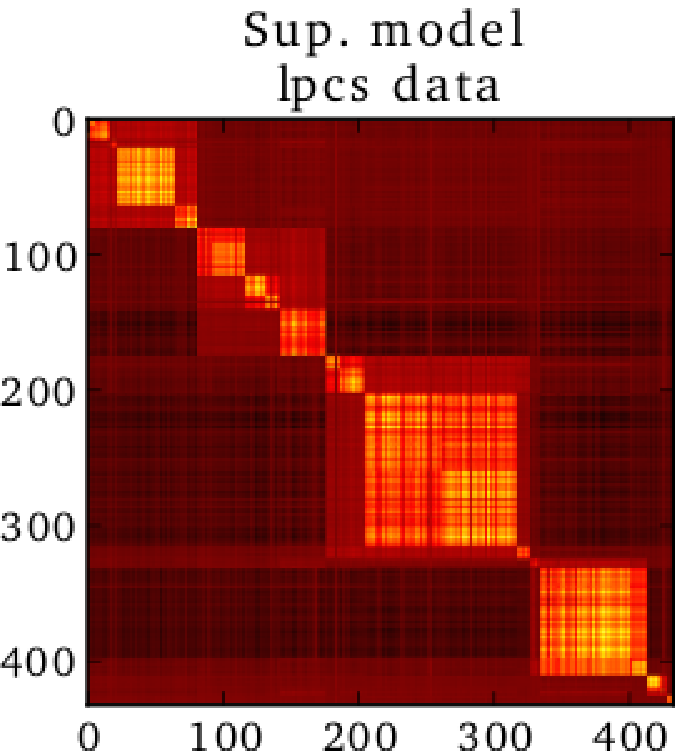}
\hfill\includegraphics[width=0.24\textwidth]{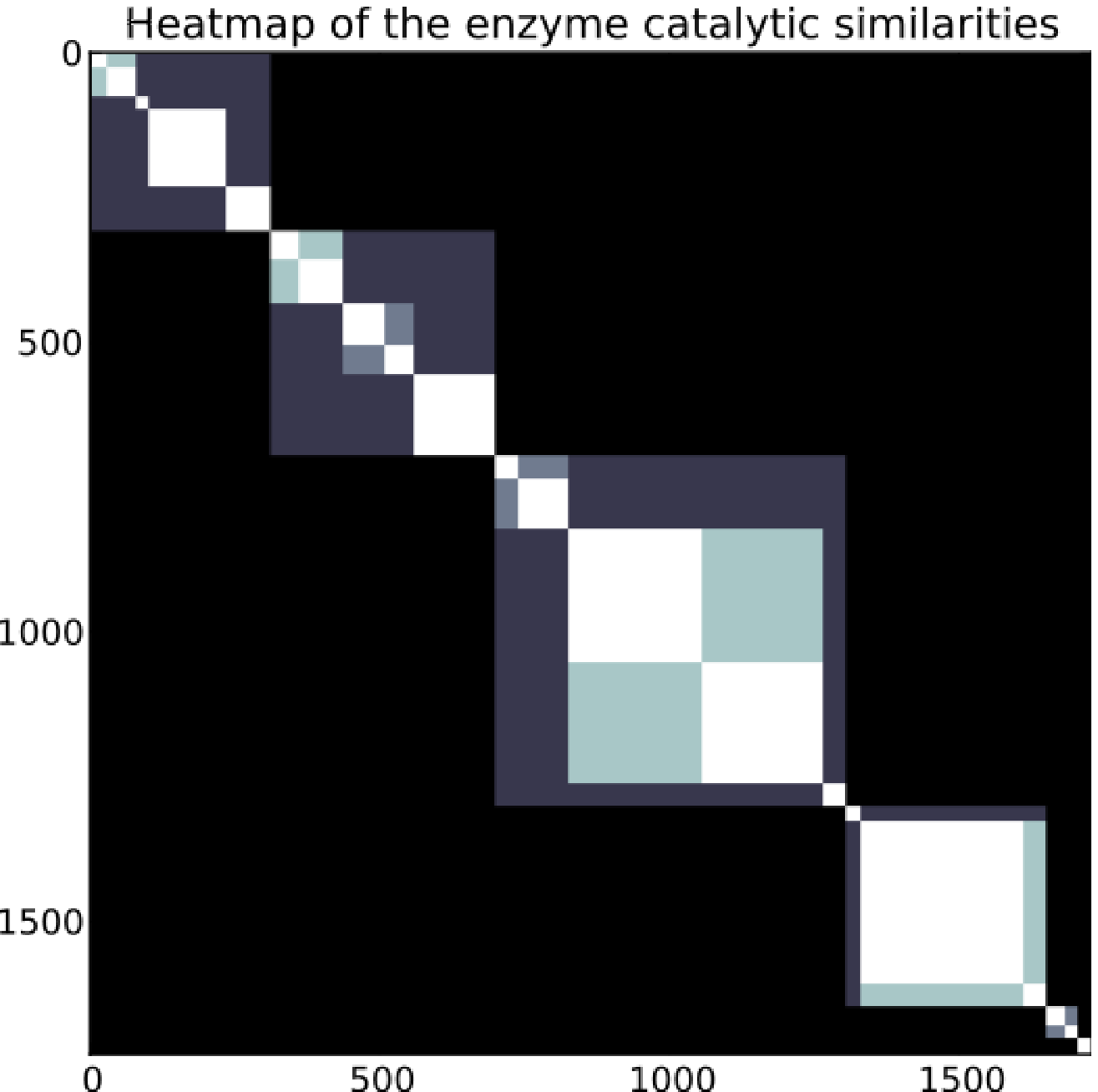}
\caption{Heatmaps of the values used for ranking the database during one fold in the testing phase. Each row of the heat map corresponds to one query. The corresponding ground truth is given in the lower right picture. The supervised model is trained by optimizing the pairwise ranking loss.}
\label{test_HM}
\end{figure}

A third reason for improvement by the supervised ranking method can be found in
the exploitation of dependencies between different similarity values. Roughly speaking, if one is interested in the similarity between enzymes $v$ and $u$, one can try to compute the similarity in a direct way, or derive it from the similarity with a third enzyme $z$. In the context of inferring protein-protein interaction and signal transduction networks, both methods are known as the direct and indirect approach, respectively~\citep{Vert2007,Geurts2007}. We argue that unsupervised ranking boils down to a direct approach, while supervised ranking should be interpreted as indirect. Especially when the kernel matrix contains noisy values, one can expect that the indirect approach allows for detecting the \emph{back bone} entries and correcting the noisy ones.

The results for the two supervised conditional ranking approaches are in many cases similar, with both models having same predictive performance on two of the kernels (fp and wfp). For one of the kernels (lpcs) ranking loss gives much better performance than the regression one, for another kernel (cb) ranking loss has a slight advantage, and in the remaining experiment (mcs) the regression approach performs slightly better. An appropriate choice of the node-level kernel proves to be much more important than the choice of the loss function, as the supervised models trained using the wfp kernel clearly outperform all other approaches.

\subsection{Runtime performance}

\begin{figure*}
\begin{center}
\includegraphics[width=0.6\linewidth]{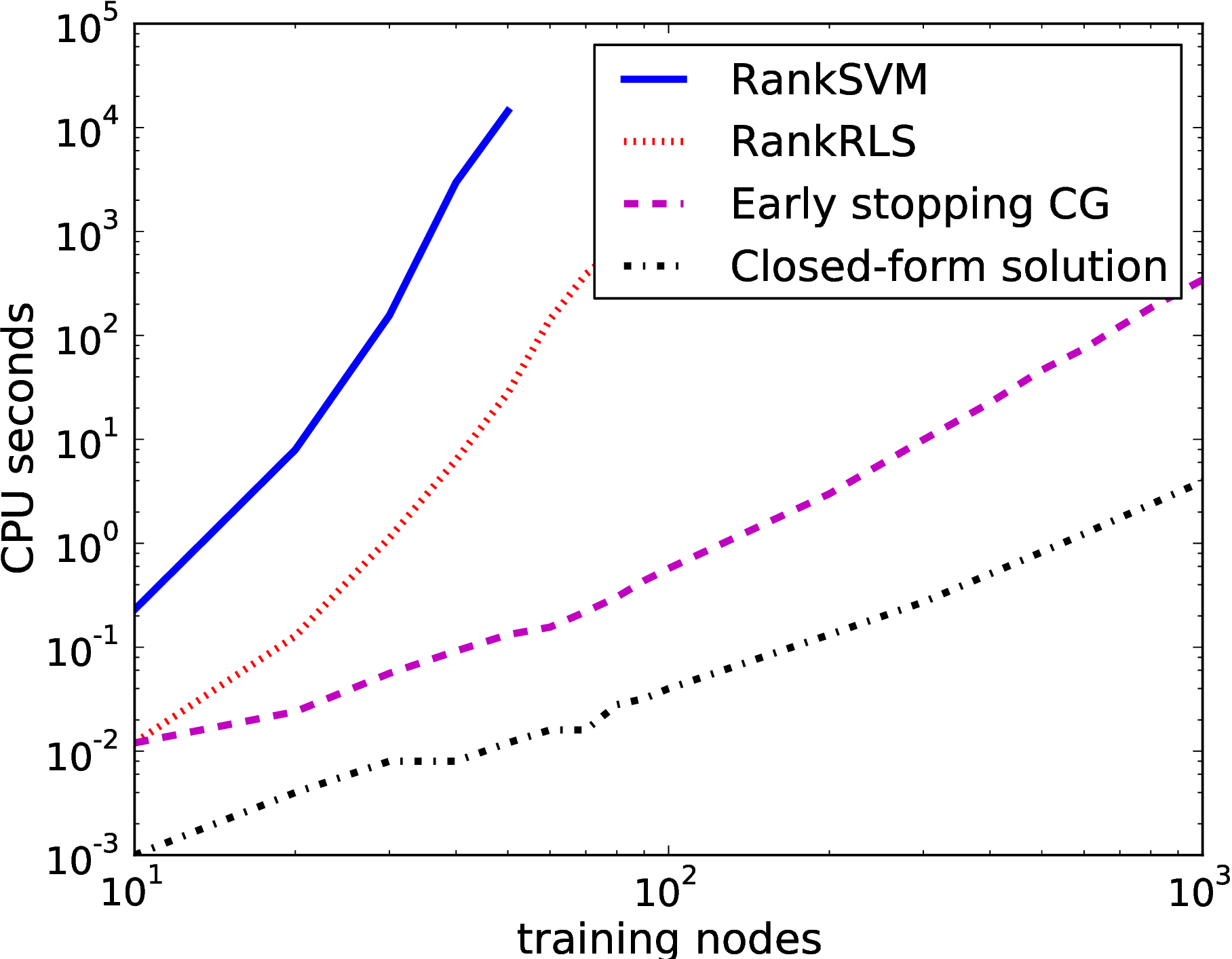}
\includegraphics[width=0.6\linewidth]{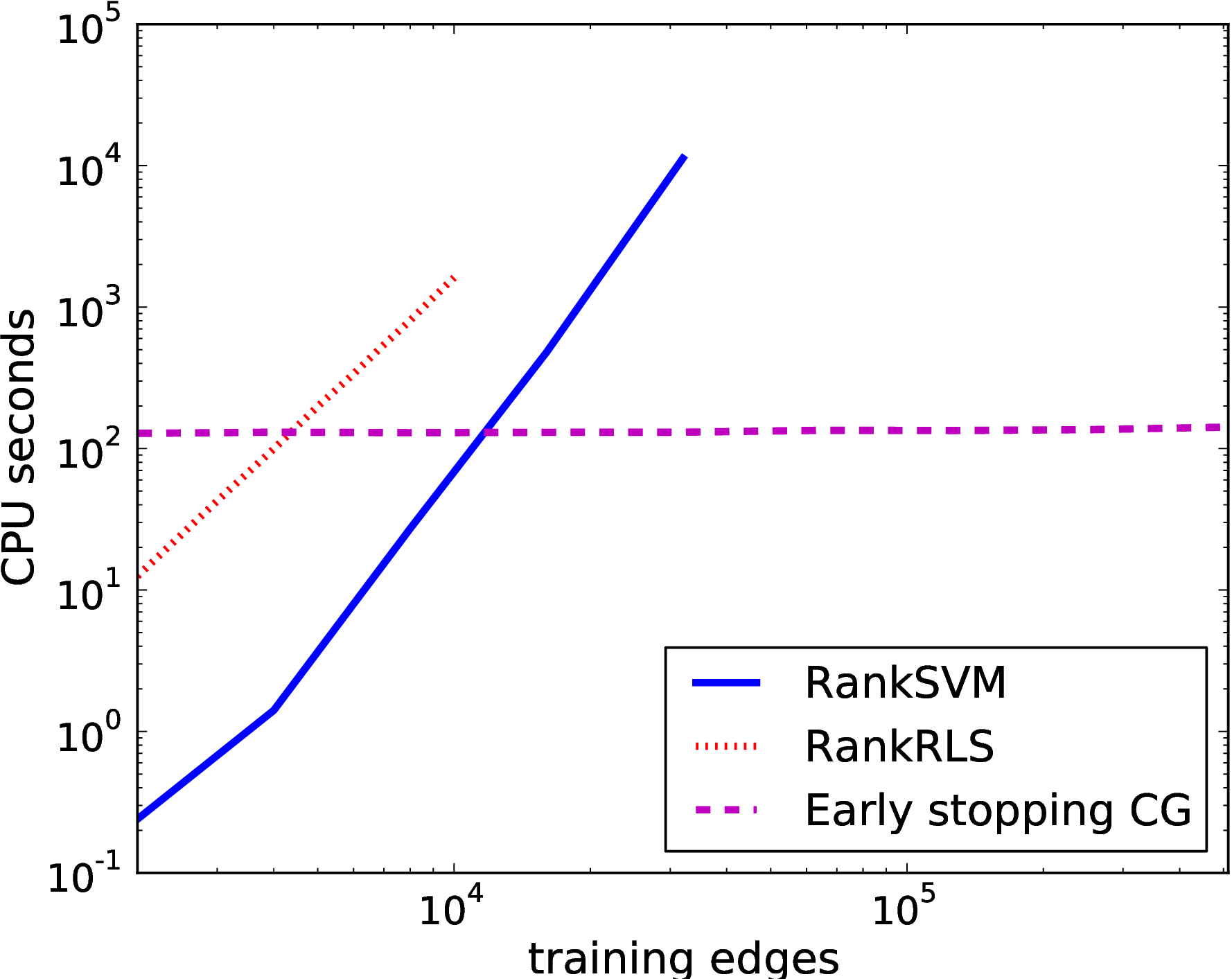}
\includegraphics[width=0.6\linewidth]{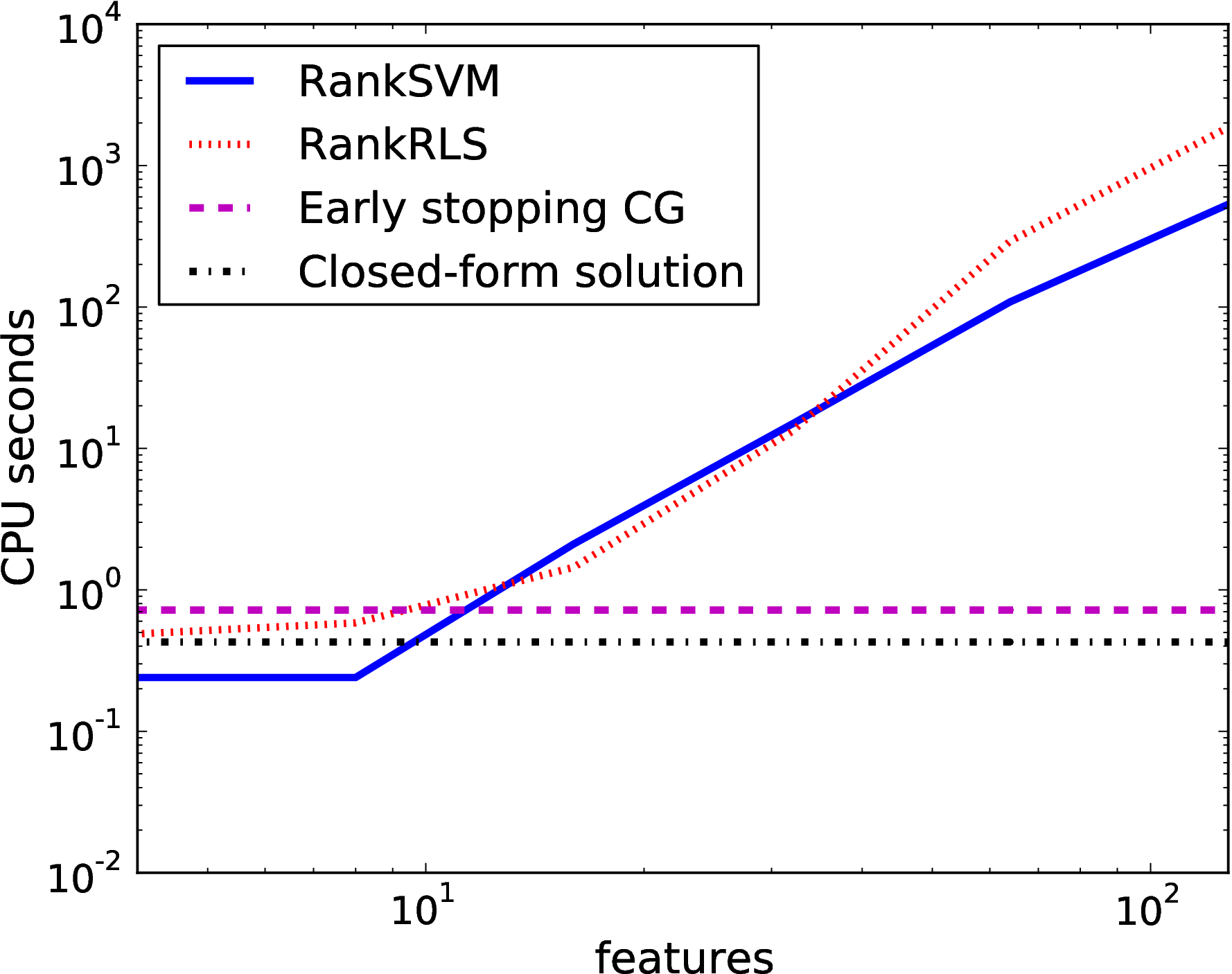}
\end{center}
\caption{Scalability experiments for training different algorithms on
a sample of the 20-Newsgroups data. We consider both a fully connected
training graph with varying amount of nodes (top) as well as a graph with 1000
nodes and varying degrees of sparsity (middle). Finally, we consider linear
solvers with a fully connected graph of 100 nodes and varying number of features
(bottom)}
\label{fig:scaling}
\end{figure*}

In the runtime experiment we compare the computational efficiency of the conditional ranking approaches considered in Section~\ref{algorithmsection}. We consider conjugate gradient training with early stopping and the closed-form solution, as well as two off-the-shelf ranking algorithms trained directly on the edges, namely RankRLS and RankSVM. For kernel RankSVM, we use the $\mathrm{SVM}^{light}$ package, implementing the Kronecker product kernel in the $kernel.h$ file. The linear RankSVM experiments are run using the $\mathrm{SVM}^{rank}$ package. The other methods are implemented in the RLScore software. All experiments are run on a single core with an Intel Core~i7-3770 processor, with $16$~GB memory available. The experiments are performed with regularization parameter $\lambda=1$, and a limit of $200$ iterations for the conjugate gradient method.

In the first two experiments, we consider the efficiency of the different
kernelized solvers on samples of the Reuters data. First, we measure the scalability
of the methods on a fully connected graph with a varying number of nodes.
Second, we fix the number of nodes to $1000$ and vary the number of edges.

The results are presented in Figure~\ref{fig:scaling} (top and middle). First,
let us consider the scaling behaviour on a fully connected graph. The kernel RankRLS
solver has cubic time complexity, training it on all the edges in the fully
connected training graph thus results in $O(\nodecount^6)$ time complexity. It can be observed that in
practice the approach does not scale beyond tens of nodes (thousands
of edges), meaning that it cannot be applied beyond small toy problems. The
RankSVM solver has even worse scaling. In contrast, the iterative training algorithm
(Early stopping CG) and the closed-form solution allow efficient scaling to
graphs with thousands of nodes, and hence millions of edges. While the iterative
training method and the closed-form share the same $O(\nodecount^3)$ asymptotic
behaviour, the closed-form solution allows an order of magnitude faster training, making
it the method of choice whenever applicable.

Next, we consider training the algorithms on sparse graphs. When the graph is
very sparse, meaning that there are only on average around ten or less edges for
each node, the RankSVM solver is the most efficient method to use. Once the
graph becomes denser, using the Kronecker product shortcuts becomes necessary.
Beyond $32000$ edges only the Early stopping CG method, whose iteration cost
does not depend on the number of edges, proves feasible to use.

Finally, we performed an experiment where we compare the proposed algorithms to
existing linear solvers, using low-dimensional data and the linear kernel. We
sampled $100$ data points from the UCI repository USPS data set, and
generated a fully connected label graph by assigning label $1$ to data point
pairs belonging to the same, and $0$ to pairs belonging to different classes. We
vary the dimensionality of the data by sampling the features. The linear solvers
are trained on explicitly computed Kronecker product features. As shown in
Figure~\ref{fig:scaling} (bottom), the RankSVM and RankRLS solvers are
feasible to use and even competitive if the number of features is very low
(e.g. 10 or less), as in this case the number of generated product features is
also low enough to allow for efficient training. As the number of features grows,
using basic RankSVM or RankRLS, however, becomes first inefficient and then
infeasible, we did not perform experiments for more than $128$ features since
soon after this point the data matrix no longer fits into memory. We also
performed experiments with $1000$ nodes, in this case linear RankRLS and RankSVM
did not scale beyond $20$ features.

The results further demonstrate our claims about the
scalability of the proposed algorithms to large dense graphs. Even with a
non-optimized high-level programming language implementation (Python), one can
handle training a kernel solver on million edges in a matter of minutes. On
very sparse graphs, or when applying linear models with low-dimensional data using
existing solvers may also prove feasible.

\section{Conclusion}

We presented a general framework for conditional ranking from
various types of relational data, where rankings can be conditioned on unseen
objects. We proposed efficient least-squares algorithms for optimizing
regression and  ranking-based loss functions, and presented generalization
bounds motivating the advantages of using the ranking based loss.
Symmetric or reciprocal relations can be treated as two important
special cases of the framework, we prove that such prior knowledge
can be enforced without having to sacrifice computational
efficiency. Experimental results on both synthetic and real-world
datasets confirm that the conditional ranking problem can be solved
efficiently and accurately, and that optimizing a ranking-based loss can be
beneficial, instead of aiming to predict the underlying relations directly.
Moreover, we also showed empirically that incorporating domain knowledge about
the underlying relations can boost the predictive performance.

Briefly summarized, we have discussed the following three approaches for solving conditional ranking problems:
\begin{itemize*}
\item off-the-shelf ranking algorithms can be used when they can be computationally afforded, i.e., when the number of edges in the training set is small;
\item the above-presented approach based on the conjugate gradient method with early stopping and taking advantage of the special matrix structures is recommended when using off-the-shelf methods becomes intractable;
\item the closed-form solution presented in Proposition~\ref{closedformprop} is
recommended in case the training graph is fully connected, since its
computational complexity is equivalent to that of a single iteration of the conjugate gradient method.
\end{itemize*}
Both the computational complexity analysis and the scalability experiments
demonstrate, that the introduced algorithms allow solving orders of magnitudes
larger conditional ranking problems than was previously possible with
existing ranking algorithms.

\section*{Acknowledgments}

We would like to thank the anonymous reviewers for their insightful comments.
T.P. and A.A. are both supported for this work by the Academy of Finland (grant 134020 and 128061, respectively). W.W. is supported by the Research Foundation of Flanders. A preliminary version of this work was presented at the European Conference on Machine Learning in 2010 \citep{pahikkala2010conditional}.

\bibliographystyle{abbrvnat}
\bibliography{myBibliography_reduced}
\end{document}